\documentclass[10pt,journal,compsoc]{IEEEtran}

\usepackage{times}
\usepackage{epsfig}
\usepackage{graphicx}
\usepackage{amsmath}
\usepackage{amssymb}
\usepackage{amsthm}
\usepackage{bm}
\usepackage{subfig}
\usepackage{capt-of}
\usepackage{booktabs}
\usepackage[table]{xcolor}
\usepackage{mathtools}
\usepackage{siunitx}
\usepackage[figurename=Figure]{caption}
\usepackage[hyphens,spaces]{url}

\usepackage{adjustbox}
\usepackage{bbm}
\usepackage{xspace}
\usepackage{multirow}
\usepackage{hyperref}
\usepackage[flushleft]{threeparttable}
\usepackage{graphics}
\usepackage{subfig}
\usepackage{scrextend}
\usepackage{float}

\def\REVISION#1{\textcolor[RGB]{0,0,0}{#1}}

\newtheorem{theorem}{Theorem}[section]

\newtheorem{corollary}[theorem]{Corollary}
\newtheorem{proposition}[theorem]{Proposition}

\newtheorem{remark}[theorem]{Remark}

\newcommand{\etal}{\textit{et al}. }
\newcommand{\ie}{\textit{i}.\textit{e}., }
\newcommand{\eg}{\textit{e}.\textit{g}., }

\newcommand{\fig}{{Figure}\@\xspace}
\newcommand{\tab}{{Table}\@\xspace}
\newcommand{\eqn}{{Eq.}\@\xspace}

\newcolumntype{L}[1]{>{\raggedright\arraybackslash}p{#1}}
\newcolumntype{C}[1]{>{\centering\arraybackslash}p{#1}}
\newcolumntype{R}[1]{>{\raggedleft\arraybackslash}p{#1}}

\ifCLASSOPTIONcompsoc
  \usepackage[nocompress]{cite}
\else
  \usepackage{cite}
\fi

\hyphenation{op-tical net-works semi-conduc-tor}

\begin{document}

\title{Direction Concentration Learning: \\Enhancing Congruency in Machine Learning}

\author{Yan~Luo,
        Yongkang~Wong,~\IEEEmembership{Member,~IEEE,}
        Mohan~Kankanhalli,~\IEEEmembership{Fellow,~IEEE,}
        and~Qi~Zhao,~\IEEEmembership{Member,~IEEE}
\IEEEcompsocitemizethanks
			{
			\IEEEcompsocthanksitem Y. Luo and Q. Zhao are with the Department of Computer Science and Engineering, University of Minnesota, Minneapolis, MN, 55455.\protect\\
			E-mail: luoxx648@umn.edu \& qzhao@cs.umn.edu
			\IEEEcompsocthanksitem Y. Wong and M. Kankanhalli are with the School of Computing, National University of Singapore, Singapore, 117417.
			E-mail: yongkang.wong@nus.edu.sg \& mohan@comp.nus.edu.sg
			}
}

\markboth{IEEE TRANSACTIONS ON PATTERN ANALYSIS AND MACHINE INTELLIGENCE,~VOL.~xx, NO.~x, August~20xx}%
{Shell \MakeLowercase{\textit{et al.}}: Bare Demo of IEEEtran.cls for Computer Society Journals}

\IEEEtitleabstractindextext{%
\begin{abstract}


One of the well-known challenges in computer vision tasks is the visual diversity of images, which could result in an agreement or disagreement between the learned knowledge and the visual content exhibited by the current observation. In this work, we first define such an agreement in a concepts learning process as congruency. Formally, given a particular task and sufficiently large dataset, the congruency issue occurs in the learning process whereby the task-specific semantics in the training data are highly varying. We propose a Direction Concentration Learning (DCL) method to improve congruency in the learning process, where enhancing congruency influences the convergence path to be less circuitous. The experimental results show that the proposed DCL method generalizes to state-of-the-art models and optimizers, as well as improves the performances of saliency prediction task, continual learning task, and classification task. Moreover, it helps mitigate the catastrophic forgetting problem in the continual learning task. The code is publicly available at \url{https://github.com/luoyan407/congruency}.

\end{abstract}

\begin{IEEEkeywords}
Optimization, Machine Learning, Computer Vision, Accumulated Gradient, Congruency.
\end{IEEEkeywords}}

\maketitle

\IEEEdisplaynontitleabstractindextext

\IEEEpeerreviewmaketitle

\IEEEraisesectionheading{\section{Introduction}}
\label{sec:introduction}

\IEEEPARstart{D}{eep}
learning has been receiving considerable attention due to its success in various computer vision tasks~\cite{Krizhevsky_NIPS_2012,He_CVPR_2016,Huang_CVPR_2017,Chang_PAMI_2017} and challenges~\cite{Deng_CVPR_2009,Lin_ECCV_2014}. To prevent model overfitting and enhance the generalization ability, a training process often sequentially updates the model with gradients w.r.t. a mini-batch of training samples, as opposed to using a larger batch~\cite{Goyal_arXiv_2017}. Due to the complexity and diversity in the nature of image data and task-specific semantics, the discrepancy between current and previous observed mini-batches could result in a circuitous convergence path, which possibly hinders the convergence to a local minimum.

To better understand the circuitousness/straightforwardness in a learning process, we introduce {\it congruency} to quantify the agreement between new information used for an update and the knowledge learned from previous iterations. The word ``congruency'' is borrowed from a psychology study~\cite{Underwood_QJEP_2006} that inspects the influence of an object which is inconsistent with the scene in the visual attention perception task. In this work, we define congruency $\nu$ as the cosine similarity between the gradient $g$ to be used for update and a referential gradient $\hat{g}$ that indicates a general descent direction resulting from previous updates, \ie
\begin{align}
\nu = \cos \alpha(g, \hat{g}), 
\label{eqn:cong_def}
\end{align}
(The detailed formulation is presented in Section~\ref{sec:problem}). \fig~\ref{fig:concept} presents an illustration of congruency in the saliency prediction task. Due to similar scene (\ie dining) and similar fixations on faces and foods, the update of sample {\small $S_{2}$} (\ie {\small $\Delta w_{S_2}$}) is congruent with {\small $\Delta w_{S_1}$}. In contrast, the scene and fixations in sample {\small $S_{3}$} are different from sample {\small $S_{1}$} and {\small $S_{2}$}. This leads to a large angle ($> \ang{90}$) between {\small $\Delta w_{S_3}$} and {\small $\Delta w_{S_2}$} (or {\small $\Delta w_{S_1}$)}.

\begin{figure}[!t]
	\centering
	\includegraphics[width=1.0\linewidth]{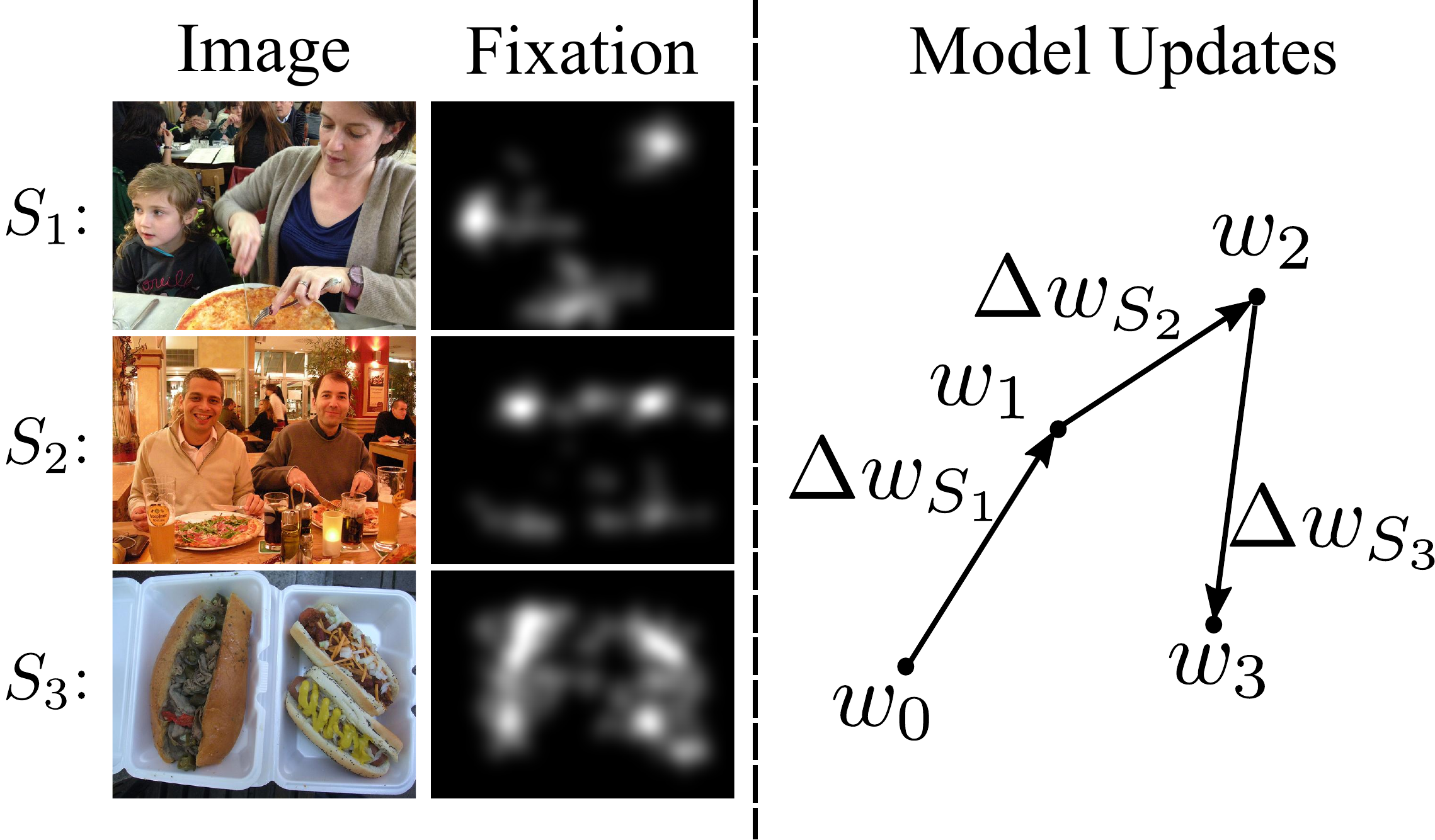}
	\caption{An illustration of congruency in the saliency prediction task. Assuming training samples are provided in a sequential manner, an incongruency occurs since the food item is related to different saliency values across these samples. Here, {\small $S_j$} stands for sample {\small $j = \{1,2,3\}$}, {\small $w_{i}$} is the weight at time step $i$, {\small $\Delta w_{S_j}$} is the weight update generated with {\small $S_j$} for {\small $w_{i}$}, and the arrows indicate updates for the model. Specifically, {\small $\Delta w_{S_j}=-\eta g_{S_{j}}$} where $\eta$ is the learning rate and {\small $g_{S_{j}}$} is the gradient w.r.t. {\small $S_{j}$}. The update of {\small $S_2$} (\ie {\small $\Delta w_{S_2}$}) is congruent with {\small $\Delta w_{S_1}$}, whereas {\small $\Delta w_{S_3}$} is incongruent with {\small $\Delta w_{S_1}$} and {\small $\Delta w_{S_2}$}.}
	\label{fig:concept}
\end{figure}

Congruency reflects the diversity of task-specific semantics in training samples (\ie images and the corresponding ground-truths). In the visual attention task, attention is explained by various hypotheses~\cite{Bruce_JOV_2007,Carpenter_CVIU_1998,Treisman_CP_1980} and can be affected by many factors, such as bottom-up feature, top-down feature guidance, scene structure, and meaning \cite{Wolfe_NHB_2017}. As a result, objects in the same category may exhibit disagreements with each other in various images in terms of attracting attention. Therefore, there is a high variability in the mapping between visual appearance and the corresponding fixations. Another task that has a considerable amount of diversity is continual learning, which is able to learn continually from a stream of data that is related to new concepts (\ie unseen labels)~\cite{Lopez_NIPS_2017}. The diversity of the data among multiple classification subtasks may be so much discrepant such that learning from new data violates previously established knowledge (\ie catastrophic forgetting) in the learning process. Moreover, congruency can also be found in the classification task. Compared to saliency prediction and continual learning, the source of diversity in classification task is relatively simple, namely, diverse visual appearances w.r.t. various labels in the real-world images. In summary, saliency prediction, continual learning, and classification are challenging scenarios susceptible to the effects of congruency.

In machine learning, congruency can be considered as a factor that influences the convergence of optimization methods, such as stochastic gradient descent (SGD) \cite{Robbins_AMS_1951}, RMSProp \cite{Hinton_CO_2012}, or Adam \cite{Kingma_arXiv_2014}. Without specific rectification, the diversity among training samples is implicitly and passively involved in a learning process and affects the descent direction in convergence. To understand the effects of congruency on convergence, we explicitly formulate a direction concentration learning (DCL) method by sensing and restricting the angle of deviation between an update gradient and a referential gradient that indicates the descent direction according to the previous updates. 
Inspired by Nesterov's accelerated gradient \cite{Nesterov_DAN_1983}, we consider the accumulated gradient as the referential gradient in the proposed DCL method.

We comprehensively evaluate the proposed DCL method with various models and optimizers in saliency prediction, continual learning, and classification tasks. The experimental results show that the constraints restricting the angle deviation between the gradient for an update and the accumulated referential gradient can help the learning process to converge efficiently, comparing to the approaches without such constraints. Furthermore, we present the congruency patterns to show how the task-specific semantics affect congruency in a learning process. Last but not least, our analysis shows that enhancing congruency in continual learning can improve backward transfer.

The main contributions in this work are as follows:
\begin{itemize}
	\item We define {\it congruency} to quantify the agreement between new information and the learned knowledge in a learning process, which is useful to understand the model convergence in terms of tractability.
	\item We propose a direction concentration learning (DCL) method to enhance congruency so that the disagreement between new information and the learned knowledge can be alleviated.
	It also generally adapts to various optimizers (\eg SGD, RMSProp and Adam) and various tasks (\eg saliency prediction, continual learning and classification).
	\item The experimental results from continual learning task demonstrate that enhancing congruency can improve backward transfer. Note that large negative backward transfer is known as catastrophic forgetting~\cite{Lopez_NIPS_2017}.
	\item A general method analyzing congruency is presented and it can be used within both conventional models and models with the proposed DCL method. Comprehensive analyses w.r.t saliency prediction and classification show that our DCL method generally enhances the congruencies of the corresponding learning processes.
\end{itemize}

The rest of the paper is organized as follows. 
We begin by highlighting related works in Section~\ref{sec:related}. Then, we formulate the problem of congruency and discuss its factors in Section~\ref{sec:problem}. The proposed DCL method is introduced in Section~\ref{sec:method}. Moreover, the experiments and analyses are provided in Section~\ref{sec:experiment} and \ref{sec:aly}, respectively. Section~\ref{sec:conclusion} concludes the paper.

\section{Related Works}
\label{sec:related}

\subsection{State-of-the-art Models for Classification}
Convolutional networks (ConvNets) \cite{Krizhevsky_NIPS_2012,Huang_CVPR_2017,He_CVPR_2016,Xie_CVPR_2017} have exhibited their powers in the classification task. AlexNet \cite{Krizhevsky_NIPS_2012} is a typical ConvNet and consists of a series of convolutional, pooling, activation, and fully-connected layers, it achieves the best performance on ILSVRC 2012 \cite{Deng_CVPR_2009}. Since then, there are more and more attempts to delve into the architecture of ConvNets. He \etal proposed residual blocks to solve the vanishing gradient problem and the resulting model, \ie ResNet \cite{He_CVPR_2016}, achieves best performance on ILSVRC 2015. Along with a similar line of ResNet, ResNeXt \cite{Xie_CVPR_2017} is proposed to extend residual blocks to multi-branch architecture and DenseNet \cite{Huang_CVPR_2017} is devised to establish the connections between each layer and later layers in a feed-forward fashion. Both models achieve desirable performance. 
\REVISION{Recently, Tan and Le \cite{Tan_ICML_2019} study how network depth, width, and resolution influence the classification performance and propose EfficientNet that achieves state-of-the-art performance on ImageNet.}
In this work, we use ResNet, ResNeXt, DenseNet, and EfficientNet in the image classification experiments.
 
Yang \etal \cite{Yang_TMM_2012} introduce a regularized feature selection framework for multi-task classification. Specifically, the trace norm of a low rank matrix is used in the objective function to share common knowledge across multiple classification tasks. Congruency generally works with gradient based optimization methods, whereas trace norm works with a specific optimization method. Moreover, congruency measures the agreement (or disagreement) between new information learned from a sample and the established knowledge, whereas trace norm is based on the weights of multiple classifiers and only measures the correlation between established knowledge w.r.t. different classification tasks.
 
\subsection{Computational Modelling of Visual Attention}
Saliency prediction is an attentional mechanism that focuses limited perceptual and cognitive resources on the most pertinent subset of the available sensory data. Itti~\etal~\cite{Itti_PAMI_1998} implement the first computational model to predict saliency maps by integrating bottom-up features. Recently, Huang~\etal~\cite{Huang_ICCV_2015} propose a data-driven DNN model, named SALICON, to model visual attention. 
Cornia~\etal \cite{Cornia_TIP_2016} propose a convolutional LSTM to iteratively refine the predictions and Kummerer~\etal~\cite{Kummerer_ICCV_2017} design a readout network that is fed with the output features of VGG \cite{Simonyan_arXiv_2014} to improve saliency prediction. 
\REVISION{Yang \etal \cite{Yang_arXiv_2019} introduce an end-to-end Dilated Inception Network (DINet) to capture multi-scale contextual features for saliency prediction and achieves state-of-the-art performance. In this work, we adopt the SALICON model and DINet in the saliency prediction experiments.}

There are several insightful works~\cite{Underwood_QJEP_2006,Gordon_JEP_2004,Underwood_EM_2007} exploring the effects of congruency/incongruency in visual attention. In particular, according to the perception experiments, Gordon finds that the object which is inconsistent with the scene, \eg a live chicken standing on a kitchen table, has significant influence on attentive allocation~\cite{Gordon_JEP_2004}. Underwood and Foulsham~\cite{Underwood_QJEP_2006} find an unexpected interaction between saliency and negative congruency in the search task, that is, the congruency of the conspicuous object does not influence the delay in its fixation, but it is fixated earlier when the other object in the scene is incongruent. Furthermore, Underwood~\etal~\cite{Underwood_EM_2007} investigate whether the effects of semantic inconsistency appear in free viewing. In their studies, inconsistent objects were fixated for significantly longer duration than consistent objects. These works inspire us to explore the congruency between the current and previous updates. In saliency prediction, negative congruency may result from the disagreement among the training samples in terms of visual appearance and ground-truth.

\subsection{Catastrophic Forgetting}
Catastrophic forgetting problem has been extensively studied in \cite{Mccloskey_PLM_1989,Ratcliff_PR_1990,French_TCS_1999,Goodfellow_arXiv_2013}. McCloskey and Cohen \cite{Mccloskey_PLM_1989} study the problem that new learning may interfere catastrophically with old learning when models are trained sequentially. New learning may alter weights that are involved in representing old learning, and this may lead to catastrophic interference. Along the same line, Ratcliff~\cite{Ratcliff_PR_1990} further investigates the causes of catastrophic forgetting, and two problems are observed: 1) sequential learning is prone to rapidly forget well-learned information as new information is learned; 2)  discrimination between observed samples and unobserved samples either decreases or is non-monotonic as a function of learning. 
To address the catastrophic forgetting problem, there are several works~\cite{Rebuffi_CVPR_2017,Kirkpatrick_PNAS_2017,Lopez_NIPS_2017} proposed to solve the problem by using episodic memory. Kirkpatrick \etal \cite{Kirkpatrick_PNAS_2017} propose an algorithm named elastic weight consolidation (EWC), which can adjust learning to minimize changes in parameters important for previously seen task. Moreover, Lopez and Ranzato \cite{Lopez_NIPS_2017} introduce the gradient episodic memory (GEM) method to alleviate catastrophic forgetting problem. However, there could exist incongruency in the training process of GEM.
\section{Congruency in Machine Learning}
\label{sec:problem}

\subsection{Problem Statement}

We first review the general goal in machine learning. Without loss of generality, given a training set {\small $D=\{(I_{i},y_{i})\}^{n}_{i=1}$}, where a pair {\small $(I_{i},y_{i})$} represents a training sample composed of an image {\small $I_{i} \in \mathbb{R}^{N_{I}}$} ($N_{I}$ is the dimension of images) and the corresponding ground-truth {\small $y_{i}\in \mathcal{Y}$}, the goal is to learn a model {\small $f: \mathbb{R}^{N_{I}} \xrightarrow[]{} \mathcal{Y}$}. Specifically, a Deep Neural Network (DNN) model has a trunk net to generate discriminative features {\small $x_{i} \in \mathcal{X}$} and a classifier {\small $f_{w}: \mathcal{X} \xrightarrow[]{w} \mathcal{Y}$} to fulfill the task, where $w$ is the weights of classifier. Note that we consider that DNN is a classifier as whole and the input is raw RGB images.

To accomplish the learning process, the conventional approach is to first specify and initialize a model. 
Next, the empirical risk minimization (ERM) principle~\cite{Vapnik_TNN_1999} is employed to find a desirable $w$ w.r.t. $f$ by minimizing a loss function {\small $\ell: \mathcal{Y}\times \mathcal{Y}\rightarrow [0,\infty)$} penalizing prediction errors, 
\ie {\small $\underset{w}{\text{minimize}}\ \frac{1}{|D|}\sum_{(x_{i},y_{i})\in D}^{}\ell(f_{w}(x_{i}),y_{i})$}. 
At time step $k$, the gradient computed by the loss is used to update the model, \ie {\small $w_{k+1} \coloneqq w_{k} + \Delta w_{k}$}, where {\small $\Delta w_{k}$} is an update as well as a function of gradient {\small $g(w_{k};x_{k},y_{k}) = \nabla_{w_{k}}\ell (f_{w_{k}}(x_{k}),y_{k})$}. Optimizers, such as SGD~\cite{Robbins_AMS_1951}, RMSProp~\cite{Hinton_CO_2012}, or Adam~\cite{Kingma_arXiv_2014}, determine {\small $\Delta w_{k}(g(w_{k};x_{k},y_{k}))$}. Without loss of generality, we assume the optimizer is SGD in the following for convenience.

There exist two challenges w.r.t. congruency for practical use. First, due to the dynamic nature of the learning process, 
how to find a stable referential direction which can quantify the agreement between current and previous updates. Second, how to guarantee the referential direction is beneficial to search for a local minimum.

As the gradient at a training step implies the direction towards a local minimum by the currently observed mini-batch, the accumulation of all previous gradients provides an overall direction towards a local minimum. Hence, it provides a good referential direction to measure the agreement between a specific update and its previous updates. We denote the accumulated gradient as
\begin{align}
\hat{g}_{k|w_{m}} = \sum_{i=m}^{k}g_{i},
\label{eqn:accm_grad}
\end{align}
where $w_{m}$ is the weights learned at time step $m$ and {\small $\hat{g}_{k|w_{m}}$} indicates that the accumulation starts from $w_{m}$ at time step k. If there is no explicit $w_{m}$ indicated, {\small $\hat{g}_{k} = \hat{g}_{k|w_{1}}$}. \fig~\ref{fig:illus} shows an example of accumulated gradient, 
where the gradient of $S_3$ deviates from the accumulated gradient of $S_1$ and $S_2$.
This also elicits our solution to measure congruency in a training process.

\begin{figure*}[t!]
	\centering
	\includegraphics[width=1.0\textwidth]{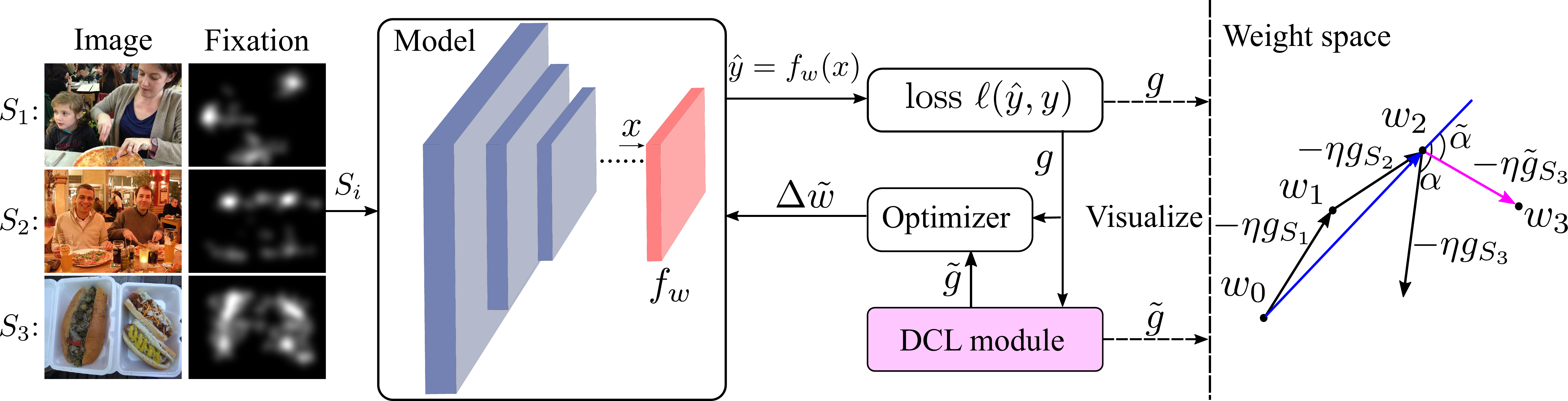} 
	\caption{An illustration of model training with the proposed DCL module. Here, 3 samples are observed in a sequential manner. The gradient generated by $S_3$ is expected to be different with the gradients generated by $S_1$ and $S_2$. Hence, to tackle the expected violation between the update $-\eta g_{S_3}$ and the accumulated update $-\eta \sum_{i=1}^{2}g_{S_i}$, the proposed DCL method finds a corrected update $-\eta \tilde{g}_{S_3}$ (the pink arrow) by solving a quadratic programming problem (\ref{eqn:obj}). In this way, the angle between $-\eta \tilde{g}_{S_3}$ and $-\eta \sum_{i=1}^{2}g_{S_i}$ (the blue arrow), \ie $\tilde{\alpha}$, is guaranteed to be equal to or less than $\alpha$. Note that the gradient descent processes with or without the proposed DCL module is identical in the test phase. }
	\label{fig:illus}
\end{figure*}

\subsection{Definition} \textit{Congruency} $\nu$ is a metric to measure the agreement of updates in a training process. In general, it is directly related to an angle between the gradient for an update and the accumulated gradient, \ie $\alpha(\hat{g}_{k-1|w_{m}},g_{k})\in [ 0,\pi ]$. Smaller angle indicates higher congruency.
Practically, we use cosine similarity to approximate the angle for computational simplicity. Mathematically, at time step $k$, $\nu_{k}$ can be defined as follows
\begin{align}
\nu_{k|w_{m}} = \cos \alpha(g_{k},\hat{g}_{k-1|w_{m}}) = \frac{\hat{g}_{k-1|w_{m}}^{\top} g_{k}}{\|\hat{g}_{k-1|w_{m}}\| \|g_{k}\|},\ m \le k
\label{eqn:cong_k}
\end{align}
where $w_{m}$ is the weight learned at time step $m$ and taken as a reference point in weight space.
$\alpha(\hat{g}_{k},g_{k})$ is the angle between $\hat{g}_{k}$ and $g_{k}$. Based on $\nu_{k-1|w_{m}}$, the congruency of a training process that starts from $w_{j}$ to learn out $w_{n}$ can be defined as
\begin{align}
\nu_{ w_{j}\rightarrow w_{n}|w_{m} } = \frac{1}{n-j+1}\sum_{i=j}^{n} \nu_{i|w_{m}},\ m \le j < n
\label{eqn:cong}
\end{align}
Since the concept of congruency is built upon cosine similarity, $\nu_{k|w_{m}}$ will range from $[-1,1]$. Another advantage of using cosine similarity is the tractability. The gradient computed from the loss is considered as a vector in weight space. Hence, cosine similarity can take any pair of gradients, such as the accumulated gradient and the gradient computed by a training sample, or two gradients computed by two respective training samples. 

\subsection{Task-specific Factors}
Congruency is semantics-aware. As congruency is based on the gradients which are computed with the images and the semantic ground-truth, such as class label in the classification task or human fixation in the saliency prediction task. Therefore, congruency reflects the task-specific semantics. We discuss congruency task-by-task in the following subsection.

\noindent\textbf{Saliency Prediction}. 
Visual attention is attracted to visually salient stimuli and is affected by many factors, such as scale, spatial bias, context and scene composition, oculomotor constraints, and so on. These factors result in high variabilities over fixations across various persons. The variabilities of visual semantics imply that same class objects in two images may have different salience levels, \ie one object is predicted as salient object while the other same class object is not. In this sense, negative congruency in learning for saliency prediction may result from both feature-level and label-label disagreement across the images.

\noindent\textbf{Continual Learning}. In the continual learning setting \cite{Rebuffi_CVPR_2017,Kirkpatrick_PNAS_2017,Lopez_NIPS_2017}, a classification model is learned with past observed classes and samples. New samples w.r.t. the unobserved classes may be distinct from previously seen samples in terms of both visual appearance and label. This leads to negative congruency in learning.

\noindent\textbf{Classification}. For classification, the class labels are usually deterministic to human. The factors that cause negative congruency in learning lie in visual appearances. 
Due to the variability of real-world images, visual appearance of samples from the same class may be very different from each other in different images. 

\section{Methodology}
\label{sec:method}

In this section, we first overview the proposed DCL method. Then, we introduce its formulation and properties in detail. Finally, we discuss the lower bound of congruency with gradient descent methods. For simplicity, we assume it is at time step $k$ and omit underscored $k$ in the following formulations unless we explicitly indicate it.

\subsection{Overview}
\fig~\ref{fig:illus} demonstrates the basic idea of the proposed DCL method. Given training sample $(I,y)$, where $I$ is an image and $y$ is the ground-truth, the corresponding feature $x$ are first generated by the sample before it is passed to the classifier for computing the predictions $\hat{y}=f_{w}(x)$. Conventionally, the derivatives $g$ of the loss $\ell(\hat{y},y)$ are computed to determine the update $\Delta w$ by an optimizer to back-propagate the error layer by layer. In the proposed DCL method, $g$ is taken to estimate a corrected gradient $\tilde{g}$ that is congruent with previous updates. For example, as shown in \fig~\ref{fig:illus}, the gradient of $S_3$ is expected to have a large deviation angle $\alpha$ to the accumulated anti-gradient {\small $-\sum_{i=1}^{2}g_{si}$} because $S_1$ and $S_2$ share similar visual appearance, but $S_3$ is different from them. The proposed DCL method aims to estimate a corrected $\tilde{g}$ which has a smaller deviation angle $\tilde{\alpha}$ to {\small $-\sum_{i=1}^{2}g_{S_i}$}.

\subsection{Direction Concentration Learning}
The core idea of the proposed DCL method is to concentrate the current update to a certain search direction. 
The accumulated gradient $\hat{g}$ is the direction voted by previous updates which provides information towards the possible local minimum. Ideally, according to the definition of congruency, \ie \eqn (\ref{eqn:cong_k}) and (\ref{eqn:cong}), cosine similarity should be considered in optimization. However, minimizing cosine similarity with constraints is complicated.
Therefore, similar to GEM \cite{Lopez_NIPS_2017}, we adopt an alternative that optimizes the inner product, instead of the cosine similarity. According to \eqn (\ref{eqn:cong_k}), {\small $\left<g_{1} ,g_{2}\right> \ge 0$} indicates that the angle between the two vectors is less than or equal to \ang{90}.

As shown in \fig~\ref{fig:illus}, the proposed DCL method uses the accumulated gradient as a referential direction to compute a corrected gradient $\tilde{g}$, \ie

\begin{align}
\begin{split}
\underset{\tilde{g}}{\text{minimize}} \ \ &\frac{1}{2} \| \tilde{g}-g \|_{2}^{2}\\
\text{s.t. } \ \ &\langle -\hat{g}_{r_{i}}, -\tilde{g} \rangle \ge 0, \ \ 1 \le i \le N_{r}
\end{split}
\label{eqn:obj}
\end{align}
where $r_{i}$ is a reference point in weight space, $\hat{g}_{r_{i}}$ is the accumulated gradient that starts the accumulation from $r_{i}$, and $N_{r}$ is the number of reference points. The accumulated gradient $\hat{g}_{r_{i}}$ indicates that the accumulation starts from the reference $r_{i}$ to the current weights $w$. The proposed DCL method can take $N_{r}$ points as the references $\{r_{i}| 1 \le i \le N_{r}\}$. Assume that the weights at time step $t$ is taken as the reference $r_{i}$, \ie $r_{i} = w_{t}$, we denote $sub(\cdot)$ as a function to find the index of a point in weight space. For example, with $t = sub(r_{i}) = sub(w_{t})$, we can compute the accumulated gradient $\hat{g}_{r_{i}} = \sum_{j=sub(r_{i})}^{} g_{j}$. On the other hand, the function $\frac{1}{2} \| \tilde{g}-g \|_{2}^{2}$ is widely used in gradient-based methods \cite{Lopez_NIPS_2017,Knight_arXiv_2018,Tao_PAMI_2009,Salimans_NIPS_2016,Hoi_ICML_2014} and forces $\tilde{g}$ to be close to $g$ in Euclidean space as much as possible. The constraints $\langle -\hat{g}_{r_{i}}, -\tilde{g} \rangle \ge 0$ are to guarantee that the gradient that is used for an update should not substantially deviate from the accumulated gradient.

In practice, instead of directly computing $\hat{g}_{r_{i}}$ by its definition (\ref{eqn:accm_grad})), 
we compute it by subtracting the current point $w$ with the reference point $r_{i}$, \ie $\hat{g}_{r_{i}} = w-r_{i} = -\eta \sum_{j=i}^{} g_{j}$. Hence, the constraints can be deformed in a matrix form
\begin{align}
\begin{split}
A(-\tilde{g}) = -1\times \begin{bmatrix}
(w-r_{1})^{\top} \\
(w-r_{2})^{\top} \\
\vdots \\
(w-r_{N_{r}})^{\top}
\end{bmatrix} \tilde{g} \ge 0
\end{split}
\label{eqn:constraints}
\end{align}
\fig \ref{fig:illu_grad} demonstrates the effect of constraints in optimization. The dashed line in the same color indicates the border of feasible region with regards to $-\hat{g}_{r_{i}}, i\in\{1,2\}$ as Constraint (\ref{eqn:constraints}) forces $\tilde{g}$ to have an angle smaller than \ang{90}. Due to two references in this example, the intersection between two feasible regions, \ie the shaded region, is the intersected feasible region for optimization. 
Note that an accumulated gradient determines half-plane (hyperplane) as feasible region, instead of the full plane (hyperplane) in conventional gradient descent case.

\begin{figure}[!t]
	\centering
	\includegraphics[width=0.76\columnwidth]{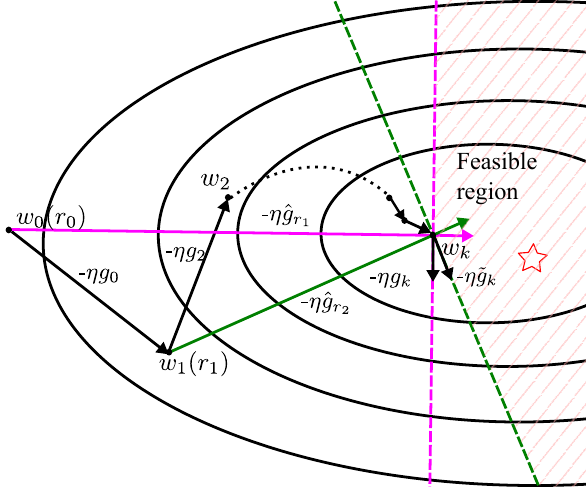}
	\caption{An illustration of DCL constraints with two reference points $r_{0} = w_{0}, r_{1} = w_{1}$. $\hat{g}_{r_{0}}$ is the pink arrow while $\hat{g}_{r_{1}}$ is the green one. The colored dashed line indicates the border of feasible region with regards to $-\hat{g}_{r_{i}}, i\in\{0,1\}$, since Constraint (\ref{eqn:constraints}) forces $-\eta\tilde{g}_{k}$ to have an angle which is smaller than or equal to \ang{90} w.r.t. $\hat{g}_{r_{0}}$ and $\hat{g}_{r_{1}}$.}
	\label{fig:illu_grad}
\end{figure}

The optimization (\ref{eqn:obj}) becomes a classic quadratic programming problem and we can easily solve it by off-the-shelf solvers like quadprog\footnote{\url{https://github.com/rmcgibbo/quadprog}} or CVXOPT\footnote{\url{https://cvxopt.org/}}. However, since the size of $\tilde{g}$ can be sufficiently large, straightforward solution may be computationally expensive in terms of both time and storage. As introduced by Dorn \cite{Dorn_QAM_1960}, we apply a primal-dual method for quadratic programs to solve it efficiently.

Given a general quadratic problem, it can be formulated as follows
\begin{equation}
\underset{z}{\text{minimize}}  \ \ \frac{1}{2} z^{\top} C z + q^{\top} z \ \ \ \text{s.t.} \ \ Bz\ge b,
\label{eqn:primal}
\end{equation}
whereas the corresponding dual problem to Problem (\ref{eqn:primal}) is 
\begin{align}
\begin{split}
\underset{u,v}{\text{minimize}} \ \ & \frac{1}{2} u^{\top} C u + b^{\top} v \\ 
\text{s.t.} \ \ & B^{\top}v-Cu = q, \ \  v \ge 0.
\end{split}
\label{eqn:dual}
\end{align}
Dorn provides the proof of the connection between Problem (\ref{eqn:primal}) and Problem (\ref{eqn:dual}).
\begin{theorem}[Duality]
	if $z=z^{*}$ is a solution to Problem (\ref{eqn:primal}) then a solution $(u, v)=(u^{*}, v^{*})$ exists to Problem (\ref{eqn:dual}). Conversely, if a solution $(u, v)=(u^{*}, v^{*})$ to Problem (\ref{eqn:dual}) exists then a solution which satisfies $Cz=Cu^{*}$ to Problem (\ref{eqn:primal}) also exists.
\end{theorem}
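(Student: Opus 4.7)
The plan is to prove both directions by reducing to the Karush--Kuhn--Tucker (KKT) conditions, which are necessary and sufficient for the two problems because (i) the objectives are convex quadratics (recall $C \succeq 0$ in the DCL application, where in fact $C = I$) and (ii) the constraints in both problems are affine, so no constraint qualification beyond linearity is required. The bridge between the two problems will be the weak-duality identity
\begin{equation*}
\Bigl(\tfrac12 z^{\top}Cz + q^{\top}z\Bigr) - \Bigl(-\tfrac12 u^{\top}Cu + b^{\top}v\Bigr) \;=\; \tfrac12 (z-u)^{\top} C (z-u) \;+\; v^{\top}(Bz - b),
\end{equation*}
which is nonnegative whenever $z$ is primal-feasible and $(u,v)$ is dual-feasible, and which vanishes exactly when $v^{\top}(Bz-b)=0$ and $(z-u)^{\top}C(z-u)=0$, i.e.\ $Cz=Cu$.

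For the forward direction, I would take a primal optimum $z^{*}$ and invoke KKT to produce multipliers $v^{*} \ge 0$ satisfying the stationarity condition $Cz^{*} + q - B^{\top}v^{*} = 0$, primal feasibility $Bz^{*} \ge b$, and complementary slackness $v^{*\top}(Bz^{*}-b)=0$. Setting $u^{*}:=z^{*}$, the stationarity condition rewrites as the dual equality constraint $B^{\top}v^{*} - Cu^{*} = q$, so $(u^{*},v^{*})$ is dual-feasible. The weak-duality identity above then shows that for any other dual-feasible $(u,v)$ the dual objective can only be no larger (in the appropriate sense) than the value attained at $(u^{*},v^{*})$, because the slack in both nonnegative terms is zero at $(z^{*},u^{*},v^{*})$. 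Hence $(u^{*},v^{*})$ solves the dual.

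For the converse, I would apply KKT to the dual viewed as a convex QP in $(u,v)$ with the equality constraint $B^{\top}v - Cu = q$ and inequality $v \ge 0$. Let $z$ denote the multiplier of the equality constraint and let $\mu \ge 0$ be the multiplier of $v \ge 0$. Stationarity in $u$ yields $Cu^{*} + Cz = 0$ (giving $C u^{*} = -Cz$, and after absorbing the sign, $Cz = Cu^{*}$ up to the formulation); stationarity in $v$ yields $b - Bz - \mu = 0$, so $Bz = b - \mu \ge b$ is primal-feasible; complementary slackness on $v^{*} \ge 0$ via $\mu^{\top}v^{*}=0$ gives $v^{*\top}(Bz - b) = -v^{*\top}\mu = 0$. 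Together with $B^{\top}v^{*} - Cu^{*}=q$ (feasibility of the dual) and the relation $Cz = Cu^{*}$, these are exactly the primal KKT conditions at $z$; hence $z$ solves the primal and satisfies $Cz = Cu^{*}$, as claimed.

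The step I expect to be most delicate is the converse, specifically the handling of the case in which $C$ is singular. When $C \succ 0$ one can simply take $z = u^{*}$, but in general $u^{*}$ is only determined up to the nullspace of $C$, so the theorem cannot promise $z = u^{*}$, only $Cz = Cu^{*}$. Reading this off cleanly amounts to recognizing that the equality constraint of the dual plays the role of the stationarity condition of the primal with the multiplier $z$, and that the weak-duality identity is tight precisely on the set $\{(z,u) : Cz = Cu\}$; that observation is what delivers the exact statement of the theorem rather than the stronger but generally false $z = u^{*}$.
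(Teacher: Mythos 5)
The paper does not actually prove this theorem: it is quoted from Dorn \cite{Dorn_QAM_1960}, and the sentence immediately preceding it (``Dorn provides the proof of the connection \ldots'') defers entirely to that reference. So there is no in-paper argument to compare yours against. On its own merits, your route is the standard one (and essentially Dorn's): KKT conditions for the two convex QPs, bridged by the identity expressing the duality gap as $\tfrac12(z-u)^{\top}C(z-u)+v^{\top}(Bz-b)$. The identity is correct (substitute $q=B^{\top}v-Cu$ from dual feasibility), the forward direction is clean, and your closing observation about the nullspace of $C$ is exactly the right explanation of why the conclusion is $Cz=Cu^{*}$ rather than $z=u^{*}$.

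Two things need repair before this is a proof. First, in the converse, your stationarity-in-$v$ step reads $b-Bz-\mu=0$, hence $Bz=b-\mu\ge b$; with $\mu\ge 0$ this gives $Bz\le b$, the wrong inequality. The fix is to attach the multiplier $z$ to the equality constraint with the opposite sign, e.g.\ take the Lagrangian $\tfrac12 u^{\top}Cu-b^{\top}v+z^{\top}(B^{\top}v-Cu-q)-\mu^{\top}v$ with $\mu\ge 0$: stationarity in $u$ gives $Cz=Cu^{*}$ directly (no sign to ``absorb''), stationarity in $v$ gives $Bz=b+\mu\ge b$, and $\mu^{\top}v^{*}=0$ gives $v^{*\top}(Bz-b)=0$; then dual feasibility $B^{\top}v^{*}-Cu^{*}=q$ turns into primal stationarity $Cz+q=B^{\top}v^{*}$ at $z$. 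Second, your gap identity uses the dual objective $-\tfrac12 u^{\top}Cu+b^{\top}v$ (to be maximized), which is Dorn's dual; Problem (\ref{eqn:dual}) as printed in the paper says \emph{minimize} $\tfrac12 u^{\top}Cu+b^{\top}v$, which differs both in min-vs-max and in the sign of $b^{\top}v$. Your version is the one for which the theorem holds in general; the discrepancy is immaterial in the paper's application only because there $b=0$. You should state explicitly which dual you are proving the theorem for, and also record the hypothesis your argument silently uses throughout, namely that $C$ is symmetric positive semidefinite (needed both for convexity, hence for KKT sufficiency, and for the gap term $\tfrac12(z-u)^{\top}C(z-u)$ to be nonnegative).
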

Due to the equality constraint $B^{\top}v-Cu = q$, assume $C$ is full rank, we can plug $u = C^{-1}(B^{\top}v-q)$ back to the objective function to further simplify Problem (\ref{eqn:dual}), \ie 
\begin{align}
\begin{split}
\underset{v}{\text{minimize}} \ \ & \frac{1}{2} v^{\top}B(C^{-1})^{\top}B^{\top}v+(b-p^{\top}B^{\top})v \\ 
\text{s.t.} \ \ & v \ge 0.
\end{split}
\label{eqn:sim_dual}
\end{align}
Now it turns out to be a quadratic problem w.r.t. $v$ only.

The DCL quadratic problem can be solved by the aforementioned primal-dual method. Specifically, $\|\tilde{g}-g\|^{2}_{2}=(\tilde{g}-g)^{\top}(\tilde{g}-g)=\tilde{g}^{\top}\tilde{g} - 2g^{\top}\tilde{g} + g^{\top}g$. By omitting the constant term $g^{\top}g$, it turns to a quadratic problem form $\tilde{g}^{\top}\tilde{g} - 2g^{\top}\tilde{g}$. Since we know the primal problem (\ref{eqn:primal}) can be converted to its dual problem (\ref{eqn:dual}), the related coefficient matrices/vectors are easily determined by
\begin{align*}
\begin{split}
C = I, \ \ \ \ B = -A, \ \ \ \ b = \bm{0}, \ \ \ \ \ p = -g,
\end{split}
\end{align*}
where $I$ is a unit matrix. With these coefficients at hand, we have the corresponding dual problem 
\begin{equation}
\underset{v}{\text{minimize}} \ \ \frac{1}{2} v^{\top}AA^{\top}v-g^{\top}A^{\top}v  \\\ \text{s.t.} \ \ v \ge 0.
\label{eqn:dcl_dual}    
\end{equation}

By solving (\ref{eqn:dcl_dual}), we have $v^{*}$. On the other hand, {\small $C\tilde{g}=Cu^{*}, C=I$} and we can have the solution $\tilde{g}^{*}$ by
\begin{equation}
\tilde{g}^{*} = Cu^{*} = B^{\top}v-q = -A^{\top}v+g 
\label{eqn:dcl_solu}
\end{equation}
Note that {\small $\tilde{g}, u \in \mathbb{R}^{p}$}, {\small $v \in \mathbb{R}^{N_{r}}$}, {\small $A\in \mathbb{R}^{N_{r}\times p}$}, and {\small $b\in \mathbb{R}^{N_{r}}$} where $p$ is the size of $w$. If taking the fully-connected layer of ResNet as $w$, $p=2048$. In contrast with $p$, $N_{r}$ is usually smaller, \ie 1,2, or 3. As $N_{r}$ becomes larger, it increases the possibility that the constraints are inconsistent.
Thus, {\small $N_{r}\ll p$}. This implies that solving Problem~(\ref{eqn:dcl_dual}) in $\mathbb{R}^{N_{r}}$ is more efficient than solving Problem~(\ref{eqn:obj}) in $\mathbb{R}^{p}$.

\subsection{Theoretical Lower Bound}
\label{subsec:lbound}

	Here, we discuss about the congruency lower bound with gradient descent methods. First, we recall the theoretical characteristics w.r.t. gradient descent methods.
	\begin{proposition}[Quadratic upper bound \cite{Nesterov_Springer_2014}]
		If the gradient of a function $f: \mathbb{R}^{n} \rightarrow \mathbb{R}$ is Lipschitz continuous with Lipschitz constant $L$ for any $x,y\in \mathbb{R}^{n}$, \ie 
		\begin{align}
		\|\nabla f(y) - \nabla f(x)\| \le L \| y-x \|
		\end{align}
		then
		\begin{align}
		f(y) \le f(x) + \nabla f(x)^{\top} (y-x) + \frac{L}{2}\| y-x \|^{2}
		\end{align}
		\label{thm::p1}
	\end{proposition}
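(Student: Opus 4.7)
The plan is to use the fundamental theorem of calculus along the segment from $x$ to $y$, and then bound the resulting error term via the Lipschitz hypothesis. Concretely, since $f$ is differentiable, I would write
\begin{align*}
f(y) - f(x) = \int_{0}^{1} \nabla f(x + t(y-x))^{\top}(y-x)\, dt.
\end{align*}
Adding and subtracting the constant (in $t$) term $\nabla f(x)^{\top}(y-x)$ inside the integral yields
\begin{align*}
f(y) = f(x) + \nabla f(x)^{\top}(y-x) + \int_{0}^{1} \bigl[\nabla f(x + t(y-x)) - \nabla f(x)\bigr]^{\top}(y-x)\, dt.
\end{align*}
So it suffices to upper bound the last integral by $\tfrac{L}{2}\|y-x\|^{2}$.

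For that bound, I would first apply the Cauchy--Schwarz inequality pointwise in $t$, giving
\begin{align*}
\bigl[\nabla f(x+t(y-x))-\nabla f(x)\bigr]^{\top}(y-x) \le \bigl\|\nabla f(x+t(y-x))-\nabla f(x)\bigr\|\, \|y-x\|,
\end{align*}
and then apply the Lipschitz hypothesis to the first factor with the pair of points $x + t(y-x)$ and $x$, obtaining $\|\nabla f(x+t(y-x))-\nabla f(x)\| \le L t \|y-x\|$. Substituting and pulling constants out of the integral reduces the task to computing $\int_{0}^{1} L t \|y-x\|^{2}\, dt = \tfrac{L}{2}\|y-x\|^{2}$, which completes the argument.

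There is no real obstacle: the identity is the standard Nesterov/descent-lemma derivation, and the only subtlety worth flagging is the implicit smoothness assumption needed to write the line-integral representation of $f(y)-f(x)$. Since $\nabla f$ is Lipschitz, it is in particular continuous, so the integrand is continuous on $[0,1]$ and the fundamental theorem of calculus applies without qualification; no convexity of $f$ is needed, and the bound is tight (attained when $f(z)=\tfrac{L}{2}\|z\|^{2}$ along the segment), which is consistent with its role as the sharp quadratic majorant used later to analyze gradient steps.
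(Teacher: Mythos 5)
Your proof is correct: the paper does not prove this proposition at all (it is quoted with a citation to Nesterov's book), and your argument is precisely the standard descent-lemma derivation found in that source --- fundamental theorem of calculus along the segment, Cauchy--Schwarz, then the Lipschitz bound $\|\nabla f(x+t(y-x))-\nabla f(x)\|\le Lt\|y-x\|$ integrated to give the factor $\tfrac{L}{2}$. Nothing is missing, and your remark that continuity of $\nabla f$ (guaranteed by Lipschitzness) justifies the line-integral representation is the right subtlety to flag.
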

	\noindent On the other hand, there is a proved bound w.r.t. the loss.
	\begin{corollary}[The bound on the loss at one iteration \cite{FG_Lecture_2016,Tibshirani_Lecture_2013}]
		Let $x_{k}$ be the $k$-th iteration result of gradient descent and $\eta_{k} \ge 0$ the $k$-th step size. If $\nabla f$ is $L$-Lipschitz continuous, then
		\begin{align}
		f(x_{k+1}) \le f(x_{k}) -\eta_{k}\left( 1-\frac{L\eta_{k}}{2} \right) \|\nabla f(x_{k})\|^{2}
		\end{align}
		\label{thm::c1}
	\end{corollary}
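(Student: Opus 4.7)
The plan is to derive the bound as a direct algebraic consequence of Proposition~\ref{thm::p1} (the quadratic upper bound), using only the defining update rule of gradient descent, namely $x_{k+1} = x_k - \eta_k \nabla f(x_k)$. Since $\nabla f$ is assumed $L$-Lipschitz continuous, the hypothesis of Proposition~\ref{thm::p1} is satisfied on all of $\mathbb{R}^n$, so I may apply it at any pair of points.

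First I would invoke Proposition~\ref{thm::p1} with $x \leftarrow x_k$ and $y \leftarrow x_{k+1}$, giving
\begin{equation*}
f(x_{k+1}) \;\le\; f(x_k) + \nabla f(x_k)^{\top}(x_{k+1}-x_k) + \tfrac{L}{2}\,\|x_{k+1}-x_k\|^{2}.
\end{equation*}
Next I would substitute the gradient descent update $x_{k+1}-x_k = -\eta_k \nabla f(x_k)$ into both the linear and quadratic terms. The linear term becomes $-\eta_k\|\nabla f(x_k)\|^{2}$, and the quadratic term becomes $\tfrac{L\eta_k^{2}}{2}\|\nabla f(x_k)\|^{2}$ after pulling the scalar $-\eta_k$ out of the norm.

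Finally I would combine the two resulting terms by factoring out $\|\nabla f(x_k)\|^{2}$ and $\eta_k$, yielding
\begin{equation*}
f(x_{k+1}) \;\le\; f(x_k) - \eta_k\!\left(1-\tfrac{L\eta_k}{2}\right)\|\nabla f(x_k)\|^{2},
\end{equation*}
which is the claimed inequality. There is no substantive obstacle here: the argument is purely a one-line substitution into Proposition~\ref{thm::p1} followed by algebraic rearrangement. The only point worth remarking is that the step size $\eta_k \ge 0$ enters quadratically through $\|x_{k+1}-x_k\|^{2}=\eta_k^{2}\|\nabla f(x_k)\|^{2}$, which is precisely what produces the $L\eta_k/2$ correction and, in turn, explains why a descent guarantee $f(x_{k+1}) \le f(x_k)$ requires $\eta_k \le 2/L$.
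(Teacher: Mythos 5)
Your proof is correct and is exactly the intended derivation: the paper states this corollary (citing lecture notes) immediately after Proposition~\ref{thm::p1} precisely so that it follows by applying the quadratic upper bound at $y=x_{k+1}$, $x=x_k$ and substituting $x_{k+1}-x_k=-\eta_k\nabla f(x_k)$. The algebra and the remark about the role of $\eta_k\le 2/L$ are both sound.
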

	\noindent By adding up a collection of inequalities, we can move further along this line to have the following corollary.
	\begin{corollary}[]
		Let $x_{k}$ be the $k$-th iteration result of gradient descent and $\eta_{k} \ge 0$ the $k$-th step size. If $\nabla f$ is $L$-Lipschitz continuous, then
		\begin{align}
		f(x_{k}) \le f(x_{0}) -\sum_{i=0}^{k-1} \eta_{i}\left( 1-\frac{L\eta_{i}}{2} \right) \|\nabla f(x_{i})\|^{2}
		\end{align}
		\label{thm::c2}
	\end{corollary}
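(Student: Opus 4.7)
The plan is to derive this by a straightforward telescoping argument built directly on Corollary~\ref{thm::c1} (the per-iteration bound). Since the one-step bound holds at every iteration under the same global Lipschitz assumption on $\nabla f$, the strategy is to apply it once for each step and sum.

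First, I would instantiate Corollary~\ref{thm::c1} at each iteration $i = 0, 1, \ldots, k-1$, giving $k$ inequalities of the form
\begin{align*}
f(x_{i+1}) \le f(x_{i}) - \eta_{i}\left(1 - \frac{L\eta_{i}}{2}\right)\|\nabla f(x_{i})\|^{2}.
\end{align*}
Next, I would sum these $k$ inequalities. The left-hand side telescopes, since $\sum_{i=0}^{k-1}\bigl[f(x_{i+1}) - f(x_{i})\bigr] = f(x_{k}) - f(x_{0})$, and rearranging the resulting expression gives exactly the claimed bound. Equivalently, one could proceed by induction on $k$: the base case $k=1$ is precisely Corollary~\ref{thm::c1}, and the inductive step applies Corollary~\ref{thm::c1} to pass from $f(x_{k})$ to $f(x_{k+1})$ and then invokes the inductive hypothesis on $f(x_{k})$. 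Either route arrives at the same statement with no additional machinery.

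The main obstacle is essentially nil: all of the analytic work (invoking the quadratic upper bound of Proposition~\ref{thm::p1} to establish per-step descent) is already absorbed into Corollary~\ref{thm::c1}, so the remaining step from one iteration to $k$ iterations is pure bookkeeping. The only thing worth flagging explicitly is that the Lipschitz hypothesis on $\nabla f$ is a global property, so it transfers unchanged to every iterate $x_{i}$ encountered along the trajectory; no additional hypothesis on $\eta_{i}$ beyond $\eta_{i} \ge 0$ is needed for the inequality itself (though of course to guarantee monotone descent one would also want $\eta_{i} \le 2/L$, which is not required by the statement).
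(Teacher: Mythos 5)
Your proposal is correct and matches the paper's own (implicit) argument exactly: the paper introduces this corollary with the phrase ``by adding up a collection of inequalities,'' which is precisely your telescoping sum of Corollary~\ref{thm::c1} over $i=0,\ldots,k-1$. Nothing further is needed.
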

	\begin{theorem}[Congruency lower bound]
		Assume the gradient descent method uses a fixed step size $\eta$ and the gradient of the loss function $f: \mathbb{R}^{n} \rightarrow \mathbb{R}$ is Lipschitz continuous with Lipschitz constant $L$, the congruency $\nu_{k|x_{0}}$ referring to the initial point $x_{0}$ at the $k$-th iteration has the following lower bound
		\begin{align}
		\begin{split}
		\nu_{k|x_{0}} \ge & \max \Big\{ (1 -L\eta) \sum_{i=0}^{k-1} \frac{\| \nabla f(x_{i})\|}{\|\nabla f(x_{k})\| } \\
		& - L\eta \frac{\sum_{i=0}^{k-1} \| \nabla f(x_{i}) \| \| \sum_{j=0}^{i-1}  \nabla f(x_{j}) \|}{\|\nabla f(x_{k})\| \|\sum_{i=0}^{k-1} \nabla f(x_{i})\|}, -1 \Big\}
		\end{split}
		\end{align}
		\label{thm::t1}
	\end{theorem}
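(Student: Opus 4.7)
The plan is to expand the congruency $\nu_{k|x_{0}}$ via its definition in~(\ref{eqn:cong_k}) with $m=0$, reducing the problem to a lower bound on the inner product $G_{k}^{\top}\nabla f(x_{k})$, where $G_{k}:=\sum_{i=0}^{k-1}\nabla f(x_{i})=\hat{g}_{k-1|x_{0}}$. The bound is produced by combining Cauchy--Schwarz applied term-by-term with the $L$-Lipschitz continuity of $\nabla f$, which is the same hypothesis driving Proposition~\ref{thm::p1} and Corollaries~\ref{thm::c1}--\ref{thm::c2}. The outer $\max\{\,\cdot\,,-1\}$ in the statement is a safety valve that activates when the Lipschitz-based expression becomes vacuous, since $\nu_{k|x_{0}}\ge-1$ always holds by Cauchy--Schwarz on the very quotient defining $\nu$.

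Concretely, I would proceed in the following order. First, use the fixed-step update rule to write $x_{j}-x_{0}=-\eta G_{j}$, so that $x_{i}-x_{k}=-\eta(G_{i}-G_{k})$, and the triangle inequality routed through $x_{0}$ yields $\|x_{i}-x_{k}\|\le\eta(\|G_{i}\|+\|G_{k}\|)$. The $L$-Lipschitz hypothesis then gives the key gradient-difference bound $\|\nabla f(x_{i})-\nabla f(x_{k})\|\le L\eta(\|G_{i}\|+\|G_{k}\|)$. Second, split the numerator as $G_{k}^{\top}\nabla f(x_{k})=\sum_{i=0}^{k-1}\nabla f(x_{i})^{\top}\nabla f(x_{k})$ and lower-bound each summand by combining Cauchy--Schwarz with the gradient-difference bound, arranging the decomposition so that the leading inner-product contribution scales as $\|\nabla f(x_{i})\|\|G_{k}\|$ while the Lipschitz slack splits between a $\|\nabla f(x_{i})\|\|G_{k}\|$ piece and a $\|\nabla f(x_{i})\|\|G_{i}\|$ piece. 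Third, sum over $i$ and regroup: the $\|G_{k}\|\sum_{i}\|\nabla f(x_{i})\|$ contribution combines with the $-L\eta\|G_{k}\|\sum_{i}\|\nabla f(x_{i})\|$ slack into $(1-L\eta)\|G_{k}\|\sum_{i}\|\nabla f(x_{i})\|$, while the $G_{i}$-slack collects into $-L\eta\sum_{i}\|\nabla f(x_{i})\|\|G_{i}\|$. Fourth, divide both sides by $\|G_{k}\|\|\nabla f(x_{k})\|$ to convert the inner-product inequality into the cosine-similarity bound stated in the theorem. Finally, the outer maximum with $-1$ intersects this Lipschitz bound with the trivial Cauchy--Schwarz bound $\nu_{k|x_{0}}\ge -1$, covering the regime in which the derived expression dips below $-1$.

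I expect the main obstacle to be step two, the choice of per-summand decomposition: the most obvious expansion $\nabla f(x_{i})^{\top}\nabla f(x_{k})=\|\nabla f(x_{i})\|^{2}+\nabla f(x_{i})^{\top}(\nabla f(x_{k})-\nabla f(x_{i}))$ together with the direct Lipschitz bound $\|\nabla f(x_{i})-\nabla f(x_{k})\|\le L\eta\|\sum_{j=i}^{k-1}\nabla f(x_{j})\|$ produces $\|\sum_{j=i}^{k-1}\nabla f(x_{j})\|$ quantities rather than the $\|G_{k}\|$ and $\|G_{i}\|$ quantities that appear in the statement. Routing through the initial iterate $x_{0}$ (as in step one) is what converts these into the cleaner $\|G_{k}\|+\|G_{i}\|$ form, and the precise bookkeeping of how the Lipschitz slack is distributed between the $\|G_{k}\|$ and $\|G_{i}\|$ directions is what produces the exact $(1-L\eta)$ coefficient rather than looser variants such as $(1-2L\eta)$ or a quadratic-in-$L\eta$ correction that would arise from standard variance-style manipulations.
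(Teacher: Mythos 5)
Your overall strategy is genuinely different from the paper's. You propose to bound the inner product $G_{k}^{\top}\nabla f(x_{k})=\sum_{i=0}^{k-1}\nabla f(x_{i})^{\top}\nabla f(x_{k})$ term by term using only the Lipschitz continuity of $\nabla f$, whereas the paper routes the argument through function values: it applies the quadratic upper bound (Proposition~\ref{thm::p1}) to the pair $(x_{k},x_{0})$ to convert $\nabla f(x_{k})^{\top}(x_{k}-x_{0})$ into $f(x_{k})-f(x_{0})+\frac{L}{2}\|x_{k}-x_{0}\|^{2}$, then invokes the accumulated descent bound (Corollary~\ref{thm::c2}) to lower-bound $f(x_{0})-f(x_{k})$ by $\sum_{i}\eta(1-\frac{L\eta}{2})\|\nabla f(x_{i})\|^{2}$, and finally expands $\|\sum_{i}\nabla f(x_{i})\|^{2}$ by a recursive Cauchy--Schwarz argument. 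Your route never touches $f$ itself, only $\nabla f$, so if it closed it would be more elementary; but it does not close.

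The gap is in your step two. After clearing denominators, the theorem's first term demands $G_{k}^{\top}\nabla f(x_{k})\ge(1-L\eta)\,\|G_{k}\|\sum_{i}\|\nabla f(x_{i})\|-L\eta\sum_{i}\|\nabla f(x_{i})\|\,\|G_{i}\|$, i.e.\ a leading positive contribution of order $\|\nabla f(x_{i})\|\,\|G_{k}\|$ per summand. No lower bound on $\nabla f(x_{i})^{\top}\nabla f(x_{k})$ can have that leading term: Cauchy--Schwarz caps each summand at $\|\nabla f(x_{i})\|\,\|\nabla f(x_{k})\|$, while $\|G_{k}\|$ can be as large as $k\,\|\nabla f(x_{k})\|$. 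Concretely, for $k$ (nearly) identical unit gradients one has $G_{k}^{\top}\nabla f(x_{k})=k$ but $\|G_{k}\|\sum_{i}\|\nabla f(x_{i})\|=k^{2}$, so the target inequality fails for $k\ge 2$ and small $L\eta$. The decomposition you can actually justify, $\nabla f(x_{i})^{\top}\nabla f(x_{k})\ge\|\nabla f(x_{i})\|^{2}-\|\nabla f(x_{i})\|\,\|\nabla f(x_{k})-\nabla f(x_{i})\|$ with $\|\nabla f(x_{k})-\nabla f(x_{i})\|\le L\eta(\|G_{i}\|+\|G_{k}\|)$, yields a leading term $\sum_{i}\|\nabla f(x_{i})\|^{2}/(\|G_{k}\|\,\|\nabla f(x_{k})\|)$; converting that into the theorem's $\sum_{i}\|\nabla f(x_{i})\|/\|\nabla f(x_{k})\|$ would require $\sum_{i}\|\nabla f(x_{i})\|^{2}\ge\|G_{k}\|\sum_{i}\|\nabla f(x_{i})\|$, which is false in general (same counterexample: $k$ versus $k^{2}$). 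You correctly flagged this conversion as the main obstacle, but routing the triangle inequality through $x_{0}$ only reshapes the Lipschitz slack; it does not supply the missing factor of $\|G_{k}\|$ in the leading term. It is worth noting that the paper's own final step performs essentially the same conversion by citing $\sum_{i}\|\nabla f(x_{i})\|/\|\sum_{i}\nabla f(x_{i})\|\ge 1$, and that step is loose in exactly the same direction; the form of the bound that is rigorously supported by either argument is the intermediate one with $\sum_{i}\|\nabla f(x_{i})\|^{2}$ in the numerator, as in Inequality~(\ref{eqn:proof_1}).
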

	\begin{proof}[\textbf{Proof}:\nopunct]
		Given $x_{k}$ and $x_{0}$, according to Proposition \ref{thm::p1} we have
		{ \small
			\begin{align*}
			\nabla f(x_{k})^{\top} (x_{k}-x_{0}) \le f(x_{k}) - f(x_{0}) + \frac{L}{2}\| x_{k}-x_{0} \|^{2}
			\end{align*}
		}
		Since {\small $x_{k} = x_{0}-\eta \sum_{i=0}^{k-1} \nabla f(x_{i})$} and {\small $\nu_{k|x_{0}} = (-\nabla f(x_{k}))^{\top} (-\sum_{i=0}^{k-1} \nabla f(x_{i}))/(\|\nabla f(x_{k})\| \|\sum_{i=0}^{k-1} \nabla f(x_{i})\|)$}, we can have
		{ \small
			\begin{align*}
			\nabla f(x_{k})^{\top} (x_{k}-x_{0}) = &-\eta (-\nabla f(x_{k}))^{\top} (-\sum_{i=0}^{k-1} \nabla f(x_{i})) \\
			                                     = & -\eta \|\nabla f(x_{k})\| \|\sum_{i=0}^{k-1} \nabla f(x_{i})\| \nu_{k|x_{0}}
			\end{align*}
		}
		Plugging this in the inequality, it yields 
		{ \small
			\begin{align*}
			\nu_{k|x_{0}} \ge \frac{1}{\eta} \frac{f(x_{0}) - f(x_{k}) - \frac{L\eta^{2}}{2}\| \sum_{i=0}^{k-1} \nabla f(x_{i}) \|^{2}}{\|\nabla f(x_{k})\| \|\sum_{i=0}^{k-1} \nabla f(x_{i})\|}
			\end{align*}
		}
		According to Corollary \ref{thm::c2}, the inequality can be rewritten as
		{ \footnotesize
			\begin{align}
			\nu_{k|x_{0}} \ge \frac{(1 -\frac{L\eta}{2}) \sum_{i=0}^{k-1} \|\nabla f(x_{i})\|^{2} - \frac{L\eta}{2}\| \sum_{i=0}^{k-1} \nabla f(x_{i}) \|^{2}} {\|\nabla f(x_{k})\| \|\sum_{i=0}^{k-1} \nabla f(x_{i})\|} \label{eqn:proof_1}
			\end{align}
		}
		By using polynomial expansion and the Cauchy-Schwarz inequality, we can expand the term $\| \sum_{i=0}^{k-1} \nabla f(x_{i}) \|^{2}$ as follows
		{ \small
			\begin{align*}
			&\| \sum_{i=0}^{k-1} \nabla f(x_{i}) \|^{2} = \|\nabla f(x_{k-1}) + \sum_{i=0}^{k-2} \nabla f(x_{i}) \|^{2} \\
			\le& \| \nabla f(x_{k-1}) \|^{2} + 2 \| \nabla f(x_{k-1}) \| \| \sum_{i=0}^{k-2} \nabla f(x_{i}) \| + \| \sum_{i=0}^{k-2} \nabla f(x_{i}) \|^{2} \\
			\end{align*}
		}
		Recursively, $\| \sum_{i=0}^{k-2} \nabla f(x_{i}) \|^{2}$, $\| \sum_{i=0}^{k-3} \nabla f(x_{i}) \|^{2}$, $\ldots$, till $\| \sum_{i=0}^{1} \nabla f(x_{i}) \|^{2}$ can be expanded, \eg
		{ \small
			\begin{align*}
			&\| \sum_{i=0}^{1} \nabla f(x_{i}) \|^{2} = \|\nabla f(x_{1}) + \nabla f(x_{0}) \|^{2} \\
			\le&\| \nabla f(x_{1}) \|^{2} + 2 \| \nabla f(x_{1}) \| \| \nabla f(x_{0}) \| + \| \nabla f(x_{0}) \|^{2} \\
			\end{align*}
		}
		The above inequalities yield
		{ \small
			\begin{align*}
			\| \sum_{i=0}^{k-1} \nabla f(x_{i}) \|^{2} &\le \sum_{i=0}^{k-1} \| \nabla f(x_{i}) \|^{2} + 2 \sum_{i=0}^{k-1} \| \nabla f(x_{i}) \| \| \sum_{j=0}^{i-1}  \nabla f(x_{j}) \|
			\end{align*}
		}
		Plugging it into Inequality (\ref{eqn:proof_1}), we have 
		{ \small
			\begin{align*}
			\nu_{k|x_{0}} \ge & (1 -L\eta)\frac{ \sum_{i=0}^{k-1} \|\nabla f(x_{i})\|^{2} } {\|\nabla f(x_{k})\| \|\sum_{i=0}^{k-1} \nabla f(x_{i})\|} \\
			&-L\eta \frac{\sum_{i=0}^{k-1} \| \nabla f(x_{i}) \| \| \sum_{j=0}^{i-1}  \nabla f(x_{j})\|}{\|\nabla f(x_{k})\| \|\sum_{i=0}^{k-1} \nabla f(x_{i})\|}
			\end{align*}
		}
		Due to {\small $\frac{\sum_{i=0}^{k-1} \| \nabla f(x_{i})\|}{\|\sum_{i=0}^{k-1} \nabla f(x_{i})\|} \ge 1 $}, the congruency lower bound can be further simplified as
		{ \small
			\begin{align*}
			\nu_{k|x_{0}} \ge & (1 -L\eta) \sum_{i=0}^{k-1} \frac{\| \nabla f(x_{i})\|}{\|\nabla f(x_{k})\| } \\
			& - L\eta \frac{\sum_{i=0}^{k-1} \| \nabla f(x_{i}) \| \| \sum_{j=0}^{i-1}  \nabla f(x_{j}) \|}{\|\nabla f(x_{k})\| \|\sum_{i=0}^{k-1} \nabla f(x_{i})\|}
			\end{align*}
		}
		Combining with the fact $\nu_{k|x_{0}} \ge -1$, we complete the proof.
	\end{proof}

\begin{figure}[!t]
	\centering
	\includegraphics[width=0.63\columnwidth]{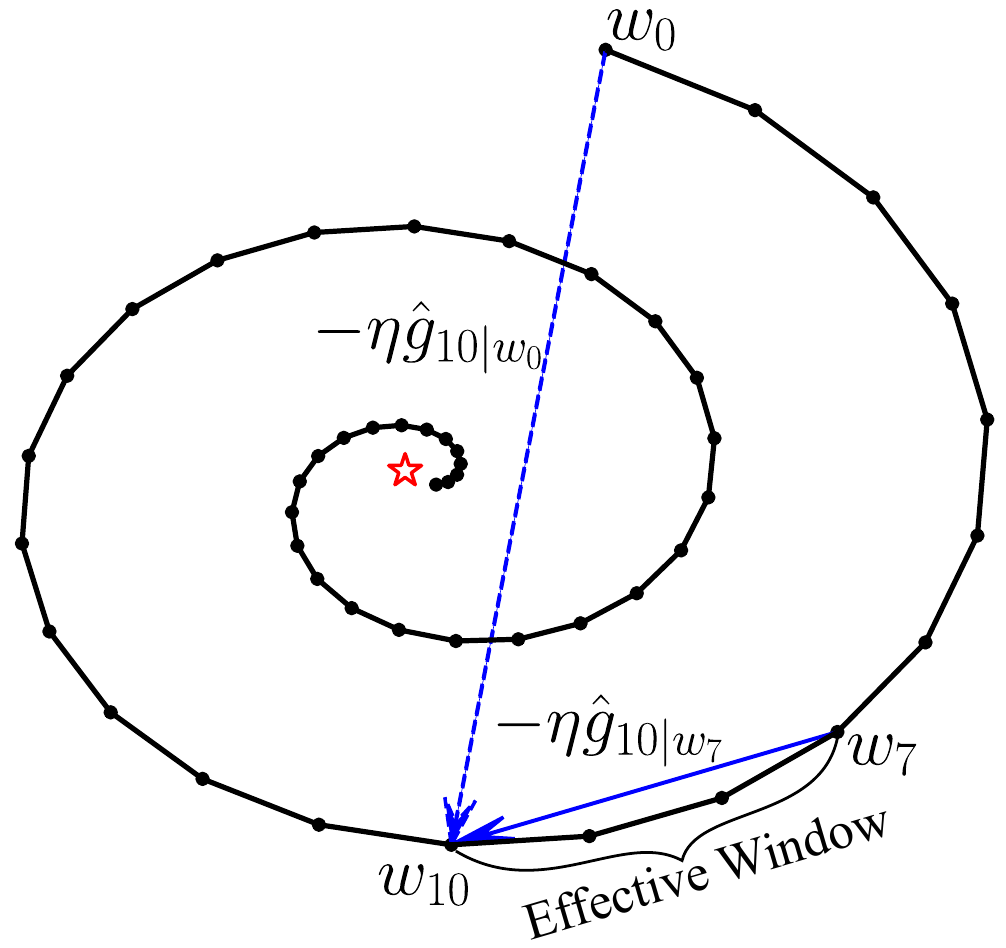}
	\caption{An illustration to demonstrate the concept of the effective window. 
		Given the spiral convergence path, $-\eta \hat{g}_{10|w_{0}}$ restricts the search direction and the minimum (\ie the red star) and $w_{11}$ are unreachable according to the search direction. In contrast, $w_{11}$ can be reached along the search direction of $-\eta \hat{g}_{10|w_{7}}$. To adaptively yield appropriate accumulated gradients that converge to the minimum, we define an effective window to periodically update the reference.}
	\label{fig:unfavour_case}
\end{figure}

	\begin{remark}
		Theorem~\ref{thm::t1} implies that when we apply gradient descent method to search a local minimum, the congruency lower bound at a certain iteration in the learning process is determined by the gradients at current iteration and previous iterations. 
	\end{remark}
	\begin{remark}
		Theorem~\ref{thm::t1} implies that the lower bound of congruency with a small step size, \ie {\small $\eta < \frac{1}{L}$}, is tighter than the one of congruency with a large step size, \ie {\small $\eta \ge \frac{1}{L}$}. This is consistent with the fact the large step size could lead to a zigzag convergence path. The negative lower bound of congruency when {\small $\eta \ge \frac{1}{L}$} indicates the huge turnaround would possibly occur in the learning process.
	\end{remark}

\begin{figure*}[!t]
	\centering
	\subfloat[Convergence with GD at iteration 50]      {\includegraphics[width=0.32\linewidth]{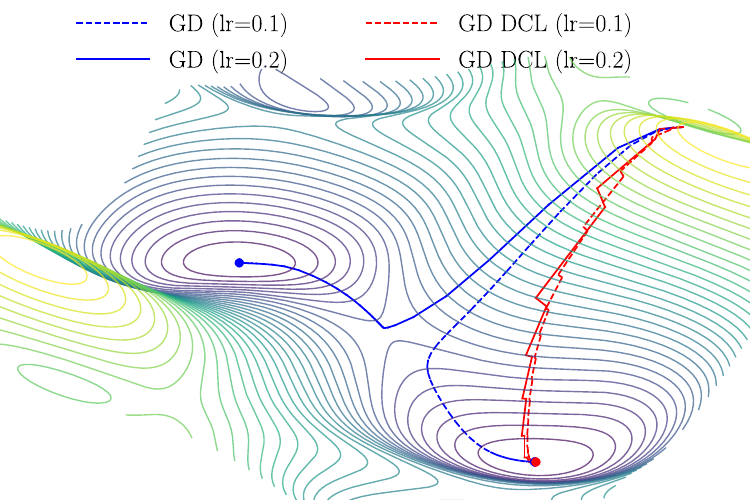}      } \hfill
	\subfloat[Convergence with RMSProp at iteration 150]{\includegraphics[width=0.32\linewidth]{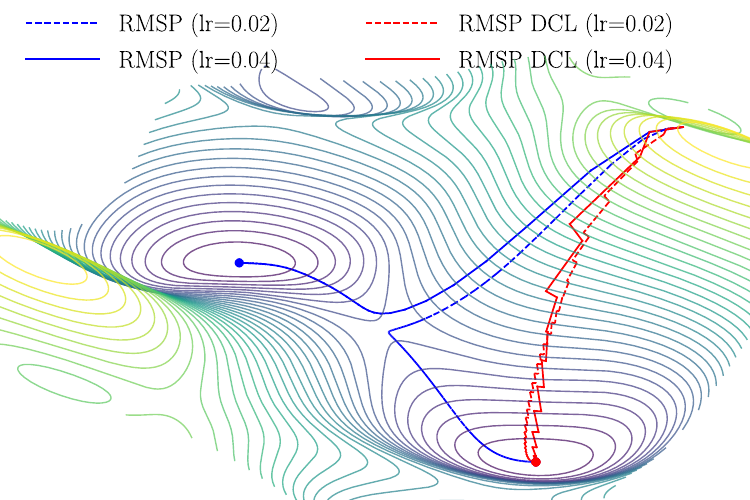}    } \hfill
	\subfloat[Convergence with Adam at iteration 200]   {\includegraphics[width=0.32\linewidth]{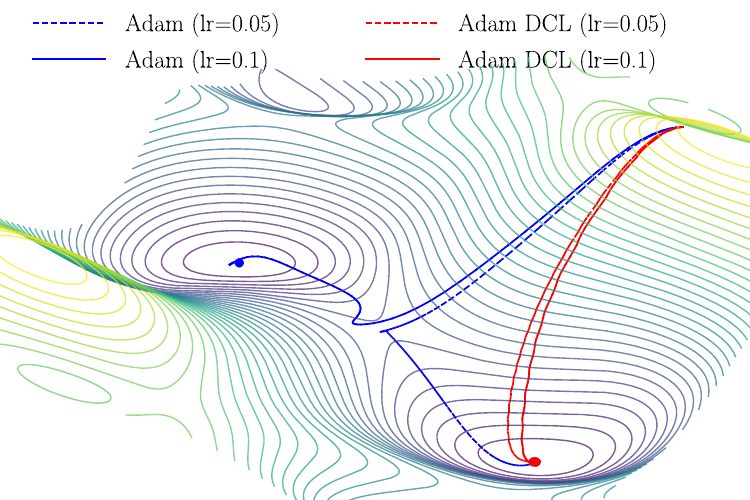}    } \\
	\subfloat[Curves of $z$ v.s. iteration with GD]     {\includegraphics[width=0.32\linewidth]{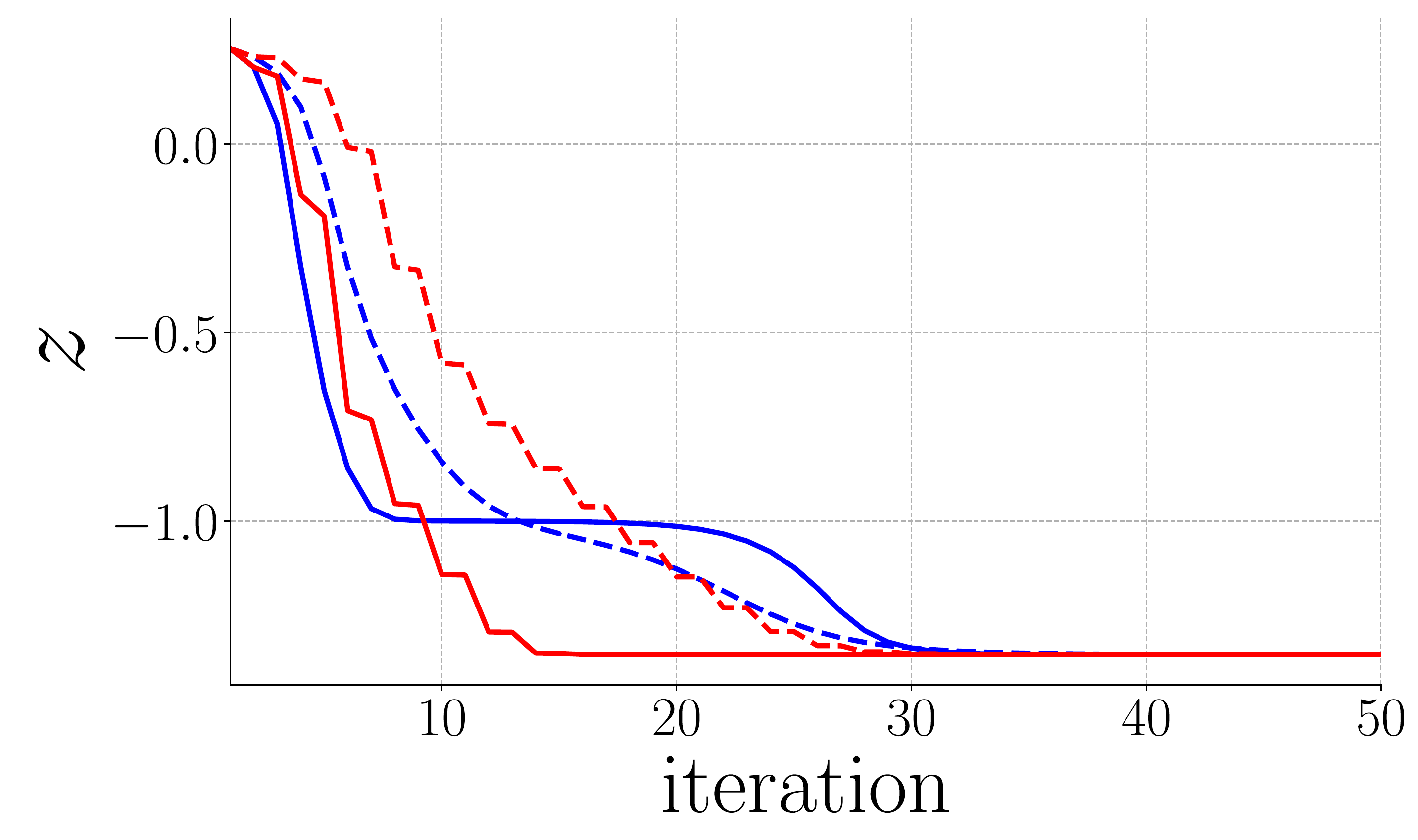}  } \hfill
	\subfloat[Curves of $z$ v.s. iteration with RMSProp]{\includegraphics[width=0.32\linewidth]{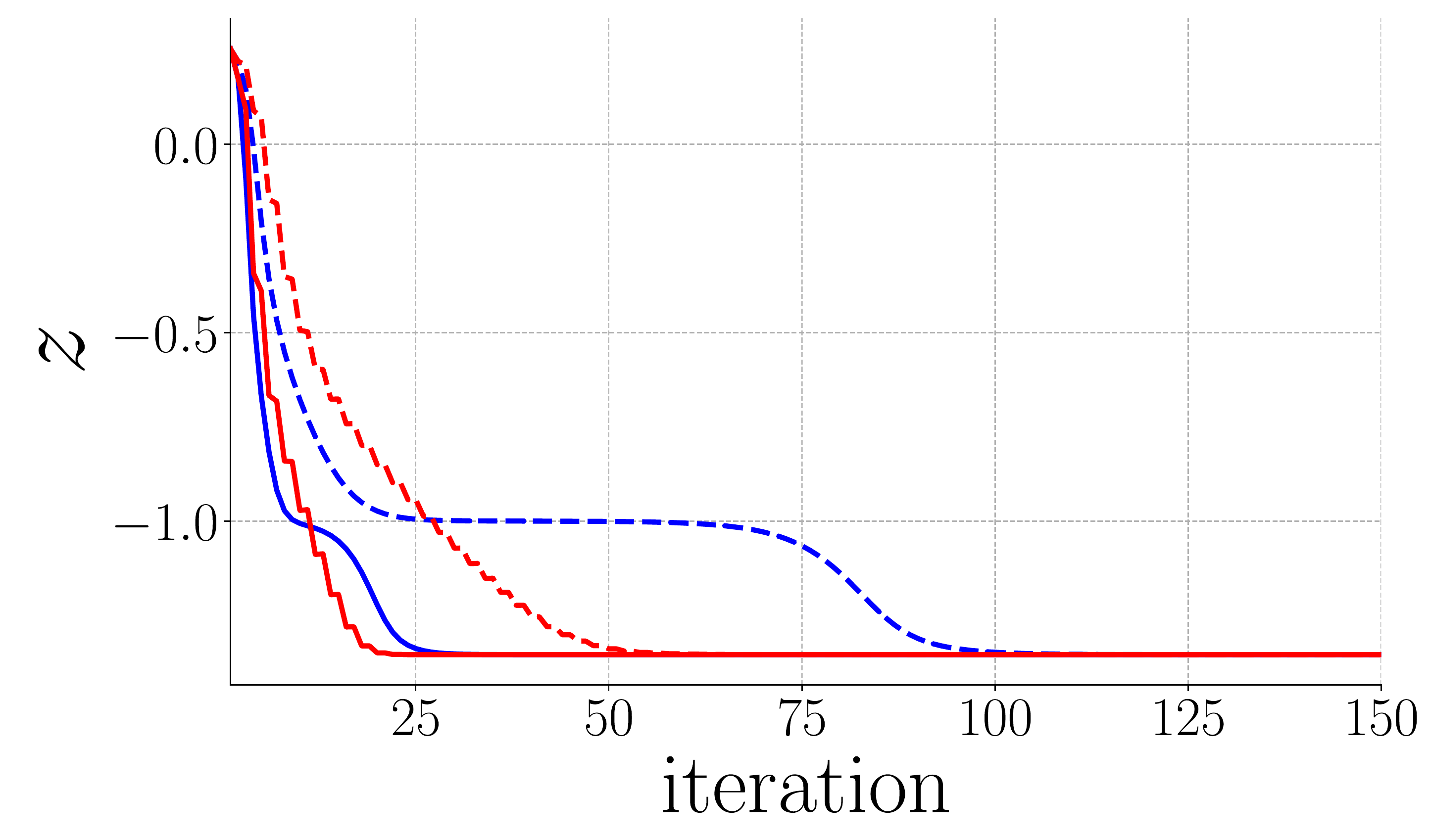}} \hfill
	\subfloat[Curves of $z$ v.s. iteration with Adam]   {\includegraphics[width=0.32\linewidth]{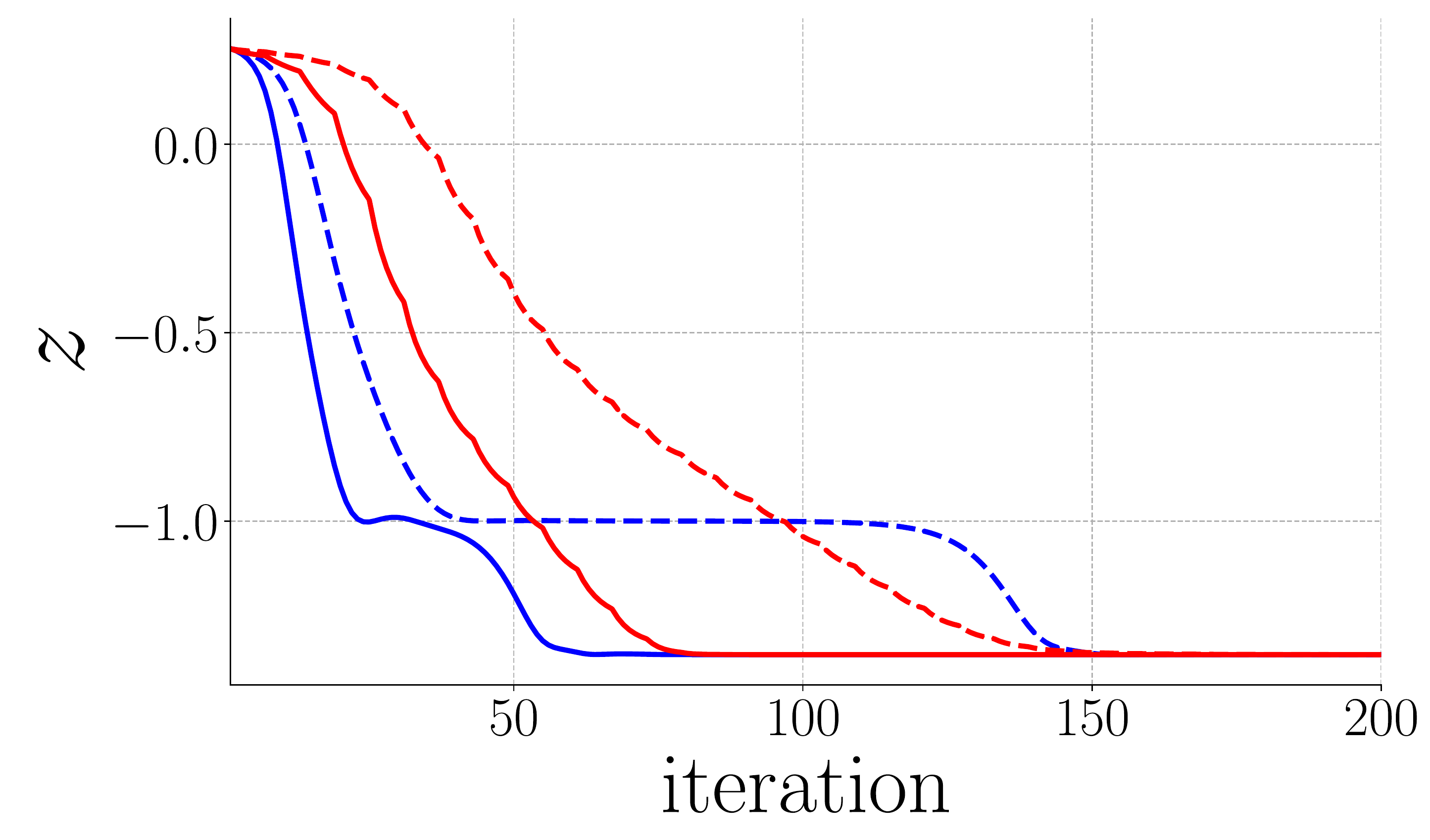}}
	\caption{An example demonstrating the effect of the proposed DCL method on three optimizers, \ie~gradient descent (GD), RMSProp, and Adam.
		Given a problem {\small $z=f(x,y)$}, we use these optimization algorithms to compute the local minima, \ie $(x^{*},y^{*})$ that yield the minimal $z^{*}$. In the experiment, except the learning rate, the setting and hyperparameters are the same for ALGO and ALGO DCL, where ALGO=\{GD, RMSProp, Adam\}.
		The proposed DCL method encourages the convergence paths to be as straight as possible.}
	\label{fig:effect}
\end{figure*}

\subsection{Adaptivity to Learning}

As the reference is used to compute the accumulated gradient for narrowing down the search direction, a desirable referential direction should orient to a local minimum. Conversely, an inappropriate referential direction could mislead the training and slow down the convergence. Therefore, it is important to update the references to adapt to the target optimization problem.

In this work, we update the references with a short temporal window so as to yield a locally stable and reliable referential direction. For instance, \fig \ref{fig:unfavour_case} shows an unfavorable case that takes $w_{0}$ as the reference, 
where the convergence path is spiral.
Due to the circuitous manifold, $w_{0}$ results in a misleading direction $-\eta \hat{g}_{10|w_{0}}$. In contrast, if taking $w_{7}$ as a reference, it can yield the appropriate search direction to reach $w_{11}$. Therefore, we introduce an ``effective window'' to allow the proposed DCL method to find an appropriate search direction. The effective window forces the proposed DCL method to only accumulate the gradients within the window. 
In \fig \ref{fig:unfavour_case}, the proposed DCL method with a small window size would converge, whereas the one with a large window size would diverge.
We denote the window size as $\beta_{w}$ and the reference offset as $\beta_{o}$. When the time step $t$ satisfies
\begin{align}
\begin{split}
t \; \texttt{mod} \; \beta_{w} = \beta_{o},
\end{split}
\label{eqn:adapt_win}
\end{align}
where $\texttt{mod}$ is the modulo operator, it would trigger the reset mechanism, \ie starting over to set references {\small $r_{i} \leftarrow w_{t}, \ 1 \le i \le N_{r}$}. $\beta_{o}$ indicates the first reference weight point. Once the reset process starts, the proposed DCL method would use $g$, instead of $\tilde{g}$, for update until all the $N_{r}$ references are reset.

\subsection{Effect of DCL}
\label{subsec:effect}
To intuitively understand the effect of the proposed DCL method, we present visual comparisons of the convergence paths with three popular optimizers, \ie SGD \cite{Robbins_AMS_1951}, RMSProp \cite{Hinton_CO_2012}, and Adam \cite{Kingma_arXiv_2014}, on a publicly available problem\footnote{\url{https://github.com/Jaewan-Yun/optimizer-visualization}}. 

In particular, given the problem $z=f(x,y)$, we apply the three optimizers to compute a local minimum $(x^{*}, y^{*})$. Unlike image classification, the problem does not need randomized data sequence as input so there is no stochastic process. For a fair comparison, except the learning rate, we keep the settings and hyperparameters the same between ALGO and ALGO DCL, where ALGO=\{GD, RMSProp, Adam\} and GD stands for gradient descent. The convergence paths w.r.t. the optimization algorithms are shown in \fig~\ref{fig:effect}(a)-(c), while the corresponding $z$ v.s. iteration curves are plotted in \fig~\ref{fig:effect}(d)-(f).

We can see that all the baseline curves are circuitous, \ie a sharp turn at the ridge region between two local minima. Moreover, different learning rates lead to different local minima. It implies that the training process in this case is influenceable and fickle in terms of the direction of the convergence. The proposed DCL method noticeably improves the convergence direction by choosing a relatively straightforward path over the three optimization algorithms. Note that as the objective function (\ref{eqn:obj}) implies, if we do not take any the accumulated gradients (\ie no constraints), or take the gradient for the coming update as the accumulated gradient (\ie $\hat{g}_{r_{i}}=g$), the proposed DCL method would become the baseline (\ie $\tilde{g}=g$).

\subsection{DCL in Continual Learning}
\label{subsec:dcl}

In previous subsections, we introduce the proposed DCL method in mini-batch learning. By its very nature, it can also work in continual learning manner. GEM \cite{Lopez_NIPS_2017} is a recent method proposed for continual learning. The objective function of GEM is the same as the proposed DCL method, whereas the constraints of GEM and the proposed DCL method are devised for respective purposes. To apply the proposed DCL method in continual learning, we can merge the constraints of the proposed method with the ones of GEM. Hence, we have a new $A$ as follows
\begin{align}
\begin{split}
A = \begin{bmatrix}
(w-r_{1})^{\top} \\
\vdots \\
(w-r_{N_{r}})^{\top} \\
-g(x_{S_1},y_{S_1})^{\top}\\
\vdots \\
-g(x_{S_{N_{m}}},y_{S_{N_{m}}})^{\top} \\
\end{bmatrix}, \ \ \  S_i \in \mathcal{M}
\end{split}
\label{eqn:constraints_conn}
\end{align}
where $\mathcal{M}$ is the memory and $N_{m}$ is the size of the memory. With the proposed DCL constraints, the corrected $\tilde{g}$ is forced to be consistent with both the accumulated gradients and the directions of gradients generated by the samples in memory.

\subsection{Comparison with Memory-based Constraints}

Now, we discuss the difference between the proposed DCL constraints and the memory-based constraints used in GEM~\cite{Lopez_NIPS_2017}.

There are two main differences between the DCL constraints and the GEM constraints.
First, as shown in \fig \ref{fig:comparison}, the descent direction in the proposed DCL method is regulated by the accumulated gradient, whereas the gradient for an update in GEM is regulated to avoid the violation with the gradients of the memory samples (\ie images and the corresponding ground-truths). Since the weights are iteratively updated and the memory samples are preserved, the gradients of the memory samples could be changed at each iteration so the direction of the adjusted gradient could be dynamically varying. Second, the proposed DCL method only needs to memorize the references, whereas GEM memorizes the images and the corresponding ground-truths. The proposed DCL constraints are efficiently computed by a subtraction in \eqn~(\ref{eqn:constraints}), other than by computing the corresponding gradients like GEM. 


Although the proposed DCL constraints are different from GEM constraints in terms of definition, they are able to work with each other in continual learning. We will dive into the details in the following experiment section. Moreover, GEM computes the gradients on all the parameters of a DNN. This works in the situations that input image resolution is relatively small, \eg 784 for MNIST~\cite{Lecun_IEEE_1998} or 3072 for CIFAR-10/100~\cite{Krizhevsky_Report_2009}. The networks used to classify these images have small number of weights like MLP and ResNet-18. However, the number of parameters in a DNN could be huge. For example, ResNeXt-29 ($16\times 64$) \cite{Xie_CVPR_2017} has 68 million parameters. Although GEM applies primal-dual method to reduce the computation in optimization, the overall computation is still considerably high. In this work, we instead compute the gradients on the highest-level layer to generalize the proposed DCL method to any general DNN.

\begin{figure}[!t]
	\centering
	\includegraphics[width=0.95\columnwidth]{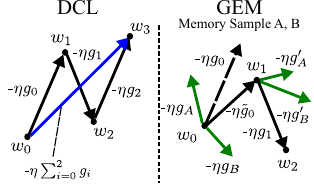}
	\caption{An illustration demonstrating the difference between DCL (left) and GEM \cite{Lopez_NIPS_2017} (right). The search direction in DCL is determined by the accumulated gradient while the adjusted gradient (solid line) of GEM is optimized by avoiding the violation between the gradient (dashed line) and memory samples' gradients (green line). Since the weights are iteratively updated and the memory samples are preserved, the direction of the adjusted gradient of the memory samples could be dynamically varying.}
	\label{fig:comparison}
\end{figure}
\section{Experiments}
\label{sec:experiment}

\subsection{Experimental Setup}
To comprehensively evaluate the proposed DCL method, we conduct experiments on three tasks, \ie saliency prediction, continual learning, and classification.

\subsubsection{Datasets} 
\noindent For saliency prediction task, we use SALICON~\cite{Jiang_CVPR_2015} (the 2017 version), MIT1003~\cite{Judd_ICCV_2009}, and OSIE~\cite{Xu_JOV_2014}. For continual learning task, we follow the same experimental settings in GEM \cite{Lopez_NIPS_2017} to use MNIST Permutations (MNIST-P), MNIST Rotations (MNIST-R), and incremental CIFAR-100 (iCIFAR-100). For classification, we use CIFAR \cite{Krizhevsky_Report_2009}, Tiny ImageNet, and ImageNet\cite{Deng_CVPR_2009}.

\begin{table*}[!t]
	\centering
	\caption
	{
		Saliency prediction performance of the models which are trained on SALICON 2017 training set and evaluated on SALICON 2017 validation set. Higher score is better in all the metrics. 
		Each experiment is repeated for 3 times and the mean and std of the scores are reported. \REVISION{We follow \cite{Yang_arXiv_2019} to only use Adam as the optimizer for DINet.}
	}
	\begin{tabular}{L{30ex} C{18ex} C{18ex} C{18ex} C{18ex}}
		\toprule
		& NSS & sAUC & AUC & CC \\
		\cmidrule(lr){2-2} \cmidrule(lr){3-3} \cmidrule(lr){4-4} \cmidrule(lr){5-5}
		ResNet-50 RMSP                                 & 1.7933$\pm$0.0083            & 0.8311$\pm$0.0017             & 0.8393$\pm$0.0039             & 0.8472$\pm$0.0048          \\
		ResNet-50 RMSP GEM                             & 1.7522$\pm$0.0150            & 0.8267$\pm$0.0017             & 0.8341$\pm$0.0016             & 0.8291$\pm$0.0033          \\
		ResNet-50 RMSP DCL-$\infty$-1                  & \textbf{1.8226}$\pm$0.0014   & \textbf{0.8376}$\pm$0.0017    & \textbf{0.8445}$\pm$0.0016    & \textbf{0.8569}$\pm$0.0032 \\ \midrule
		ResNet-50 Adam                                 & 1.7978$\pm$0.0019            & 0.8328$\pm$0.0007             & 0.8405$\pm$0.0011             & 0.8495$\pm$0.0004          \\
		ResNet-50 Adam GEM                             &  1.7962$\pm$0.0034           & 0.8344$\pm$0.0021             & 0.8399$\pm$0.0009             & 0.8494$\pm$0.0034          \\
		ResNet-50 Adam DCL-$\infty$-1                  & \textbf{1.8019}$\pm$0.0024   & \textbf{0.8360}$\pm$0.0023    & \textbf{0.8430}$\pm$0.0023    & \textbf{0.8548}$\pm$0.0038 \\ \midrule
		DINet Adam \cite{Yang_arXiv_2019}                                & 1.8786$\pm$0.0063            & 0.8426$\pm$0.0008             & 0.8489$\pm$0.0008             & 0.8799$\pm$0.0010          \\
		DINet Adam GEM                             &  1.8746$\pm$0.0067           & 0.8423$\pm$0.0014             & 0.8492$\pm$0.0012             & 0.8791$\pm$0.0030          \\
		DINet Adam DCL-500-1                  & \textbf{1.8857}$\pm$0.0006   & \textbf{0.8430}$\pm$0.0002    & \textbf{0.8493}$\pm$0.0002    & \textbf{0.8804}$\pm$0.0009 \\
		\bottomrule	
	\end{tabular}
	\label{tbl:salicon}
\end{table*}

\begin{table*}[!t]
	\centering
	\caption
	{
		Saliency prediction performance of the models which are trained on OSIE and tested on MIT1003. 
		Each experiment is repeated for 3 times and the mean and std of the scores are reported.
	}
	\begin{tabular}{L{30ex} C{18ex} C{18ex} C{18ex} C{18ex}}
		\toprule
		& ~~~NSS~~ & ~~sAUC~~ & ~AUC~ & ~~CC~~ \\
		\cmidrule(lr){2-2} \cmidrule(lr){3-3} \cmidrule(lr){4-4} \cmidrule(lr){5-5}
		ResNet-50 RMSP                & 2.4047$\pm$0.0055          & 0.7612$\pm$0.0019          & 0.8455$\pm$0.0028          & 0.7595$\pm$0.0002          \\
		ResNet-50 RMSP GEM            & 2.3960$\pm$0.0057          & 0.7566$\pm$0.0045          & 0.8412$\pm$0.0055          & 0.7500$\pm$0.0037          \\
		ResNet-50 RMSP DCL-$\infty$-1 & \textbf{2.4252}$\pm$0.0053 & \textbf{0.7620}$\pm$0.0018 & \textbf{0.8469}$\pm$0.0027 & \textbf{0.7658}$\pm$0.0016 \\ \midrule
		ResNet-50 Adam                & 2.4064$\pm$0.0015          & 0.7597$\pm$0.0012          & 0.8429$\pm$0.0021          & 0.7618$\pm$0.0005 \\
		ResNet-50 Adam GEM            & 2.3685$\pm$0.0065          & 0.7594$\pm$0.0007          & 0.8427$\pm$0.0017           & 0.7524$\pm$0.0011 \\
		ResNet-50 Adam DCL-$\infty$-1 & \textbf{2.4108}$\pm$0.0063 & \textbf{0.7613}$\pm$0.0007 & \textbf{0.8442}$\pm$0.0008 & \textbf{0.7617}$\pm$0.0007 \\ 
		\midrule
		DINet Adam                & 2.4406$\pm$0.0058          & 0.7570$\pm$0.0005          & 0.8442$\pm$0.0016          & 0.7534$\pm$0.0005 \\
		DINet Adam GEM            & 2.4456$\pm$0.0037          & 0.7571$\pm$0.0005          & 0.8432$\pm$0.0003           & 0.7540$\pm$0.0006 \\
		DINet Adam DCL-120-1 & \textbf{2.4566}$\pm$0.0007 & \textbf{0.7611}$\pm$0.0011 & \textbf{0.8476}$\pm$0.0008 & \textbf{0.7597}$\pm$0.0008 \\ 
		\bottomrule	
	\end{tabular}
	\label{tbl:osie}
\end{table*}

\subsubsection{Models} 
For saliency prediction, we adopt an improved SALICON saliency model \cite{Huang_ICCV_2015} and DINet \cite{Yang_arXiv_2019} as the baselines. \REVISION{Both the baseline models takes ResNet-50 \cite{He_CVPR_2016} as the backbone architecture.}

For continual learning, we adopt the same models used in GEM, \ie Multiple Layer Perceptron (MLP) and ResNet-18, \REVISION{as well as EfficientNet-B1~\cite{Tan_ICML_2019}} as the backbone architecture for evaluation. EWC \cite{Kirkpatrick_PNAS_2017} and GEM are used for comparison.

For classification, we use the state-of-the-art model without any architecture modifications for a fair evaluation. ResNeXt \cite{Xie_CVPR_2017} (\ie ResNeXt-29), DenseNet \cite{Huang_CVPR_2017} (\ie DenseNet-100-12), \REVISION{and EfficientNet-B1 \cite{Tan_ICML_2019}} are used in the evaluation of CIFAR-10 and CIFAR-100. ResNet (\ie ResNet-101), DenseNet (\ie DenseNet-169-32), \REVISION{and EfficientNet-B1 \cite{Tan_ICML_2019}} are used in the experiments on Tiny ImageNet. ResNet (\ie ResNet-34 and ResNet-50) is used in the experiments on ImageNet.

\subsubsection{Notation} 
For convenience, we notate \textit{model name} + \textit{optimizer name} + \textit{DCL-$\beta_{w}$-$N_{r}$} for key experimental details in \tab \ref{tbl:salicon}, \ref{tbl:osie}, \mbox{\ref{tbl:batch_cifar} and \ref{tbl:tiny_imgnet}}. $\beta_{w}=\infty$ indicates it never resets the references when the initialization of references is finished.

\subsubsection{Evaluation Metrics} 
For saliency prediction, we report the performance using the commonly use metrics, namely area under curve (AUC)~\cite{Borji_TIP_2013,Judd_Report_2012}, shuffled AUC (sAUC) \cite{Borji_TIP_2013,Seo_JOV_2009}, normalized scanpath saliency (NSS)~\cite{Rothenstein_IVC_2008,Itti_ASNN_2003}, and correlation coefficient (CC)~\cite{Ouerhani_ELCVIAs_2004}. Human fixations are used to form the positive set while the points from the saliency map are sampled to form the negative set. With the two sets, an ROC curve of true positive rate v.s. false positive rate would be plotted by thresholding over the saliency map. If the points are sampled in a uniform distribution, it is AUC. If the points are sampled from the human fixation points, it is sAUC. NSS would average the response values at human eye positions in an predicted saliency map which has been normalized to be zero-mean and with unit standard deviation. CC measures the strength of a linear correlation between a ground-truth map and a predicted saliency map. 
For continual learning, we use the same metrics used in GEM \cite{Lopez_NIPS_2017}, \ie accuracy, backward transfer (BWT), and forward transfer (FWT).
For classification, we evaluate the proposed DCL method with top 1 error rate metric on the CIFAR experiments while both top 1 and top 5 error rate are reported in the experiments of Tiny ImageNet and ImageNet. 

\subsubsection{Experimental \& Training Details}
\label{subsec:exp}
In the experiments of saliency prediction, we use Adam \cite{Kingma_arXiv_2014} and RMSProp (RMSP) \cite{Hinton_CO_2012} optimizers. In the setting with Adam, we use $\eta=0.0002$, weight decay 1e-5 while $\eta=0.0005$, weight decay 1e-5 are used within the setting of RMSP. The momentum is set to 0.9 for both Adam and RMSP. $\eta$ would be adjusted along with the epochs, \ie $\eta_{k+1} \leftarrow \eta_{0}\times 0.5^{k-1}$, where $k$ is the current epoch. The batch size is 8 by default. To fairly evaluate the performances of the models, we use cross-dataset validation technique,
\ie the models are trained on the SALICON 2017 training set and evaluated on the SALICON 2017 validation set, and trained on OSIE and evaluated on MIT1003.

We follow the experimental settings in \cite{Lopez_NIPS_2017} for continual learning. Specifically, MNIST-P and MNIST-R have 20 tasks and each task has 1000 examples from 10 different classes. On iCIFAR-100, there are 20 tasks and each task has 2500 examples from 5 different classes. For each task, the first 256 training samples will be selected and stored as the memory on MNIST-P, MNIST-R, and iCIFAR-100. In this work, GEM constraints are concatenated with the DCL constraints by \eqn~(\ref{eqn:constraints_conn}). As the different concepts are learned across the episodes, \ie the tasks, we only consider that the accumulation of gradients would take place in each episode.

In the classification task, we evaluate the models with SGD optimizer \cite{Robbins_AMS_1951}. The hyperparameters are kept by default, \ie weight decay 5e-4, initial $\eta=0.1$, the number of total epochs 300. $\eta$ would be changed to 0.01 and 0.001 at epoch 150 and 225, respectively. For the Tiny ImageNet experiments, we will train the models in 30 epochs with weight decay 1e-4, initial $\eta=0.001$. $\eta$ would be changed to 1e-4 and 1e-5 at epoch 11 and 21, respectively. The momentum is 0.9 by default. The batch size is 128 in the CIFAR experiments and 64 in the Tiny ImageNet experiments. In the ImageNet experiments, we use batch size of 512 to train ResNet-50.

In addition, 
we present the performance of GEM for reference as well. Note that more samples in memory may lead to inconsistent constraints. We set memory size to 1 and reset the memory at each epoch beginning, which is analogous to the case that GEM for continual learning would reset the memory at each beginning of the episode. The implementations of this work are built upon PyTorch\footnote{\url{https://github.com/pytorch/pytorch}} and quadprog package is employed to solve quadratic programming problems.

\subsection{Performance Evaluation}

\subsubsection{Saliency Prediction} 
\label{subsubsec:sal}
\tab~\ref{tbl:salicon} reports the mean and standard deviation (std) of the scores in NSS, sAUC, AUC, and CC over 3 runs on the SALICON 2017 validation set. We can see that the proposed DCL method overall improves the saliency prediction performance \REVISION{with both ResNet-50 and DINet} over all the metrics.
Moreover, \REVISION{small values of stds w.r.t. the proposed DCL method} show that the randomness caused by the stochastic process does not contribute much to the improvement.
\tab \ref{tbl:osie} shows that the proposed DCL method trained on OSIE consistently improves the saliency prediction performance on MIT1003. 

Note that Adam and RMSP optimizer are different algorithms to compute effective step sizes based on the gradients. The consistency of the improvement with both optimizers shows that the proposed DCL method generally works with these optimizers.

\subsubsection{Continual Learning} As introduced in Section \ref{sec:method}, we apply the proposed DCL method to enhance the congruency of the learning process for continual learning. Specifically, following \eqn (\ref{eqn:constraints_conn}), we concatenate the DCL constraints with the GEM constraints \cite{Lopez_NIPS_2017}. As reported in \tab \ref{tbl:conn_rota}, the proposed DCL method improves the classification accuracy by $0.7\%$ on MNIST-R. 
Similarly, the proposed DCL method improves the classification accuracy on MNIST-P as well (see \tab \ref{tbl:conn_perm}). The marginal improvement may results from the difference between MNIST-R and MNIST-P. Permuting the pixels of the digits is harder to recognize than rotating the digits by a fixed angle, and makes the accumulated gradient less informative in terms of leading to the solution.
We observe that shorter effective window size is helpful to improve the accuracy in the continual learning task. This is because the training process of continual learning is one-off and a fast variation could be caused by the limited images with brand new labels in each episode.
The experiments on iCIFAR-100 in \tab \ref{tbl:conn_cifar} confirm this pattern. 
The proposed DCL method with \REVISION{ResNet and} $\beta_{w}=4$ improves the accuracy by $1.25\%$ on iCIFAR-100.

\begin{table}[!t]
	\centering
	\caption
	{
		Performances on MNIST-R in continual learning setting using SGD \cite{Robbins_AMS_1951} as the optimizer. The reported accuracy is in percentage. \textit{MEM} indicates that the constraints of GEM \cite{Lopez_NIPS_2017} are concatenated to use as \eqn (\ref{eqn:constraints_conn}) describes.
	}
	\vspace{-2ex}
	\begin{tabular}{L{23ex} C{9ex} C{9ex} C{9ex}}
		\toprule
    		        & Accuracy & BWT & FWT \\
		\cmidrule(lr){2-2} \cmidrule(lr){3-3} \cmidrule(lr){4-4}
		EWC & 54.61 & -0.2087 &  0.5574 \\
		GEM & 83.35 & -0.0047 &  \textbf{0.6521} \\
		MLP DCL-30-1 MEM & \textbf{84.08} & 0.0094 & 0.6423 \\
		MLP DCL-40-1 MEM & 84.02 & 0.0127 & 0.6351 \\
		MLP DCL-50-1 MEM & 82.77 & \textbf{0.0238} & 0.6111 \\
		\bottomrule	
		\label{tbl:conn_rota}
	\end{tabular}
\end{table}

\begin{table}[!t]
	\centering
	\caption
	{
		Performances on MNIST-P in continual learning setting using SGD as the optimizer.
	}
	\vspace{-2ex}
	\begin{tabular}{L{23ex} C{9ex} C{9ex} C{9ex}}
		\toprule
		            & Accuracy & BWT & FWT \\
		\cmidrule(lr){2-2} \cmidrule(lr){3-3} \cmidrule(lr){4-4}
		EWC & 59.31 & -0.1960 &  -0.0075 \\
		GEM & 82.44 & 0.0224 &  -0.0095 \\
		MLP DCL-3-1 MEM & 82.30 & 0.0248 & \textbf{-0.0038} \\
		MLP DCL-4-1 MEM & \textbf{82.58} & 0.0402 & -0.0092 \\
		MLP DCL-5-1 MEM & 82.10 & \textbf{0.0464} & -0.0095 \\
		\bottomrule	
		\label{tbl:conn_perm}
	\end{tabular}
\end{table}

\begin{table}[!t]
	\centering
	\caption
	{
		Performances on iCIFAR-100 in continual learning setting using SGD as the optimizer. \REVISION{\textit{EffNet} stands for EfficientNet \cite{Tan_ICML_2019}.}
	}
	\vspace{-2ex}
	\begin{tabular}{L{23ex} C{9ex} C{9ex} C{9ex}}
		\toprule
		            & Accuracy & BWT & FWT \\
		\cmidrule(lr){2-2} \cmidrule(lr){3-3} \cmidrule(lr){4-4}
		EWC & 48.33 & -0.1050 &  0.0216 \\
		iCARL & 51.56 & -0.0848 &  0.0000 \\ \midrule
		ResNet GEM & 66.67 & 0.0001 &  0.0108 \\
		ResNet DCL-4-1 MEM & \textbf{67.92} & 0.0063 & 0.0102 \\
		ResNet DCL-8-1 MEM & 67.27 & 0.0104 & 0.0190 \\
		ResNet DCL-12-1 MEM & 66.58 & 0.0089 & 0.0139 \\
		ResNet DCL-20-1 MEM & 66.56 & 0.0030 & 0.0102 \\
		ResNet DCL-24-1 MEM & 64.97 & 0.0082 & \textbf{0.0238} \\
		ResNet DCL-32-1 MEM & 66.10 & \textbf{0.0305} & 0.0176 \\
		ResNet DCL-50-1 MEM & 64.86 & 0.0244 & 0.0125 \\ \midrule
		EffNet GEM & 80.80 & 0.0318 & -0.0050 \\
		EffNet DCL-4-1 MEM & \textbf{81.55} & 0.0383 & -0.0048 \\
		EffNet DCL-8-1 MEM & 80.84 & 0.0367 & \textbf{0.0068} \\
		EffNet DCL-12-1 MEM & 79.45 & 0.0322 & 0.0011 \\
		EffNet DCL-20-1 MEM & 79.33 & 0.0316 & -0.0095 \\
		EffNet DCL-24-1 MEM & 79.05 & 0.0375 & -0.0006 \\
		EffNet DCL-32-1 MEM & 79.97 & 0.0452 & -0.0145 \\
		EffNet DCL-50-1 MEM & 77.87 & \textbf{0.0602} & -0.0101 \\ 
		\bottomrule	
	\end{tabular}
	\label{tbl:conn_cifar}
\end{table}

There are another two metrics for continual learning, \ie forward transfer (FWT) and backward transfer (BWT). FWT is that learning a task is helpful in learning for the future tasks. Particularly, positive FWT is correlated to $n$-shot learning. Since the proposed DCL method utilizes the directional information of the past updates, it has less influence/correlation to FWT. Hence, we will focus on BWT. BWT is the influence that learning a task has on the performance on the previous tasks. Positive BWT is correlated to congruency in the learning process, while large negative BWT is referred as catastrophic forgetting. \tab \ref{tbl:conn_rota} and \ref{tbl:conn_perm} show that the proposed DCL method is useful in improving BWT on MNIST-R and MNIST-P. The BWT of GEM is negative (-0.0047) and the proposed DCL method improves it to 0.0238 on MNIST-R. Similarly, the BWT of GEM is 0.0224 and the proposed DCL method improves it to 0.0464 on MNIST-P. Similarly, in \tab \ref{tbl:conn_cifar}, the proposed DCL method with ResNet improves BWT of GEM from 0.0001 to 0.0305, 
\REVISION{while the proposed DCL method with EfficientNet \cite{Tan_ICML_2019} improves BWT to 0.0602.}

\subsubsection{Classification}
\label{subsubsec:cls}
\tab~\ref{tbl:batch_cifar} reports the top 1 error rates on CIFAR-10 and \mbox{CIFAR-100} with ResNeXt, DenseNet, and EfficientNet. In all cases, the proposed DCL method outperforms the baseline, \ie ResNeXt-29 SGD, DenseNet-100-12 SGD, and EfficientNet-B1 SGD. 
Specifically, the proposed DCL method with ResNeXt decreases the error rate by $0.2\%$ on CIFAR-10 and by $0.28\%$ on CIFAR-100, 
\REVISION{while the proposed DCL method with EfficientNet decreases the error rate by $0.12\%$ on CIFAR-10 and by $0.16\%$ on CIFAR-100.}
Similar improvements can be found in the experiments with DenseNet and this shows that the proposed DCL method is generally able to work with various models. Moreover, it can be seen in \tab \ref{tbl:batch_cifar} that GEM has a higher error rate than the baseline in the experiments with ResNeXt, DenseNet, \REVISION{and EfficientNet}. Because of the dynamical update process in learning, the gradient of the samples in memory does not guarantee that the direction leads to the solution. The direction can be even worse, \REVISION{\eg it is} possible to go in an opposite way to the solution.

\begin{table}[!t]
	\centering
	\caption
	{
		Top 1 error rate (in $\%$) on CIFAR with various models.
	}
	\vspace{-2ex}
	\begin{tabular}{L{34ex} C{11ex} C{11ex}}
		\toprule
		 & CIFAR-10 & CIFAR-100  \\
		\cmidrule(lr){2-2} \cmidrule(lr){3-3}
		ResNeXt-29 SGD & 3.53 & 17.30 \\
		ResNeXt-29 SGD GEM & 7.70 & 32.70 \\
		ResNeXt-29 SGD DCL-$\infty$-1  & \textbf{3.33} & \textbf{17.02} \\ \midrule
		DenseNet-100-12 SGD & 4.54 & 22.88 \\
		DenseNet-100-12 SGD GEM & 6.92 & 33.72 \\
		DenseNet-100-12 SGD DCL-90-1 & \textbf{4.32} & \textbf{22.16} \\ \midrule
		EfficientNet-B1 SGD \cite{Tan_ICML_2019} & 1.91 & 11.81  \\
		EfficientNet-B1 SGD GEM & 3.06 & 19.48 \\
		EfficientNet-B1 SGD DCL-5-1  & \textbf{1.79} & \textbf{11.65}  \\
		\bottomrule	
	\end{tabular}
	\label{tbl:batch_cifar}
\end{table}

\begin{table}[!t]
	\centering
	\caption{Top 1 and top 5 error rate (in $\%$) on the validation set of Tiny ImageNet with various models.}
	\vspace{-2ex}
	\begin{tabular}{L{34ex} C{11ex} C{11ex}}
		\toprule
		\hspace{33ex} & Top 1 error & Top 5 error \\
		\cmidrule(lr){2-2} \cmidrule(lr){3-3}
		ResNet-101 SGD & 17.34 & 4.82  \\
		ResNet-101 SGD GEM & 21.78 & 7.21 \\
		ResNet-101 SGD DCL-60-1  & \textbf{16.89} & \textbf{4.50}  \\ \midrule
		DenseNet-169-32 SGD & 20.24 & 6.11  \\
		DenseNet-169-32 SGD GEM & 26.81 & 9.43 \\
		DenseNet-169-32 SGD DCL-50-1  & \textbf{19.55} & \textbf{6.09}  \\ \midrule
		EfficientNet-B1 SGD & 15.73 & 3.90  \\
		EfficientNet-B1 SGD GEM & 28.74 & 11.31 \\
		EfficientNet-B1 SGD DCL-8-1  & \textbf{15.61} & \textbf{3.75}  \\
		\bottomrule
		\label{tbl:tiny_imgnet}
	\end{tabular}
\end{table}

\begin{figure*}[!t]
	\centering
	\begin{minipage}{1\textwidth}
		\centerline{\includegraphics[width=1.0\linewidth]{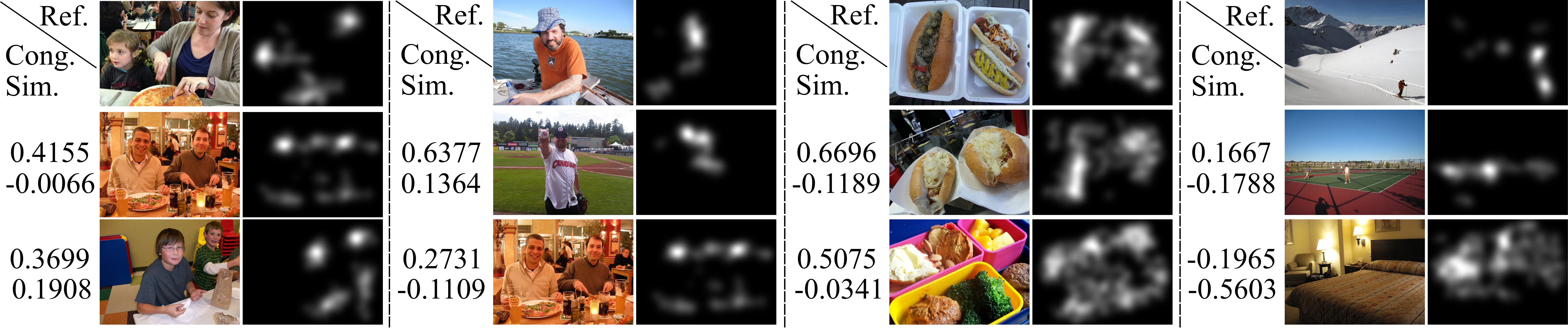}}
	\end{minipage}
	\caption{The congruencies (\textit{Cong.}) generated by the given references (\textit{Ref.}) and samples with the baseline ResNet-50 RMSP in \tab \ref{tbl:osie}. The cosine similarities (\textit{Sim.}) between referred images and sample images are provided for comparison purposes. Source images and the corresponding ground-truths, \ie fixation maps, are displayed along with the congruencies. The first and second block are the results of subset that contains persons in various scenes. The third block is examples of food subset. The rightmost block shows subset with mixed image categories, \ie contain objects of various categories in various scenes.}
	\label{fig:sal_cos}
\end{figure*}

A consistent improvement w.r.t. the proposed DCL method can be found in the experiments on Tiny ImageNet (see \tab \ref{tbl:tiny_imgnet}). 
The proposed DCL method decreases top 1 error rate by $0.45\%$ with ResNet, by $0.69\%$ with DenseNet, \REVISION{and by $0.12\%$ with EfficientNet}. Also, the performance degradation caused by GEM \cite{Lopez_NIPS_2017} can be observed that top 1 error rate generated by GEM with ResNeXt is increased by almost $4.44\%$, comparing to the baseline ResNet.

\tab~\ref{tbl:imgnet} reports the mean and std of 1-crop validation error of ResNet-50 on ImageNet. Comparing to Tiny ImageNet and CIFAR, ImageNet has more categories and more high resolution images. Given such difficulties, the proposed DCL method reduces the mean of top 1 errors by 0.24$\%$ over three runs. In summary, the improvement gained by the proposed DCL method is benefited from the better solution searched by optimizing DCL quadratic programming problem (\ref{eqn:obj}).

\section{Analysis}
\label{sec:aly}

In this section, we first validate the defined congruency by comparing through qualitative examples. Then, an ablation study w.r.t. $\beta$ and $N_{r}$ is presented. Moreover, we provide a congruency analysis in the training processes for the three tasks. In the end, the comparison between training from scratch and fine-tuning, as well as the computational cost are provided.

\subsection{Validity of Congruency Metric}
\label{subsec:validity}

In this subsection, we conduct a sanity check on the validity of the defined congruency. To do this, we consider a simple case where we directly take the gradients (\ie $g_{S_{1}}$ and $g_{S_{2}}$) of two samples (\ie $S_{1}\text{ and }S_{2}$) to compute the corresponding congruency, \ie {\small $\nu=\frac{g_{S_{1}}^{\top}g_{S_{2}}}{\|g_{S_{1}}\| \|g_{S_{2}}\|}$}. 
For comparison purposes, the cosine similarity, $Sim$, between raw image $S_{1}$ and $S_{2}$ is also computed by {\small $Sim=\frac{S_{1}^{\top}S_{2}}{\|S_{1}\| \|S_{2}\|}$}. Note that congruency is semantics-aware, whereas cosine similarity between the two raw images is semantics-blind. This is because the gradients are computed by images and its semantic ground truth, \eg the class label in the classification task or human fixation in the saliency prediction task.

\begin{table}[!t]
	\centering
	\caption{Top 1 and top 5 1-crop validation error (in $\%$) on ImageNet with SGD optimizer. $\beta_{w}=5$ and $N_{r}=1$ are used for ResNet-50 DCL. Within the same experimental settings, ResNet-50 GEM does not converge in this experiment. The mean and std of errors are computed over three runs.}
	\vspace{-2ex}
	\begin{tabular}{L{34ex} C{11ex} C{11ex}}
		\toprule
		\hspace{33ex} & Top 1 error & Top 5 error \\
		\cmidrule(lr){2-2} \cmidrule(lr){3-3}
		ResNet-50 \cite{He_CVPR_2016}  & 24.70 & 7.80  \\ \midrule
		ResNet-50 (reproduced)  & 24.33$\pm$0.08 & 7.30$\pm$0.07  \\
		ResNet-50 DCL & 24.09$\pm$0.03 & 7.23$\pm$0.02  \\
		\bottomrule
		\label{tbl:imgnet}
	\end{tabular}
\end{table}

\begin{figure*}[!t]
		\centerline{\includegraphics[width=0.98\linewidth]{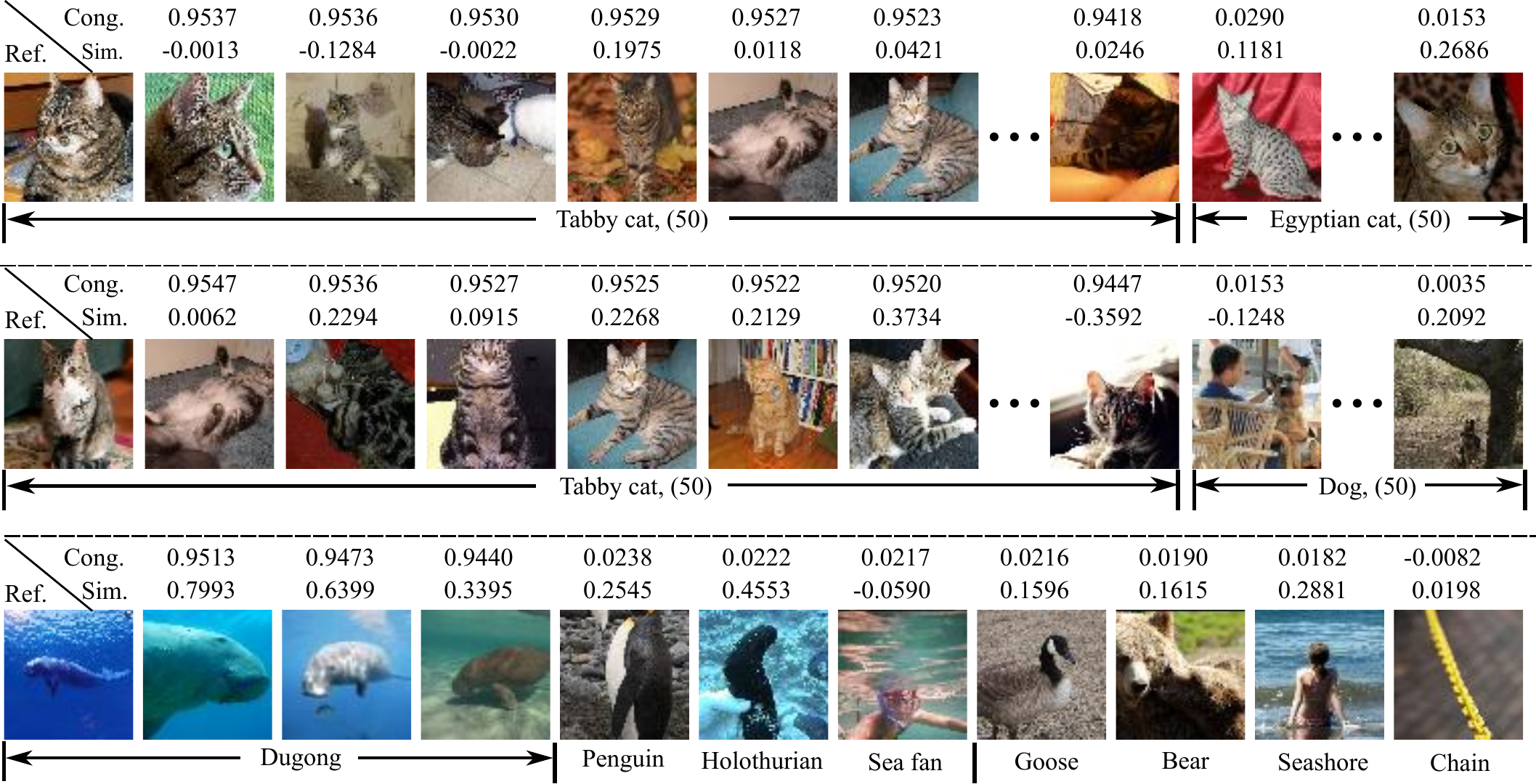}}
	\caption{The congruencies (\textit{Cong.}) generated by the given references (\textit{Ref.}) and samples with the baseline ResNet-101 SGD in \tab~\ref{tbl:tiny_imgnet}. The images with its labels are displayed along with the congruencies. The cosine similarities (\textit{Sim.}) between referred images and sample images are provided for comparison purposes. The first block is the results of the intra-similar-class subset consisting of images of tabby cat and Egyptian cat. The middle block is the results of the inter-class subset consisting of images of tabby cat and German shepherd dog. The value in bracket indicates number of images. The bottom block is the results of images of various labels.}
	\label{fig:cls_cos}
	\vspace{-1.5ex}
\end{figure*}

For the analysis in the saliency prediction task,
we sample 3 subsets,
where 20 training samples w.r.t. person, 20 training samples w.r.t. food, and 20 training samples w.r.t. various scenes and categories were sampled from SALICON. 
For the analysis in the classification task,
3 subsets were sampled from Tiny ImageNet,
which comprised of 100 images of tabby cat and Egyptian cat to form a intra-similar-class subset, 100 images of tabby cat and German shepherd dog to form a inter-class subset, and 50 images from various classes to form a mixed subset. In this way, we can analyze the correlation between the samples in terms of congruency.
With these subsets, we use the baselines, \ie ResNet-50 for saliency prediction and ResNet-101 for classification, to yield the samples gradients without updating the model.

Figure~\ref{fig:sal_cos} demonstrates the congruencies w.r.t. the references and various samples (image + fixation map). In contrast to the deterministic nature in the classification task, saliency is context-related and semantics-based. It implies that the same objects within two different scenarios may have different saliency labels. Hence, we select the examples of same/similar objects for this experiment. In Figure~\ref{fig:sal_cos}, the first and second block on left are based on the person subset within various scenarios. The first block consists of the images of person and dining table. Taking the first row sample as reference, the sample in the second row has higher congruency (0.4155) when compared to bottom row sample (0.3699). Although all the fixation maps of all the samples are different, pizza in the second image is more similar to the reference image whereas food in the bottom sample is inconspicuous. In the second block, both the portrait of the fisher (reference) and the portrait of the baseball player (second sample) are similar in terms of the layout, comparing to the persons in dining room (third sample). Their fixation maps are similar as well. 

The congruency of the reference and second sample (0.6377) are higher than the one of the reference and third sample (0.2731). 
In the third block, the image of the reference is three hot dogs and its fixation maps is similar to the fixation maps of the second sample. 
The two hog dog samples have similar visual appearance and layout of fixations to yield a higher congruency (0.6696). In contrast, third sample is different from the reference in terms of visual appearance and layout of fixations, which yields a lower congruency (0.5075).
The rightmost block shows an interesting fact that two outdoor samples yield a positive congruency $0.1667$, whereas the outdoor reference and the indoor sample yield a negative congruency $-0.1965$. One possible reason is that the fixation pattern are different between the reference and the bottom indoor sample. In addition, the visual appearance like illumination may be the another factor causing such the discrepancy.

For classification, Figure \ref{fig:cls_cos} shows the congruencies w.r.t. the references and given samples in each subset. 
In all cases, we first observe that images with same genuine class as references yield high congruency, \ie larger than 0.94 for all cases.
These show that the gradients of the same labels are similar in the direction of the updates. Another observation is that the congruency of pairs with different labels are significantly smaller than the matched label counterpart. In Figure \ref{fig:cls_cos}, the congruencies of the reference (Tabby cat) and Egyptian cat images are below $0.03$, while the congruencies of the reference and German shepherd dog images are below $0.016$ in the middle block. 
These demonstrate that the gradients of inter-class samples are nearly perpendicular to each other. The reference of class `Dugong' has positive congruencies w.r.t. all the images that fall in the category of animal, except for the image of chain, which falls into a non-animal category. Last but not least, 
given the images with different labels,
similar visual appearance would lead to relatively higher congruency. For example, the congruencies between tabby cat and Egyptian cat are overall higher than the ones between tabby cat and German shepherd dog. In summary, the labels are an important factor to influence the direction of the gradient in the classification task. Secondly, the visual appearance is another factor for congruency.

In contrast with congruency, cosine similarity between two raw images make less sense in the context of a specific task. For example, two similar dining scenes in the first column in Figure \ref{fig:sal_cos} yield a negative cosine similarity $-0.0066$ in the saliency prediction task. Similarly, the first two cat images in the first row in Figure  \ref{fig:cls_cos}, which are cast to the same category, yield a negative cosine similarity $-0.0013$. The negative cosine similarity between two images with the same or similar ground truth are counterintuitive. It results from the fact that cosine similarity between two images only focuses on the difference between two sets of pixels and ignores the semantics associated to the pixels.

\subsection{Ablation Study}
\label{subsec:ablation}
In this subsection, we study the effects of effective window size $\beta_{w}$ and reference number $N_{r}$ on saliency prediction task (with SALICON) and classification task (with Tiny ImageNet).

In the saliency prediction experiment, \fig \ref{fig:beta_sal} shows the curve of sAUC vs. $\beta_{w}$ based on DCL-$\beta_{w}$-1, while \fig \ref{fig:refnum_sal} shows the curve of sAUC vs. $N_{r}$ based on DCL-$\infty$-$N_{r}$. Note that for the reference number study, the training process on SALICON consists of 12500 iterations so $\beta_{w}\ge 12500$ is equivalent to $\beta_{w} = \infty$, which means that it never resets the references in the whole learning process. It can be observed that different $\beta_{w}$ and $N_{r}$ yield relatively similar performance in sAUC. This aligns with the nature of saliency prediction, where it maps features to the salient label and the non-salient label. The features w.r.t. the salient label are highly related to each other so $\beta_{w}$ and $N_{r}$ would pervasively help the learning process make use of congruency.

\begin{figure}[!t]
	\centering
	\subfloat[sAUC vs. $\beta_{w}$]       {\includegraphics[width=0.48\linewidth]{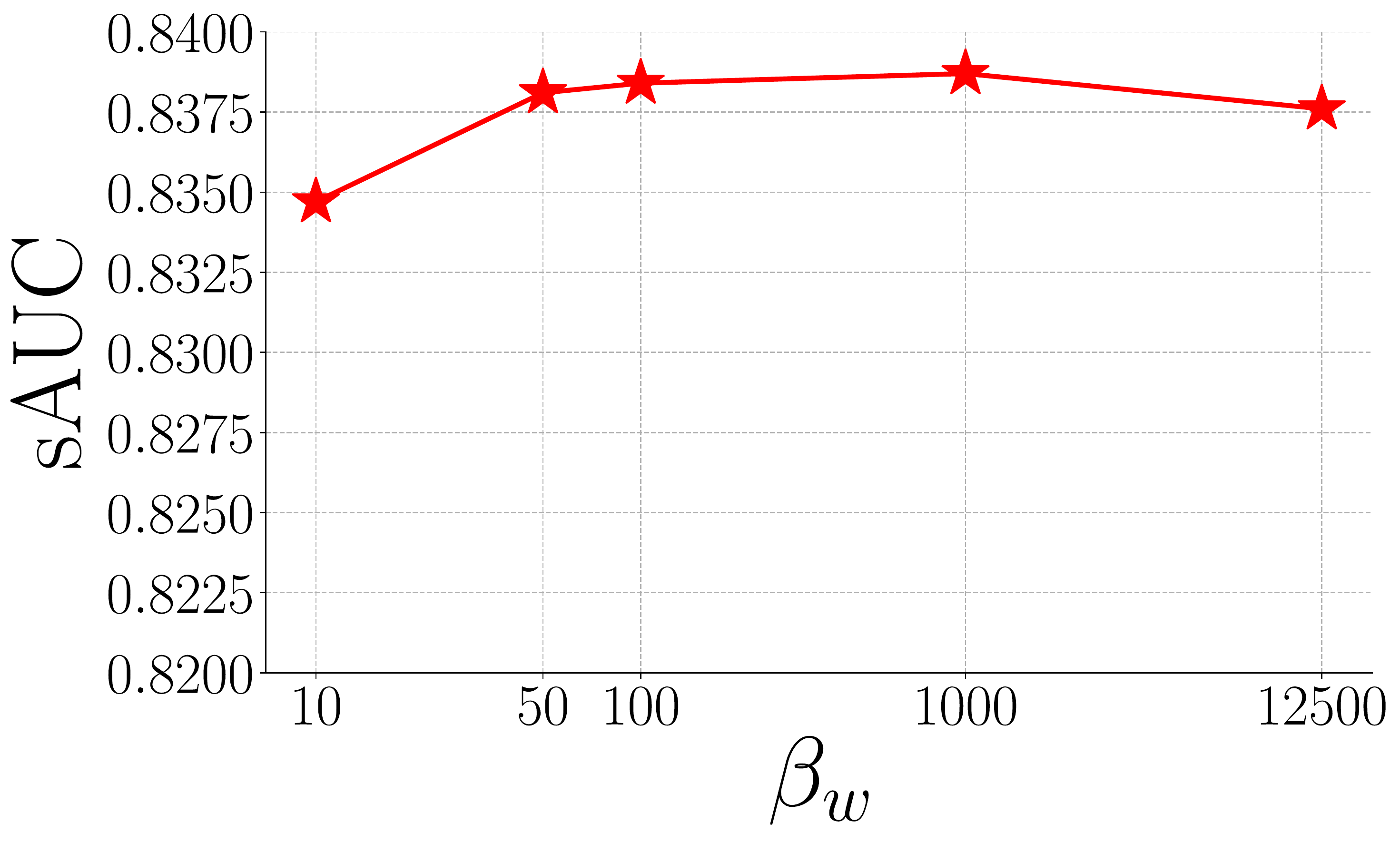} \label{fig:beta_sal}}   \hfill
	\subfloat[sAUC vs. $N_{r}$]           {\includegraphics[width=0.48\linewidth]{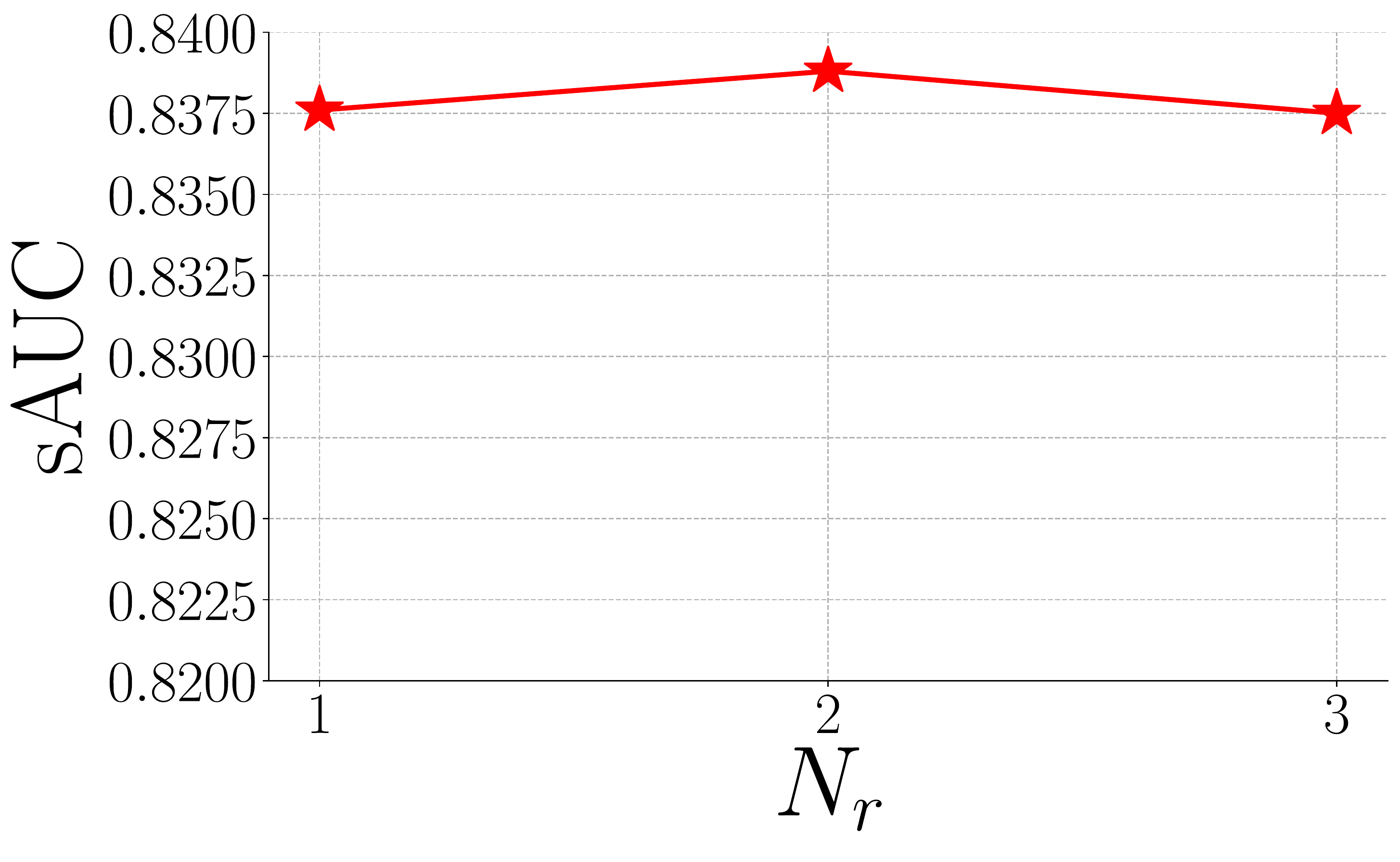} \label{fig:refnum_sal}} \\
	\subfloat[Top 1 error vs. $\beta_{w}$]{\includegraphics[width=0.48\linewidth]{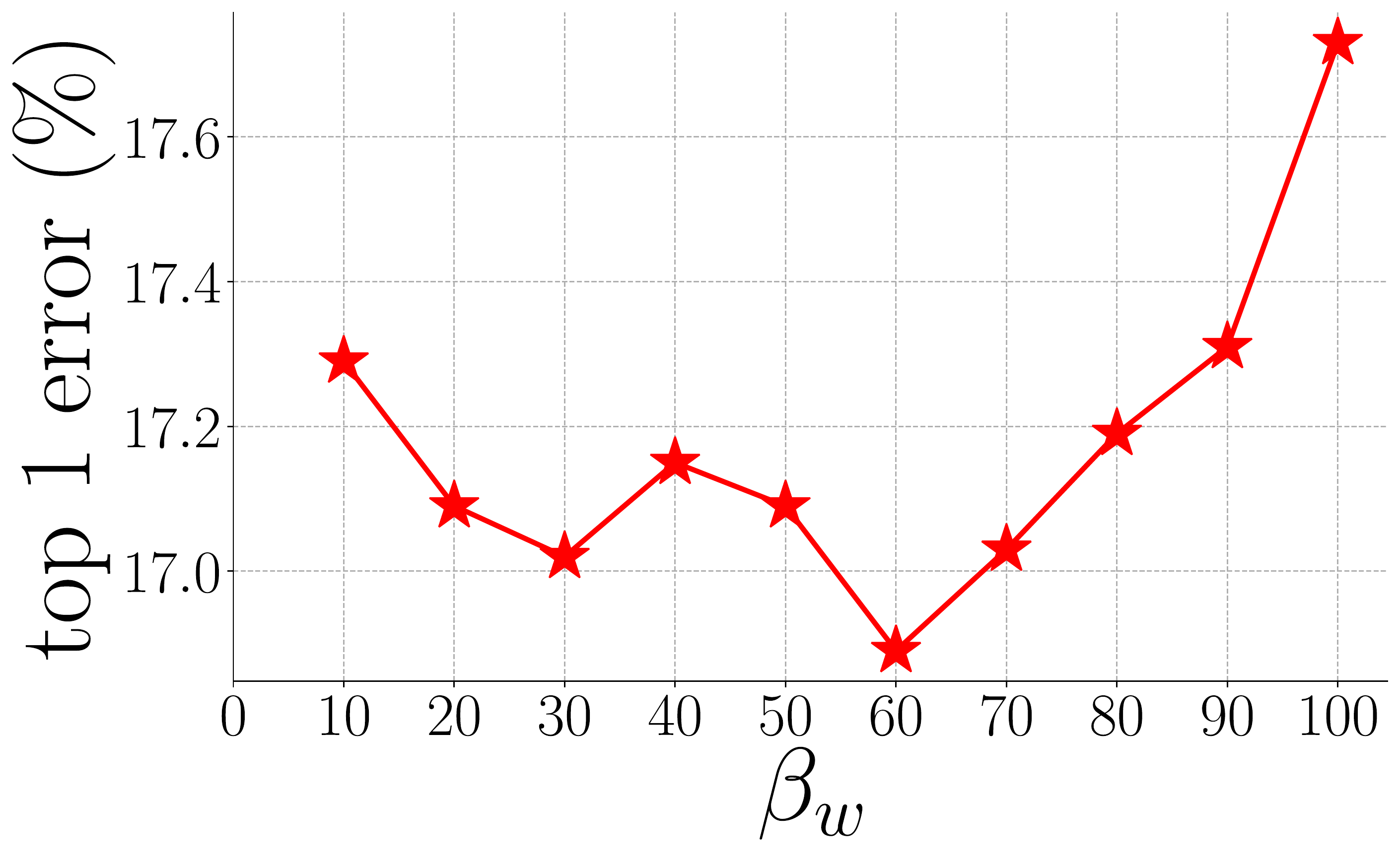}         \label{fig:beta}}       \hfill
	\subfloat[Top 1 error vs. $N_{r}$]    {\includegraphics[width=0.48\linewidth]{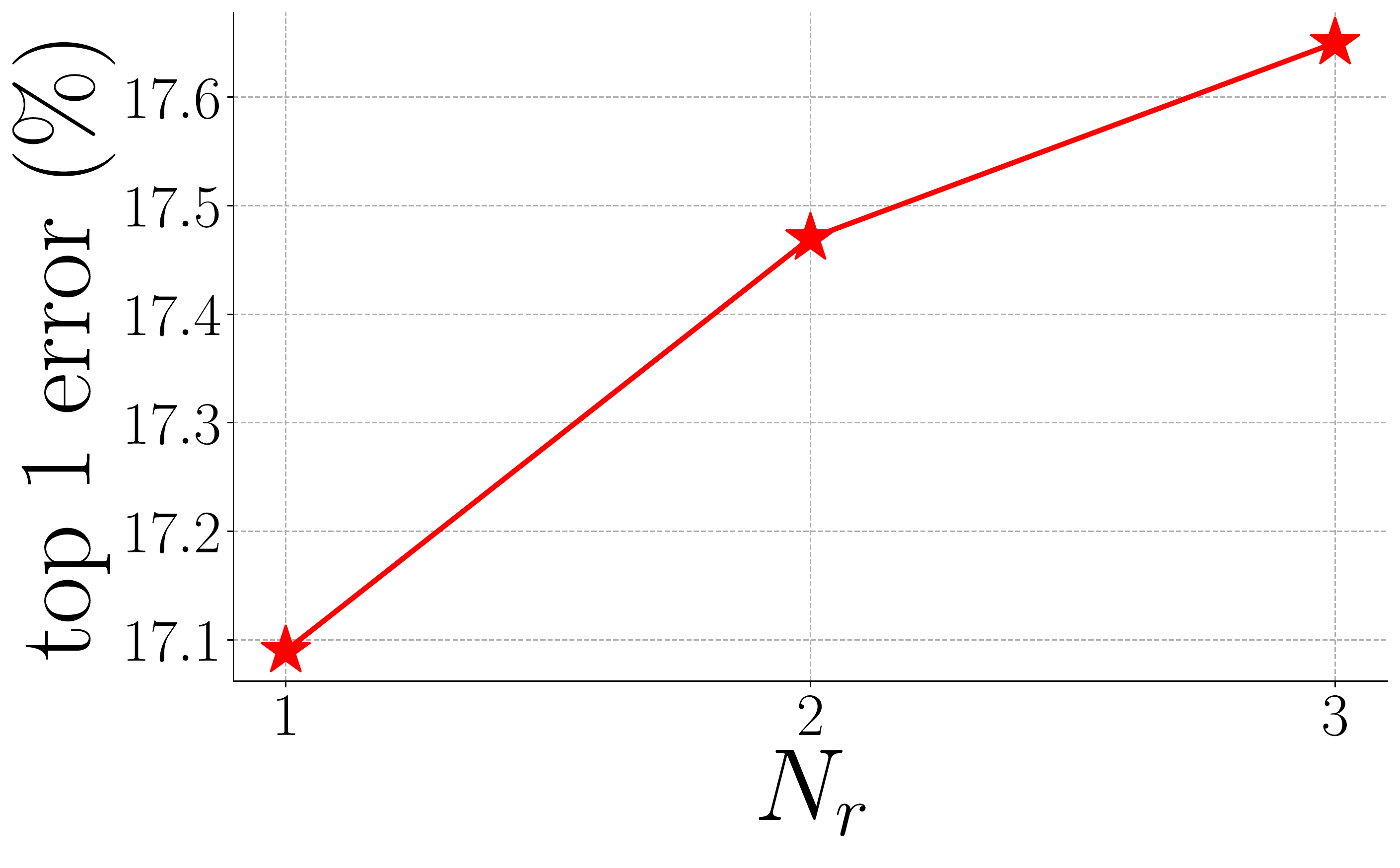}         \label{fig:refnum}} 
	\caption{Ablation study w.r.t. effective window size {\small $\beta_{w}$} and references number {\small $N_{r}$}. (a) and (b) are the experimental results on the SALICON validation set, while (c) and (d) are with the Tiny ImageNet validation set. {\small $\beta_{w}=\infty$} in (b) and {\small $\beta_{w}=50$} in (d).
	}
	\label{fig:ablation}
\end{figure}

\begin{figure*}[!t]
	\captionsetup{width=0.19\textwidth}
	\centering
	\subfloat[Stochastical permutations on OSIE.]{    \includegraphics[width=.23\textwidth]{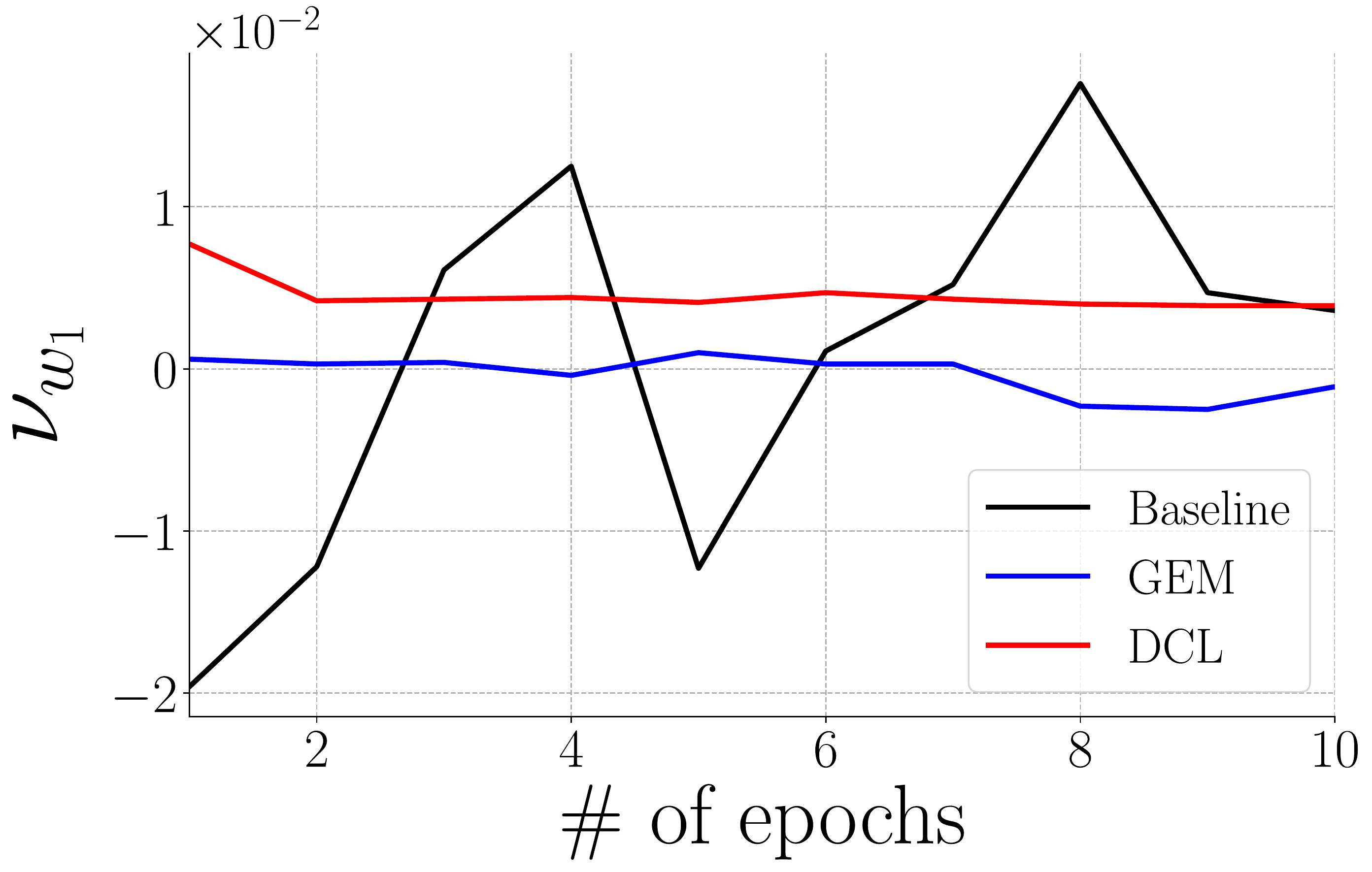}                \label{fig:cong_osie_stoch}      } \hfill
	\subfloat[Stochastical permutations on SALICON.]{ \includegraphics[width=.23\textwidth]{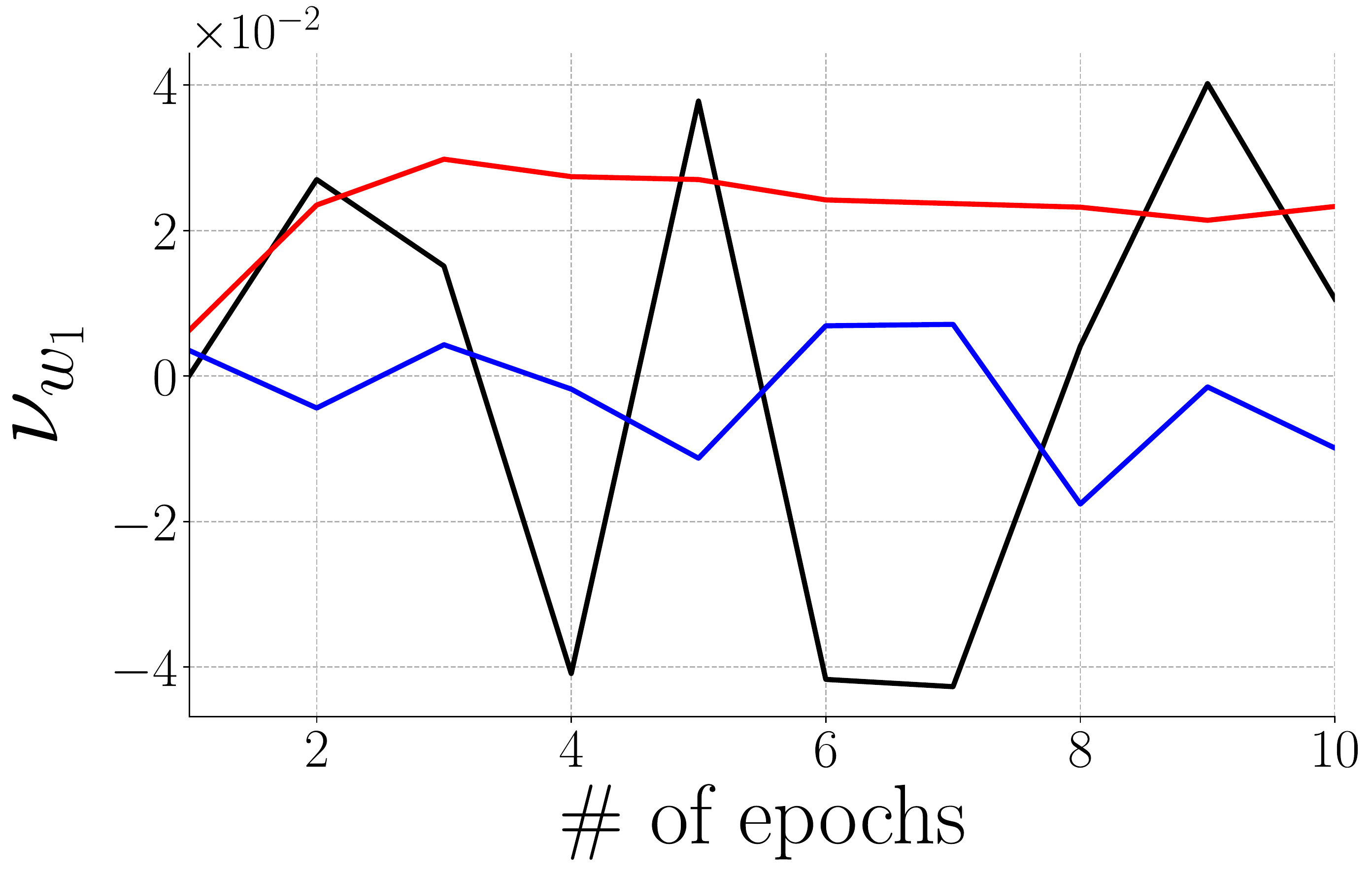}             \label{fig:cong_salicon_stoch}	  } \hfill
	\subfloat[Fixed permutations on OSIE.]{           \includegraphics[width=.23\textwidth]{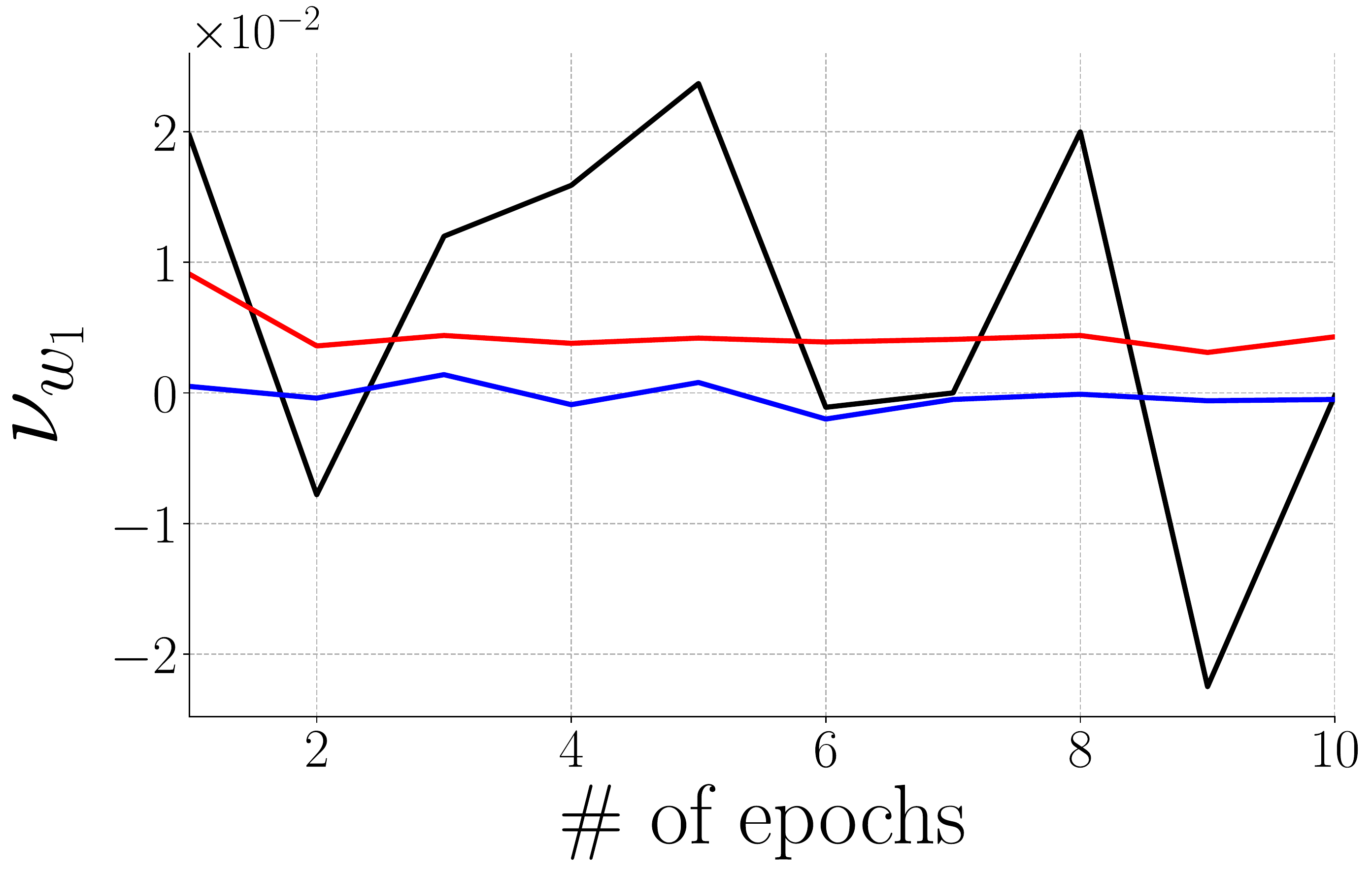}    \label{fig:cong_osie_nonstoch}   } \hfill
	\subfloat[Fixed permutations on SALICON.]{        \includegraphics[width=.23\textwidth]{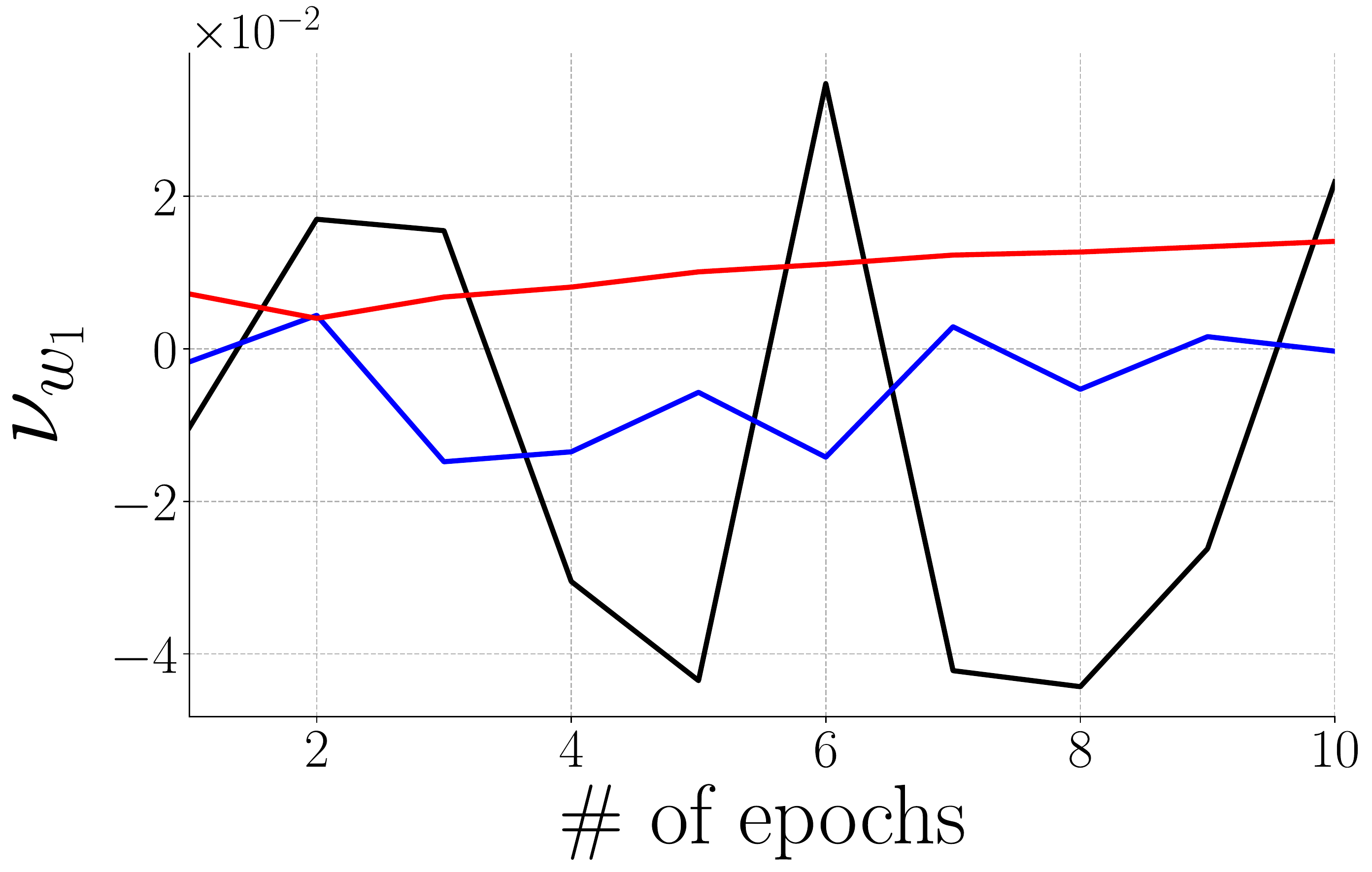} \label{fig:cong_salicon_nonstoch}}
	\captionsetup{width=1.0\textwidth}
	\caption{Congruencies along the epochs in saliency prediction learning, as defined in \eqn (\ref{eqn:cong}). The samples sequences for training models are determined by independent stochastic processes in \fig \ref{fig:cong_osie_stoch} and \ref{fig:cong_salicon_stoch}, while the permuted samples sequences are pre-determined and fixed for all models in \fig \ref{fig:cong_osie_nonstoch} and \ref{fig:cong_salicon_nonstoch}. The baseline, GEM, and DCL are ResNeXt-29 SGD, ResNeXt-29 SGD GEM, and ResNeXt-29 SGD DCL-$\infty$-1 (see \tab \ref{tbl:batch_cifar}), respectively.}
	\label{fig:cong_sal}
\end{figure*}

In the classification experiment, \fig \ref{fig:beta} shows the curve of top 1 error vs. $\beta_{w}$ based on DCL-$\beta_{w}$-1. We can see that only a small $\beta_{w}$ range, \ie between 20 and 70, yields relatively lower errors than the other $\beta_{w}$ values. On the other hand, \fig \ref{fig:refnum} shows that only using one reference is helpful in the learning process for classification. Different from the pattern shown in \fig \ref{fig:beta_sal} and \ref{fig:refnum_sal}, where the curves are relatively flat, the pattern in \fig \ref{fig:beta} and \ref{fig:refnum} implies that the gradients in the learning process for classification are dramatically changed in angle to satisfy the 200-way prediction. Hence, the learning process for classification does not prefer large {\small $\beta_{w}$} and {\small $N_{r}$}.

In summary, the nature of the task should be taken into account to determine the values of {\small $\beta_{w}$} and {\small $N_{r}$}. Both parameters can lead to significantly different performances if the task-specific semantics in the data are highly varying. Specifically, as {\small $N_{r}$} increases, the feasible region for searching a local minimum possibly becomes narrow as shown in \fig~\ref{fig:illu_grad}. If the local minimum is not in the narrowed feasible region, large {\small $N_{r}$} could lead to a slower convergence or even a divergence.

\subsection{Congruency Analysis}
\label{subsec:cong_analysis}

In this section, we focus on analyzing the patterns of congruency on saliency prediction, continual learning, and classification. 
For saliency prediction and classification, to study how the gradients of GEM and the proposed DCL method vary in the training process, we compute the congruency of each epoch in the training process by \eqn~(\ref{eqn:cong}). Specifically, it turns to be $\nu_{w_{es}\rightarrow w_{ee}|w_{1}}$, 
where $w_{es}$ and $w_{ee}$ is the weights at the first and last iteration of each epoch, respectively.
Here, $w_{0}$ is randomly initialized and $w_{1}$ represents the starting point of the training. 
For convenience, we simplify the notation of the average congruency $\nu_{w_{es}\rightarrow w_{ee}|w_{1}}$ for each epoch as $\nu_{w_{1}}$. Correspondingly, we define the average magnitude $d_{w_{1}}$ of the accumulated gradients over the iterations in an epoch, \ie
\begin{align}
\resizebox{0.89\linewidth}{!}{$
d_{w_{1}} =  \frac{1}{sub(w_{ee})-sub(w_{es})+1} \sum\limits_{i=sub(w_{es})}^{sub(w_{ee})}\|w_{i}-w_{1}\|_{2}
$}
\label{eqn:mag}
\end{align}
where $d_{w_{1}}$ indicates the measurement of magnitudes of the accumulated gradients takes $w_{1}$ as the reference.
\REVISION{
Note that \eqn (\ref{eqn:mag}) does work not only with an absolute reference (\eg $w_{1}$), but can work with a relative reference (\eg $w_{i-1}$) as well. Specifically, we can substitute $w_{i-1}$ for $w_{1}$ in \eqn (\ref{eqn:mag}) to compute $d_{w_{i-1}}$.
\eqn (\ref{eqn:mag}) can allow us to peek into the convergence process in the high dimensional weight space, where it is difficult to visualize the convergence. By taking an absolute reference (\eg $w_1$) as the reference, it is able to provide an overview about how the learning process converges to the local minimum from the fixed reference, while a relative reference (\eg $w_{i-1}$) is helpful to reveal the iterative pattern.
}

For the experiments of continual learning, since GEM uses the samples in memory to regulate the optimization direction, we follow this setting to check the effect of the proposed DCL method on the cosine similarities between the corrected gradient and the gradients generated by the samples in memory for analysis. More concretely, the average cosine similarity is defined as {\small $\frac{1}{\|N_{it}\|} \frac{1}{\|\mathcal{M}\|} \sum_{i=1}^{N_{it}}\sum_{s\in \mathcal{M}}^{} \cos(g_{s}, g_{GEMi})$}, where $N_{it}$ is the number of iterations in an epoch and $g_{GEMi}$ is the gradient of GEM at $i$-th iteration.

\subsubsection{Saliency Prediction} 
We analyze the models from \tab~\ref{tbl:osie}, \ie ResNet-50 Adam (baseline), ResNet-50 Adam GEM (GEM), and ResNet-50 Adam DCL-$\infty$-1 (DCL). As the training samples sequence is affected by the stochastic process and it may be a factor influencing the proposed DCL method, we present two settings, \ie within the independent stochastic process and within the same stochastic process amid the training of the three models, to gauge the influence of the stochastic process on the proposed DCL method. Specifically, \fig \ref{fig:cong_osie_stoch} and \ref{fig:cong_salicon_stoch} are the curves with the independent stochastic process on OSIE and SALICON, respectively, whereas the same permuted samples sequences are used in the trainings of the three models in \fig \ref{fig:cong_osie_nonstoch} and \ref{fig:cong_salicon_nonstoch}. We can see that they are similar in pattern and it implies that the permutation of the training samples has less influence on the proposed DCL method. Moreover, the proposed DCL method consistently gives rise to a more congruent learning process than the baseline and GEM.

\begin{figure}[!t]
	\centering
	\subfloat[MNIST-R]{ \label{fig:cong_mnistr}	\includegraphics[width=0.48\columnwidth]{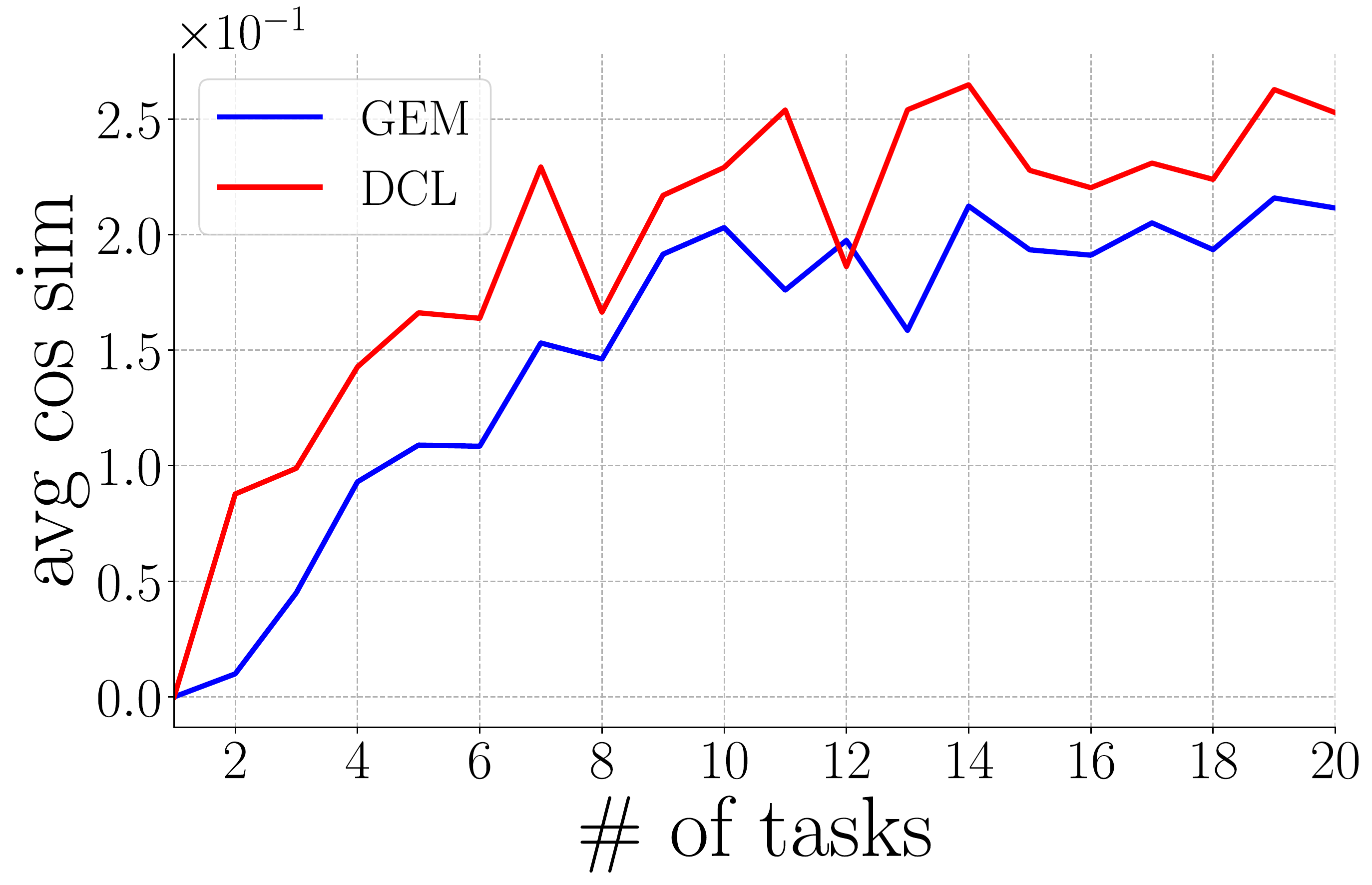}}
	\subfloat[MNIST-P]{ \label{fig:cong_mnistp}	\includegraphics[width=0.48\columnwidth]{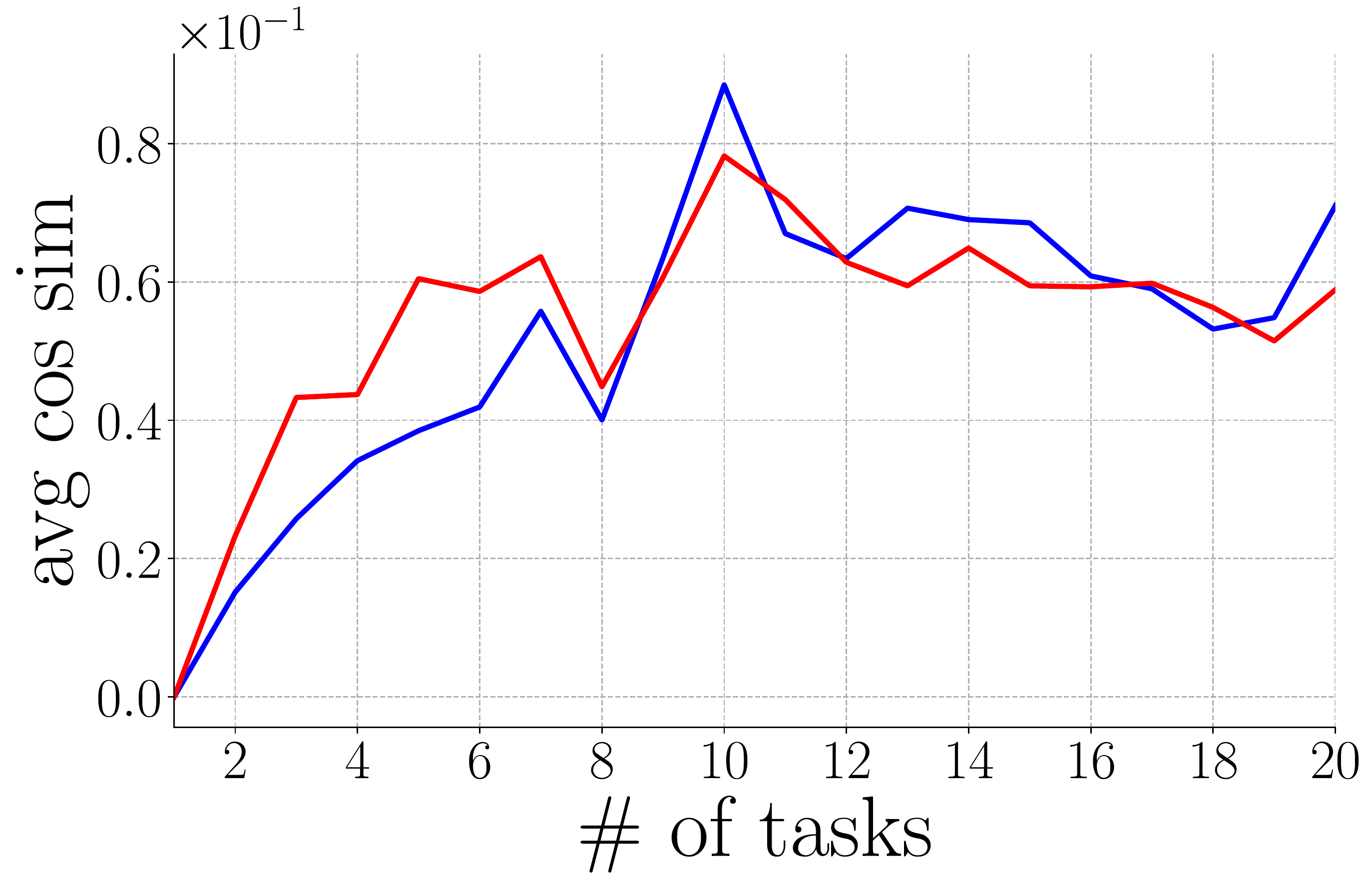}} \\
	\subfloat[iCIFAR-100]{ \label{fig:cong_icifar100}	\includegraphics[width=0.48\columnwidth]{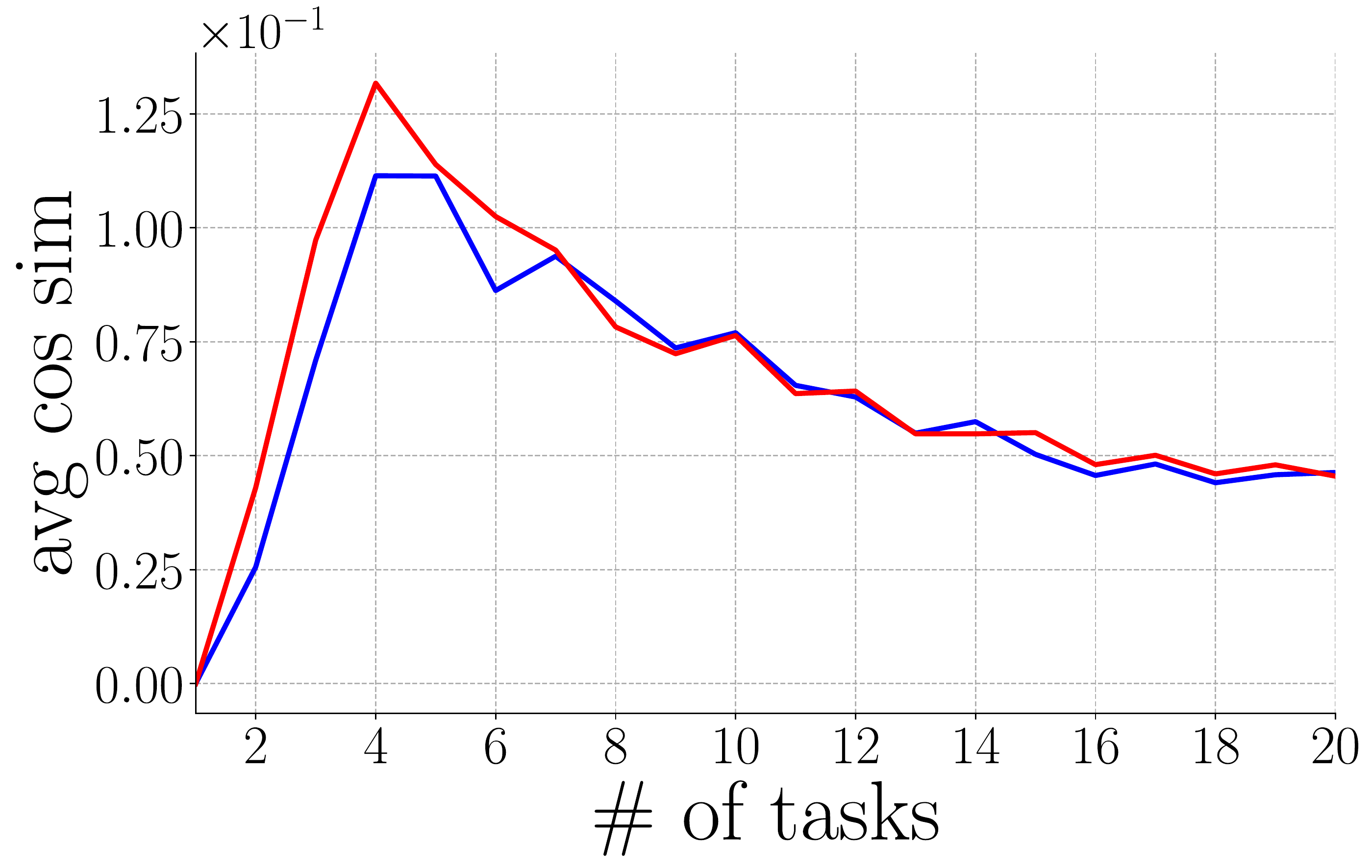}}
	\vspace{-2ex}
	\caption{The average congruencies over epochs in training on the three datasets for continual learning.}
	\label{fig:cong_continual}
\end{figure}

\subsubsection{Continual Learning}
\fig \ref{fig:cong_mnistr},\ref{fig:cong_mnistp}, and \ref{fig:cong_icifar100} shows the congruency along the tasks which are the episodes to learn the new classes. It can be seen that the proposed DCL method significantly enhances the cosine similarities between the gradients for updates and the gradients generated by the samples in memory on MNIST-R. There are improvements made by the proposed DCL method on early tasks on MNIST-P. Moreover, an overall consistent improvement of the proposed DCL method can be observed on iCIFAR-100.
Overall, the corrected updates for the model are computed by proposed DCL method to be more congruent with its previous updates. This consistently results in the improvement of BWT in \tab \ref{tbl:conn_rota}, \ref{tbl:conn_perm}, and \ref{tbl:conn_cifar}.


\begin{figure}[!t]
	\captionsetup{width=0.32\columnwidth}
	\centering
	\subfloat[Avg. cong. on CIFAR-10.]{ \label{fig:cong_cifar10}	\includegraphics[width=0.32\columnwidth]{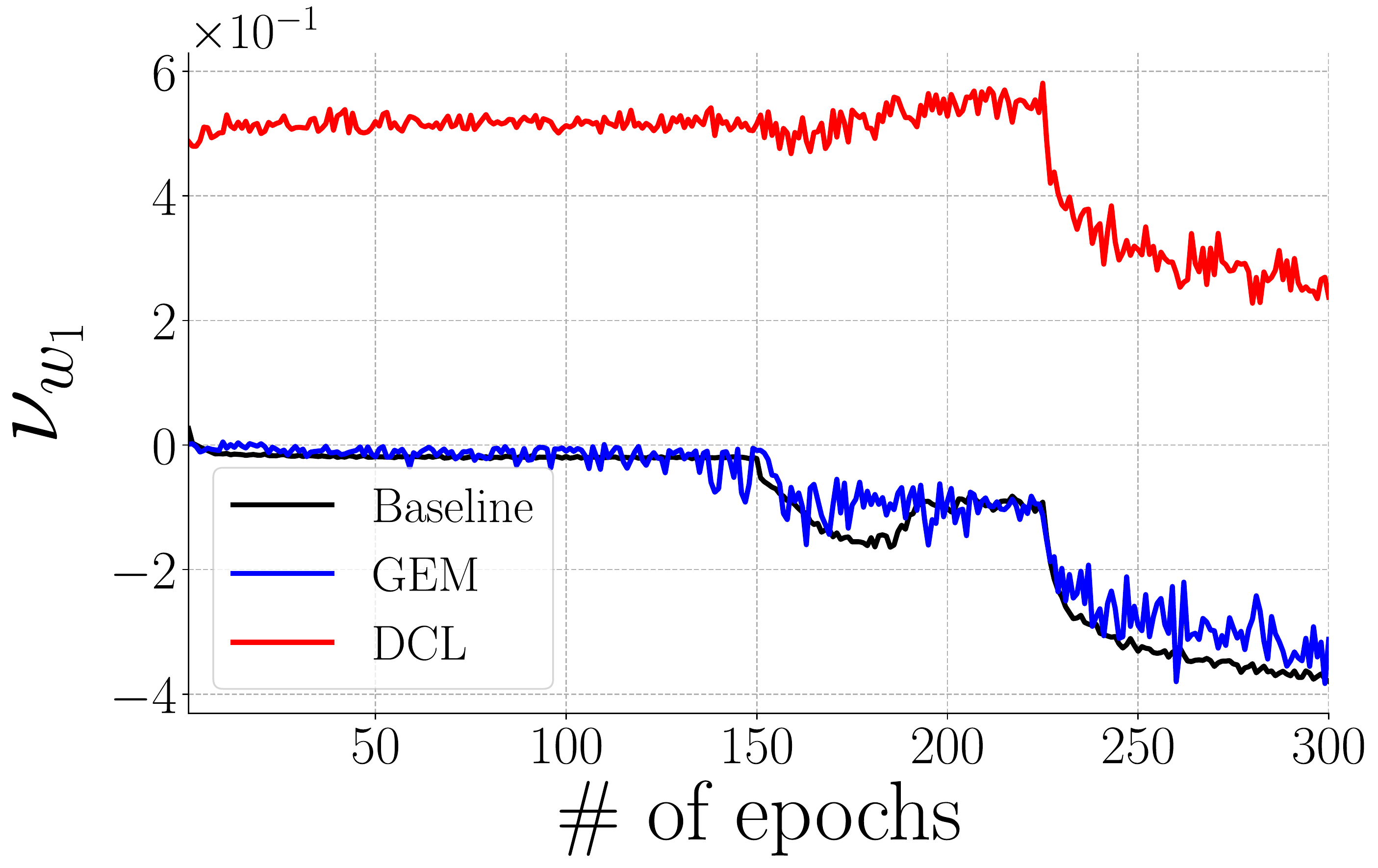}	}
	\subfloat[$d_{w_{1}}$ on CIFAR-10.]{ \label{fig:mag_cifar10} 	\includegraphics[width=0.32\columnwidth]{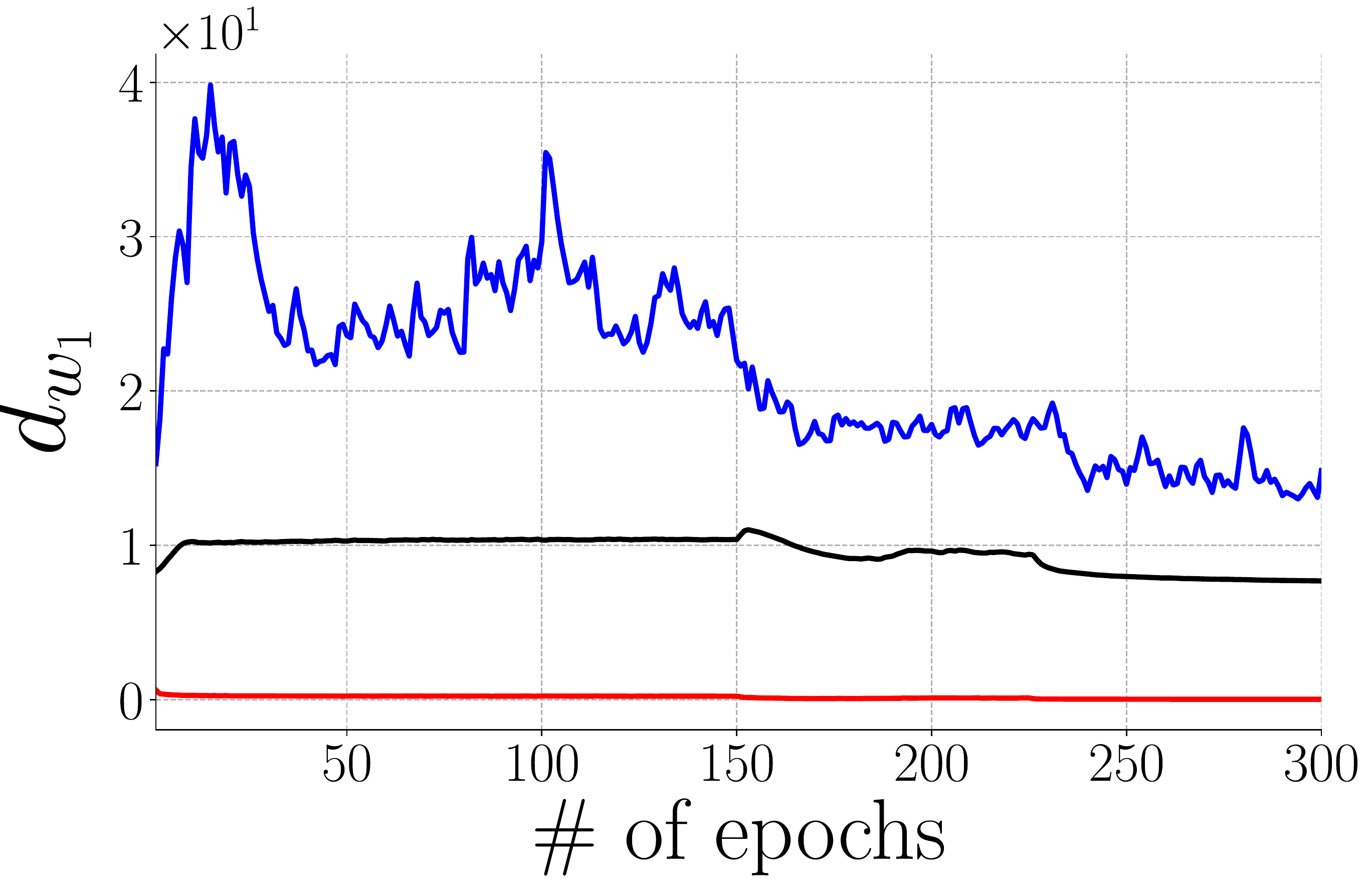}	} 
	\subfloat[$d_{w_{i-1}}$ on CIFAR-10.]{ \label{fig:mag_cifar10_rel} 	\includegraphics[width=0.32\columnwidth]{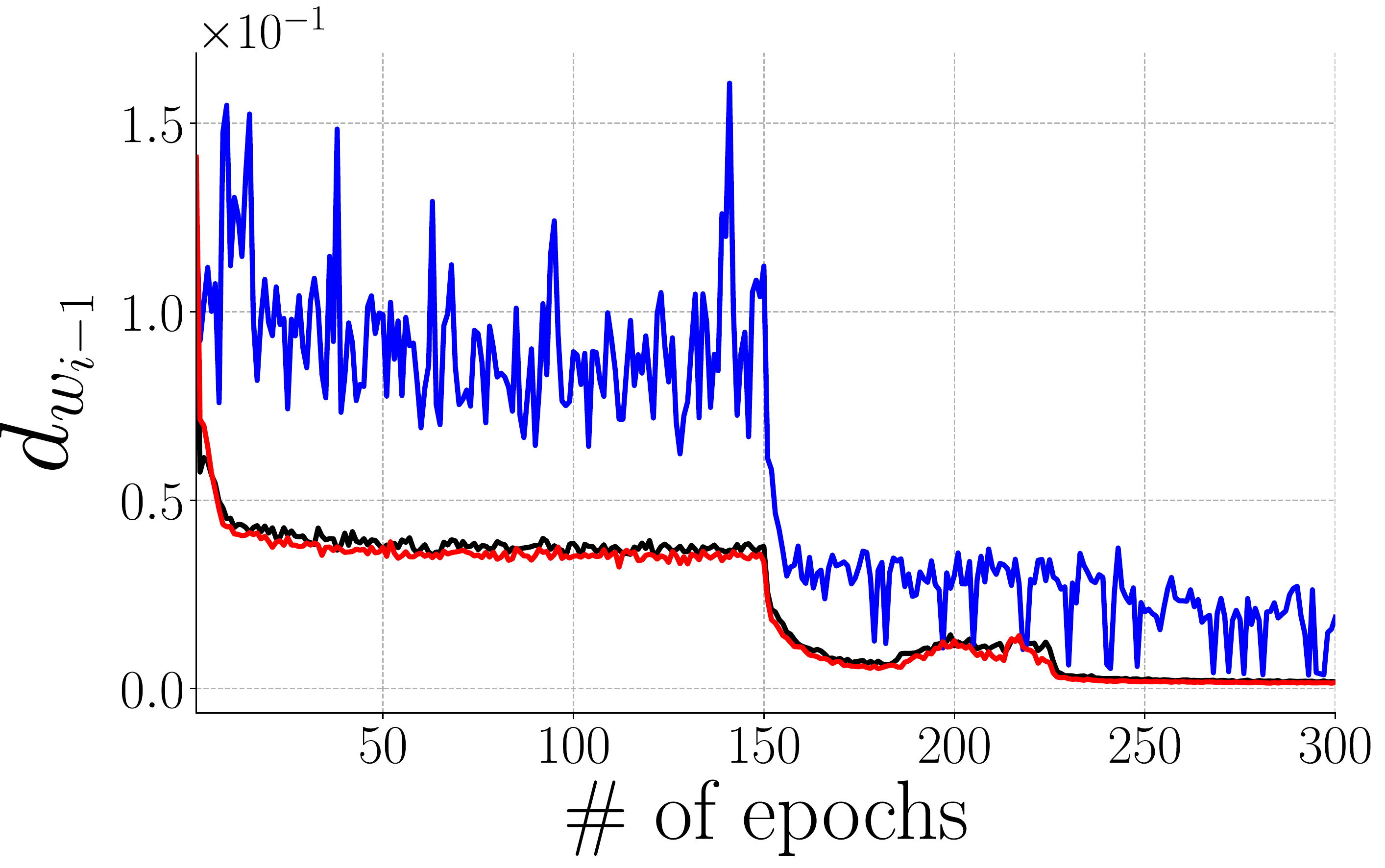}	} 
	\\
	\subfloat[Avg. cong. on CIFAR-100.]{ \label{fig:cong_cifar100}	\includegraphics[width=0.32\columnwidth]{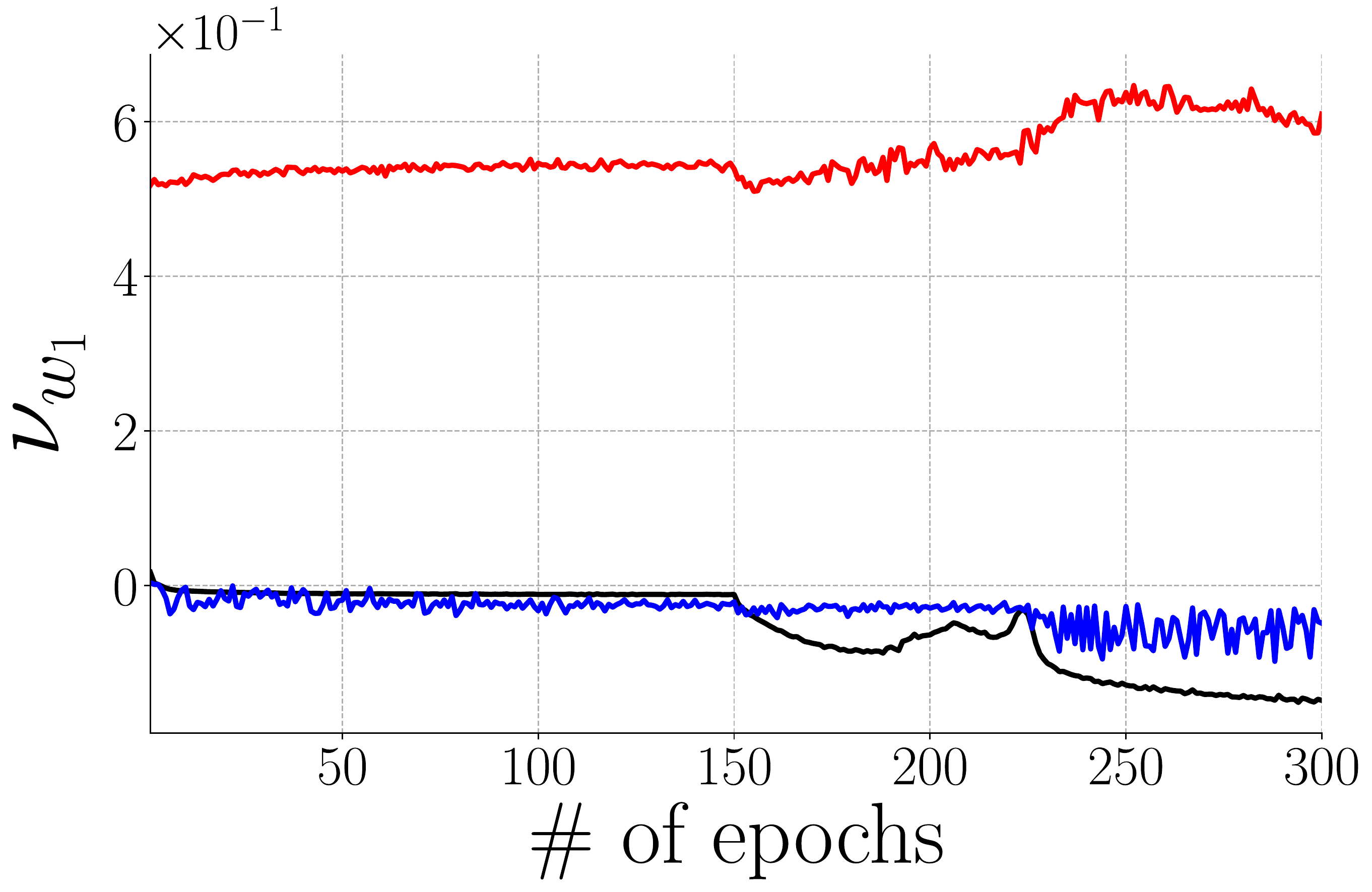}	} 
	\subfloat[$d_{w_{1}}$ on CIFAR-100.]{ \label{fig:mag_cifar100}	\includegraphics[width=0.32\columnwidth]{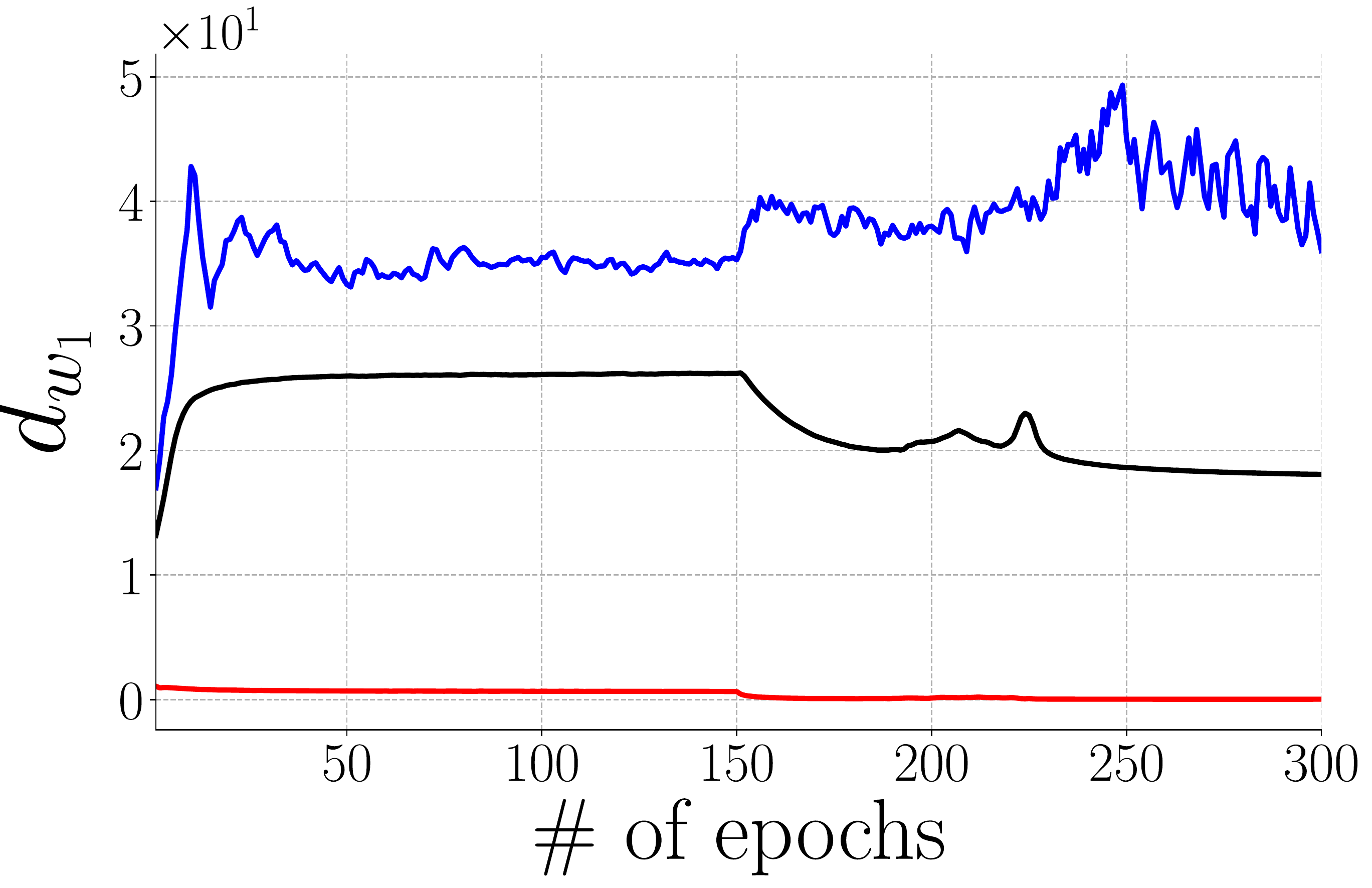}	} 
	\subfloat[$d_{w_{i-1}}$ on CIFAR-100.]{ \label{fig:mag_cifar100_rel} 	\includegraphics[width=0.32\columnwidth]{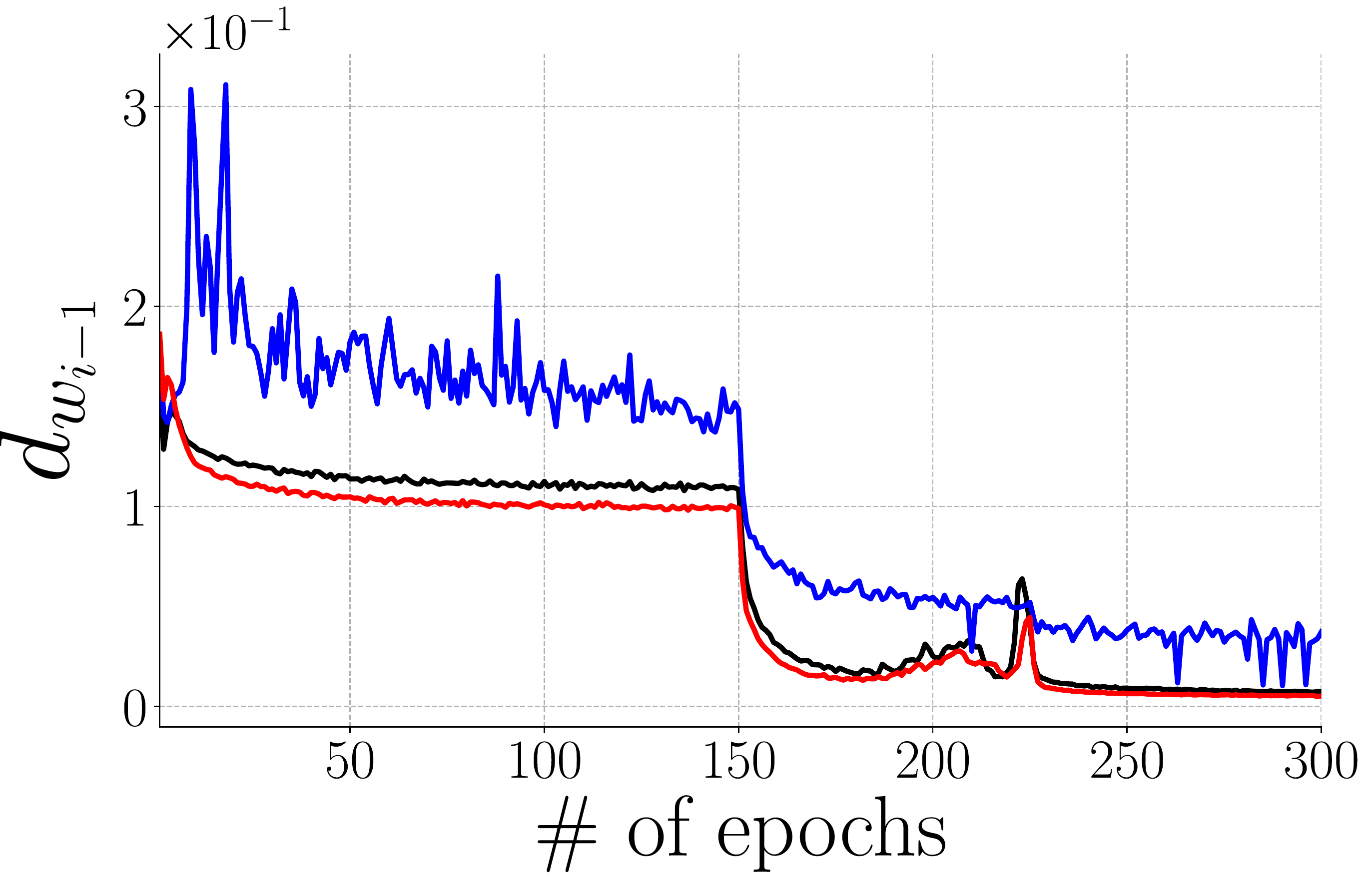}	} 
	\\
	\subfloat[Avg. cong. on Tiny ImageNet.]{ \label{fig:cong_timgnet}	\includegraphics[width=0.32\columnwidth]{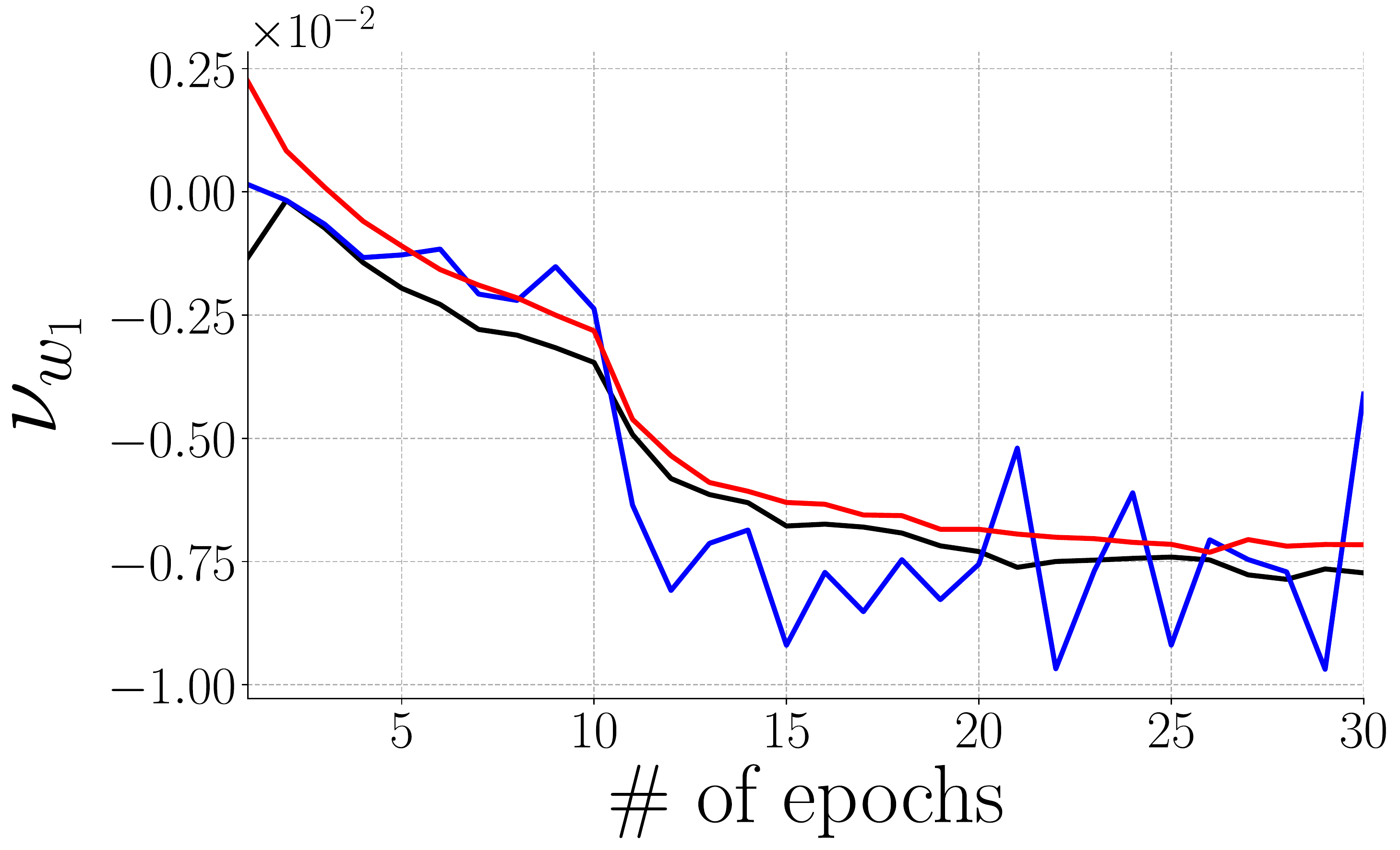}	}
	\subfloat[$d_{w_{1}}$ on Tiny ImageNet.]{ \label{fig:mag_timg}	\includegraphics[width=0.32\columnwidth]{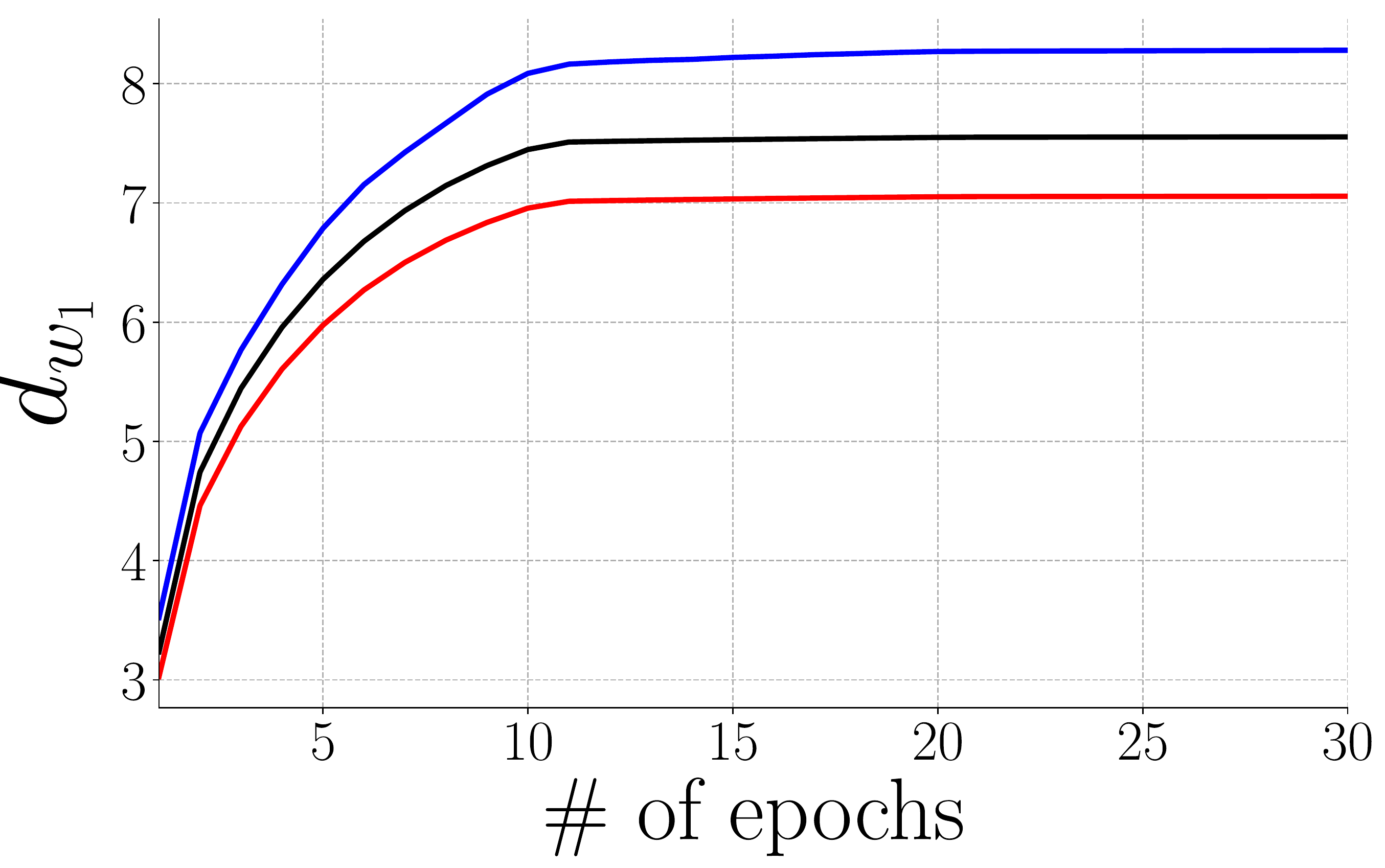}	}
	\subfloat[$d_{w_{i-1}}$ on Tiny ImageNet.]{ \label{fig:mag_timgnet_rel} 	\includegraphics[width=0.32\columnwidth]{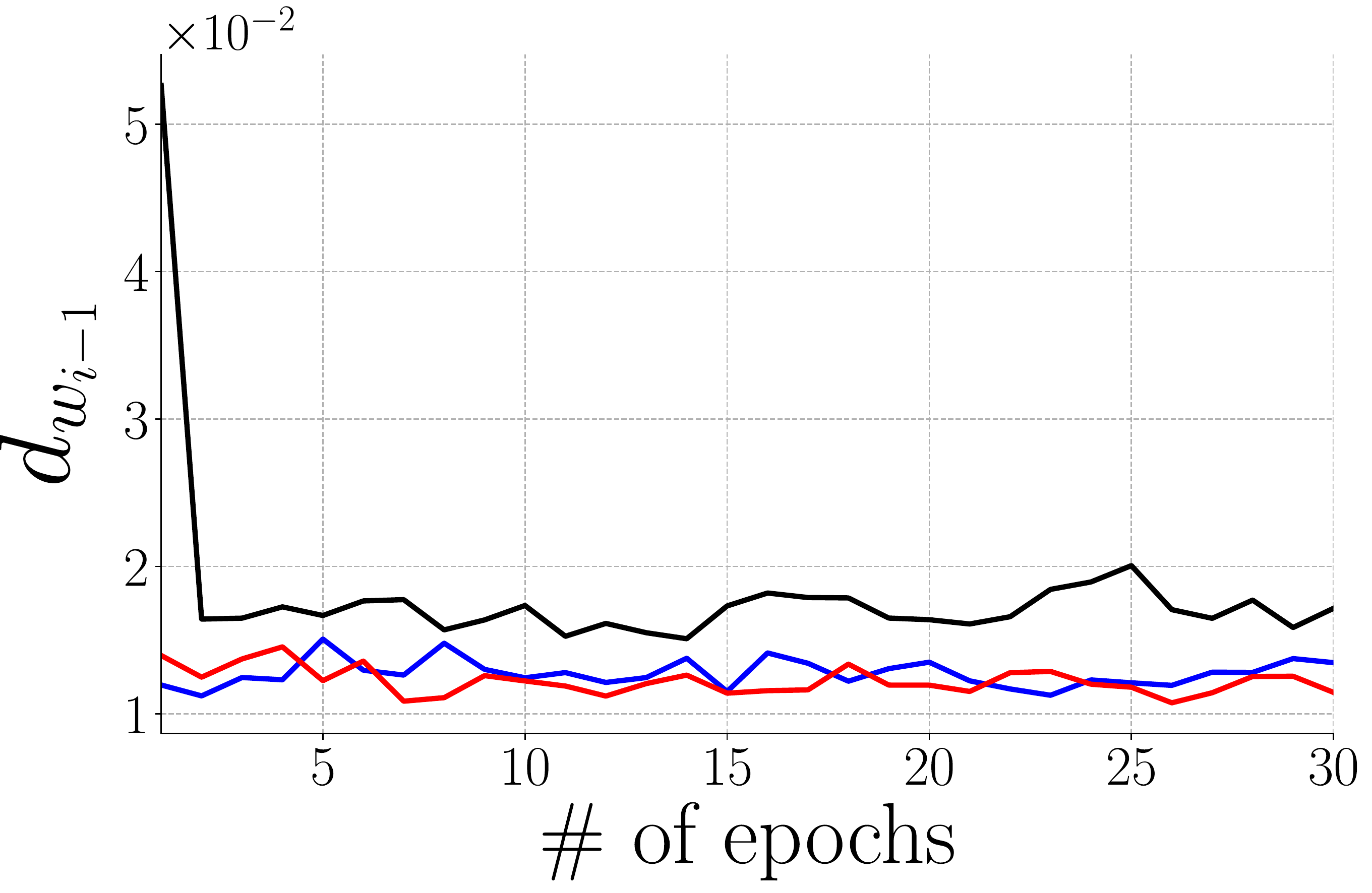}	} 
	\captionsetup{width=1.0\columnwidth}
	\caption{Analyses of the congruencies and magnitudes along the epochs in classification task, as defined in \eqn (\ref{eqn:cong}) and (\ref{eqn:mag}). 
	}
	\label{fig:cong_cls}
\end{figure}

\subsubsection{Classification}
\label{subsec:cls}
We analyze the models from \tab~\ref{tbl:batch_cifar}, \ie ResNeXt-29 SGD (baseline), ResNeXt-29 SGD GEM (GEM), and ResNeXt-29 SGD DCL-$\infty$-1 (DCL), in term of the resulting congruency of each epoch in the learning process on CIFAR. Similarly, ResNet-101 SGD, ResNet-101 SGD GEM, and ResNet-101 SGD DCL-50-1 in \tab \ref{tbl:tiny_imgnet} are used for analysis on Tiny ImageNet. The curves of the average congruencies are shown in \fig \ref{fig:cong_cifar10}, \ref{fig:cong_cifar100}, and \ref{fig:cong_timgnet}, while \fig \ref{fig:mag_cifar10}, \ref{fig:mag_cifar100}, and \ref{fig:mag_timg} show the average magnitudes.

As shown in \fig \ref{fig:cong_cifar10}, \ref{fig:cong_cifar100}, and \ref{fig:cong_timgnet}, the congruency of the proposed DCL method is significantly higher than the baseline and GEM along all epochs on CIFAR-10 and CIFAR-100. 
Higher congruency indicates the convergence path would be flatter and smoother.
For example, if all the congruencies of each epoch are 0, the convergence path would be a straight line. 

\begin{figure}[!t]
	\centering
	\subfloat[Mean of errors over epochs]{\includegraphics[width=0.483\linewidth]{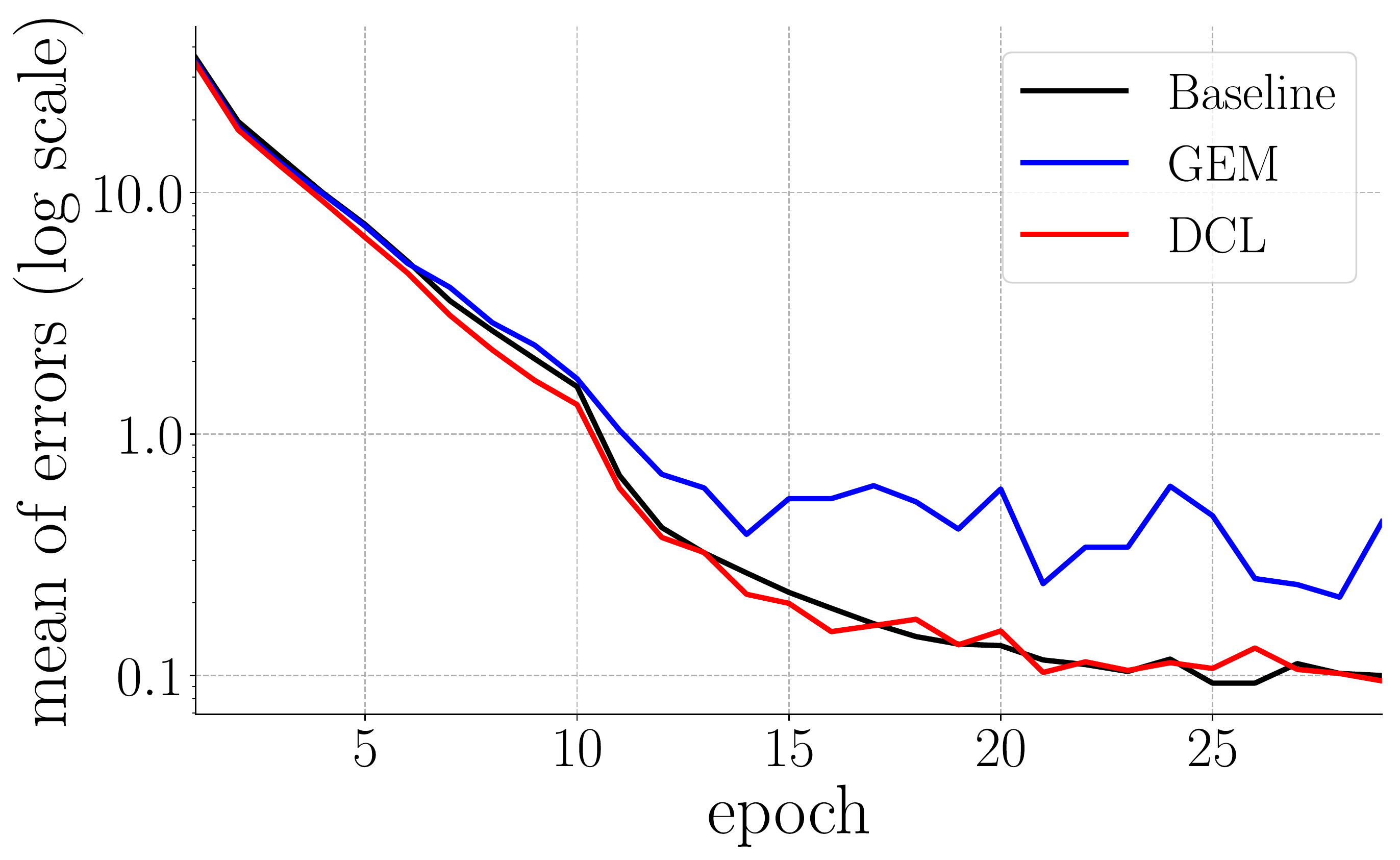} \label{fig:err_step_mean}} \hfill
	\subfloat[Std of errors over epochs] {\includegraphics[width=0.483\linewidth]{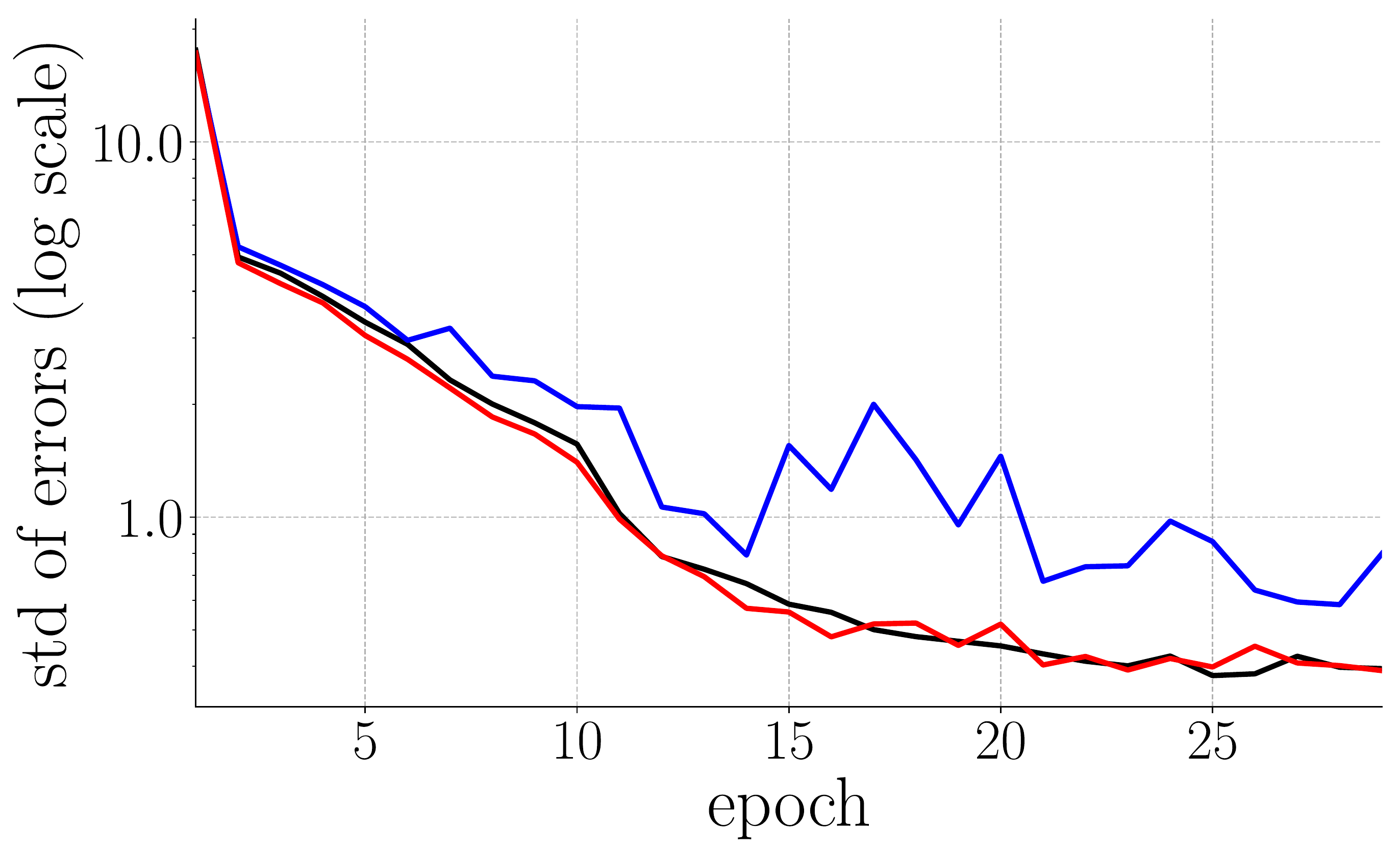} \label{fig:err_step_std}} \\
	\subfloat[Training error vs. iter. at epoch 1] {\includegraphics[width=0.465\linewidth]{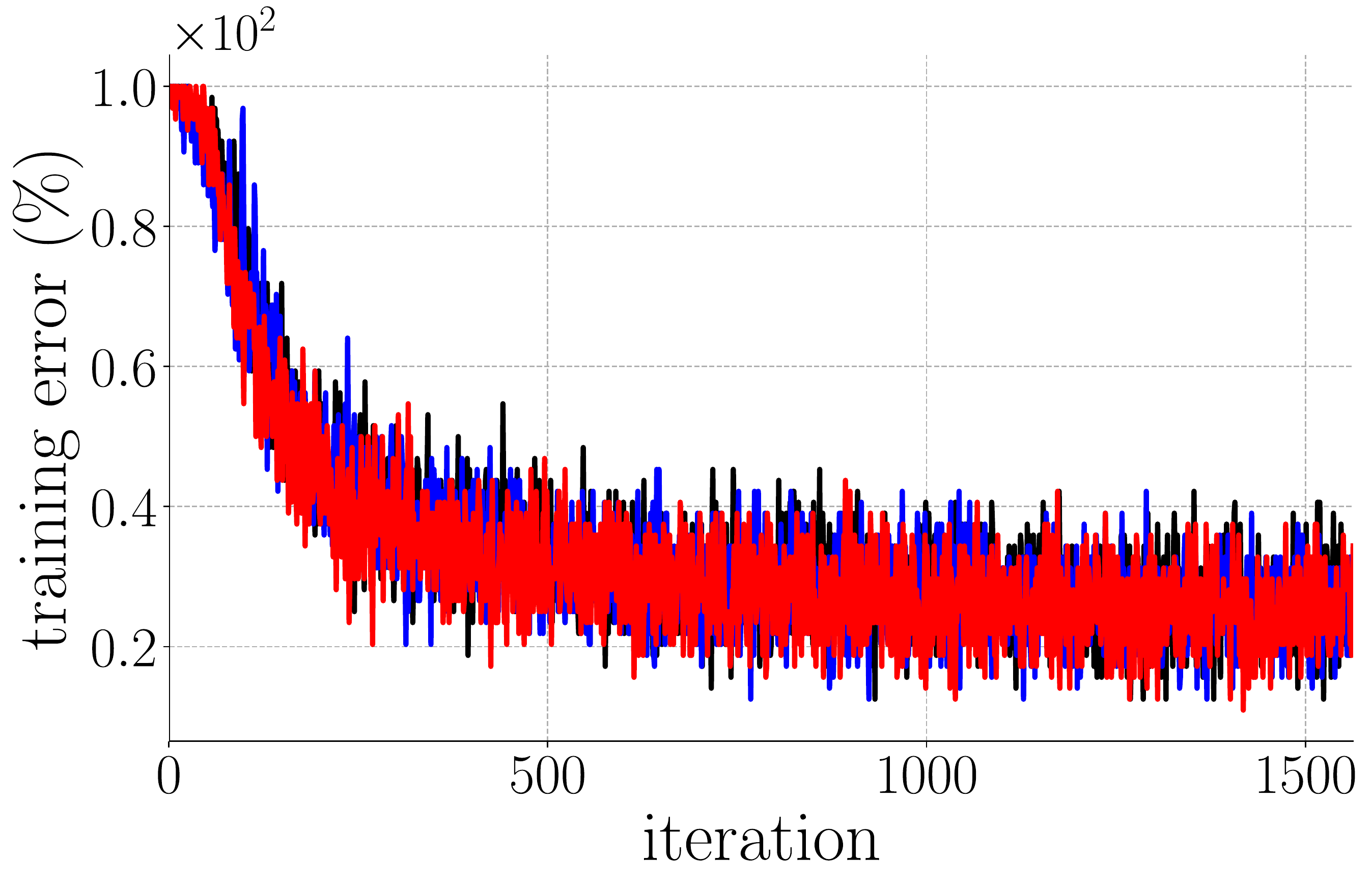} \label{fig:err_step_1}} \hfill
	\subfloat[Training error vs. iter. at epoch 5] {\includegraphics[width=0.465\linewidth]{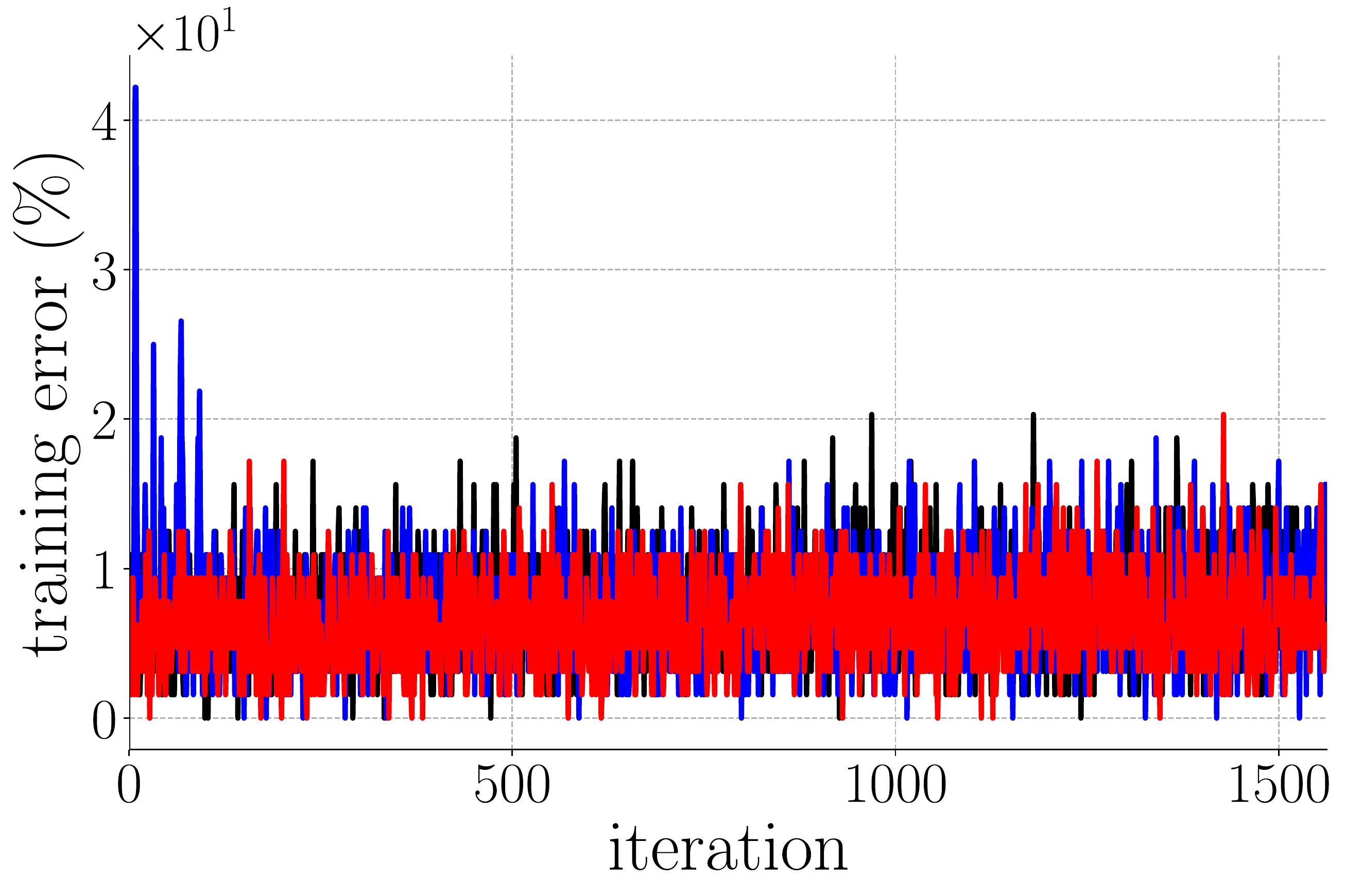} \label{fig:err_step_5}} \\
	\subfloat[Training error vs. iter. at epoch 10]{\includegraphics[width=0.48\linewidth]{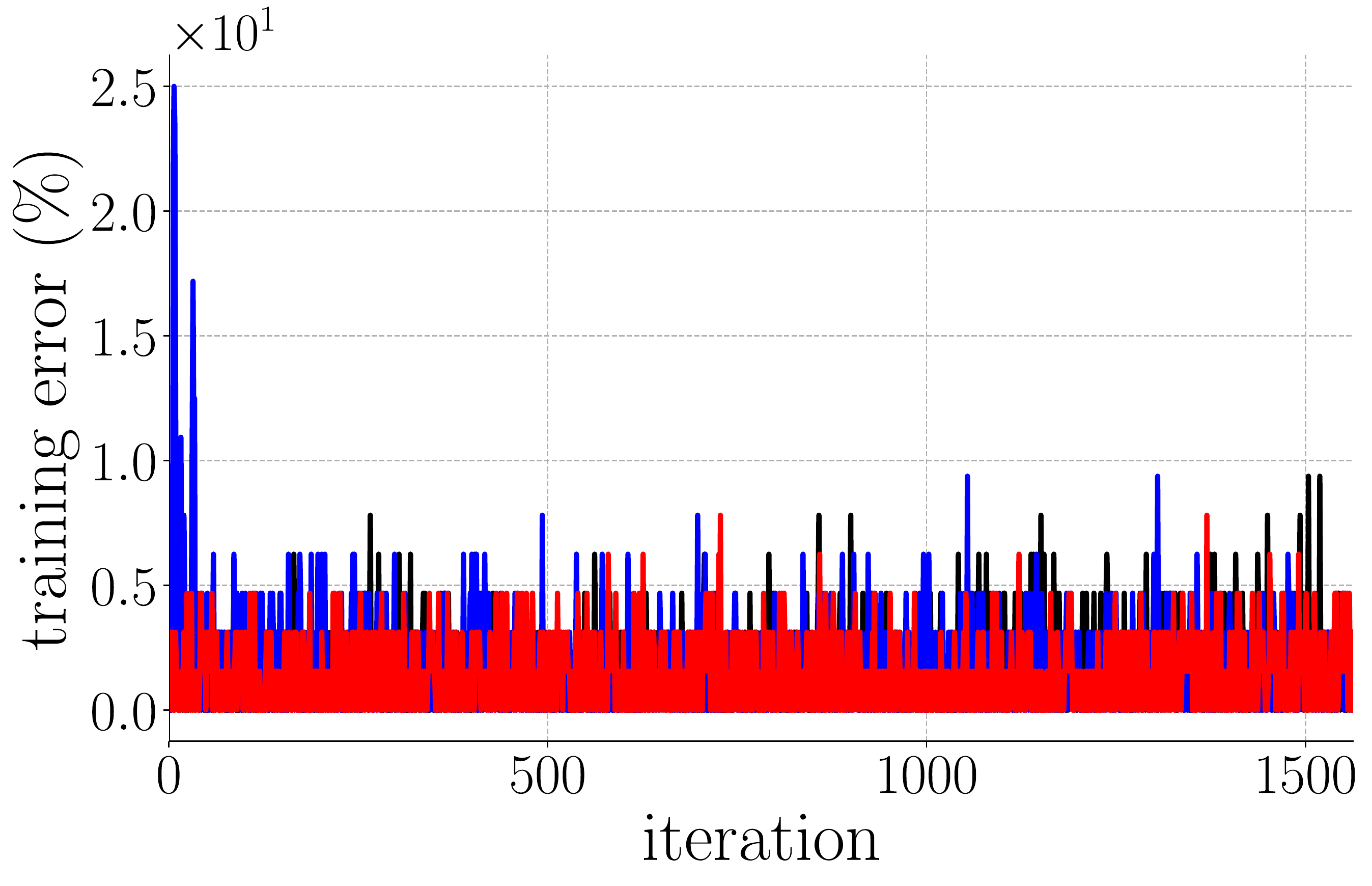} \label{fig:err_step_10}} \hfill
	\subfloat[Training error vs. iter. at epoch 15]{\includegraphics[width=0.48\linewidth]{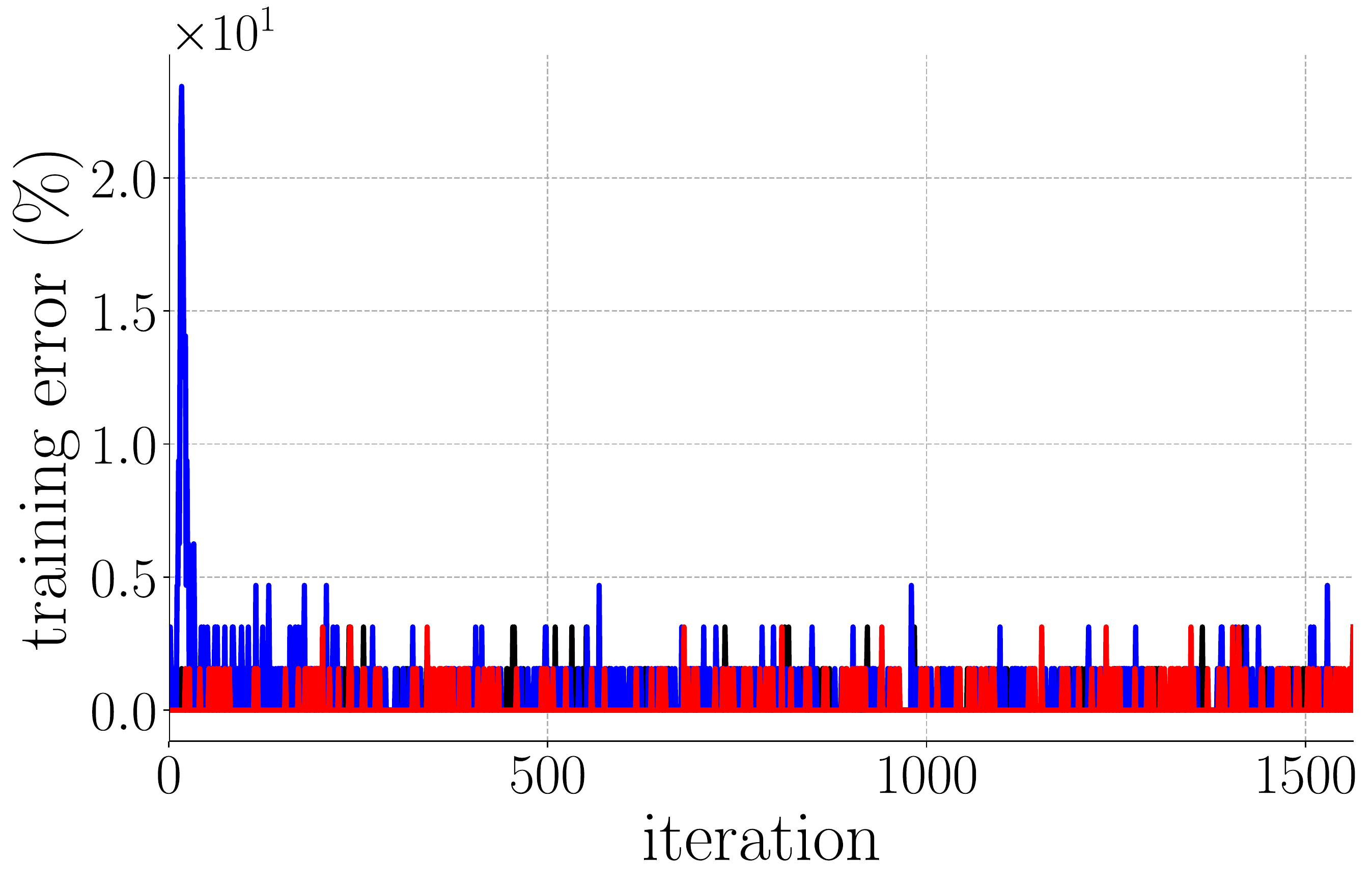} \label{fig:err_step_15}} \\
	\caption{Training error vs. iteration on Tiny ImageNet with ResNet-101. (a) and (b) plot the mean and standard deviation of training errors at each epoch, respectively. Specifically, we show four representative curves of training error vs. iteration at epoch 1, 5, 10, and 15 in (c) - (f), respectively.
	}
	\label{fig:err_vs_step}
\end{figure}


On the other hand, the average magnitudes of the proposed DCL method are relatively flat and smooth in \fig \ref{fig:mag_cifar10}, \ref{fig:mag_cifar100}, and \ref{fig:mag_timg}, comparing to the baseline and GEM. Connecting the magnitudes with the congruencies in \fig \ref{fig:cong_cifar10}, \ref{fig:cong_cifar100}, and \ref{fig:cong_timgnet}, we can infer two points. First, the proposed DCL method finds a nearer local minimum to its initialized weights on CIFAR-10 and CIFAR-100. Because the magnitudes of the proposed DCL method is the smallest among the three methods. Second, the convergence path of the proposed DCL method is the least oscillatory because its congruencies are overall higher than the other two methods and its magnitudes are the lowest among the three methods.

We take a further look at the training error vs. iteration curves to better understand the convergences in \fig~\ref{fig:err_vs_step}.
To give an overview along all epochs, we compute the mean and standard deviation of the training errors at each epoch and plot them at a logarithm scale in \fig \ref{fig:err_step_mean} and \ref{fig:err_step_std}, respectively. 
The results show that the proposed DCL method yields lower training errors from epoch 1 to epoch 16. From epoch 15 onwards, the proposed DCL method is little different from the baseline in terms of the mean because they are both around 0.1. Therefore, we plot the representative curve at epoch 1, 5, 10, and 15 in \fig~\ref{fig:err_step_1}-\ref{fig:err_step_15}.

\subsection{Empirical Convergence}
\label{subsec:convg}

\REVISION{\fig \ref{fig:loss} shows the validation losses w.r.t. the three tasks, \ie saliency prediction (a), continual learning (b), and classification (c). In general, the proposed DCL method achieves lower loss than the baseline and GEM, which is aligned with the fact that the proposed DCL method outperforms the baseline and GEM. Note that classification losses of GEM are above 1.0 so they are not shown in \fig \ref{fig:loss_cls}.}

\subsection{Training from Scratch vs. Fine-tuning}
\label{subsec:finetune}
We analyze the proposed approach with two types of training scheme on the validation set of Tiny ImageNet. The first training scheme train the models from scratch using the training set of the target dataset, whereas the second training scheme fine-tunes the pre-trained ImageNet models on Tiny ImageNet. For ease of comparison, the experimental results of training the models from scratch on Tiny ImageNet as the fine-tuning results are shown in \tab~\ref{tbl:train_scratch}. 
Similar to the results of fine-tuning, the proposed DCL method achieves lower top 1 error (\ie 67.56$\%$) and top 5 error (\ie 40.74$\%$) than the baseline and GEM.


\begin{figure}[!t]
	\centering
	\subfloat[Saliency prediction on SALICON.]{ \label{fig:loss_sal}	\includegraphics[width=0.24\textwidth]{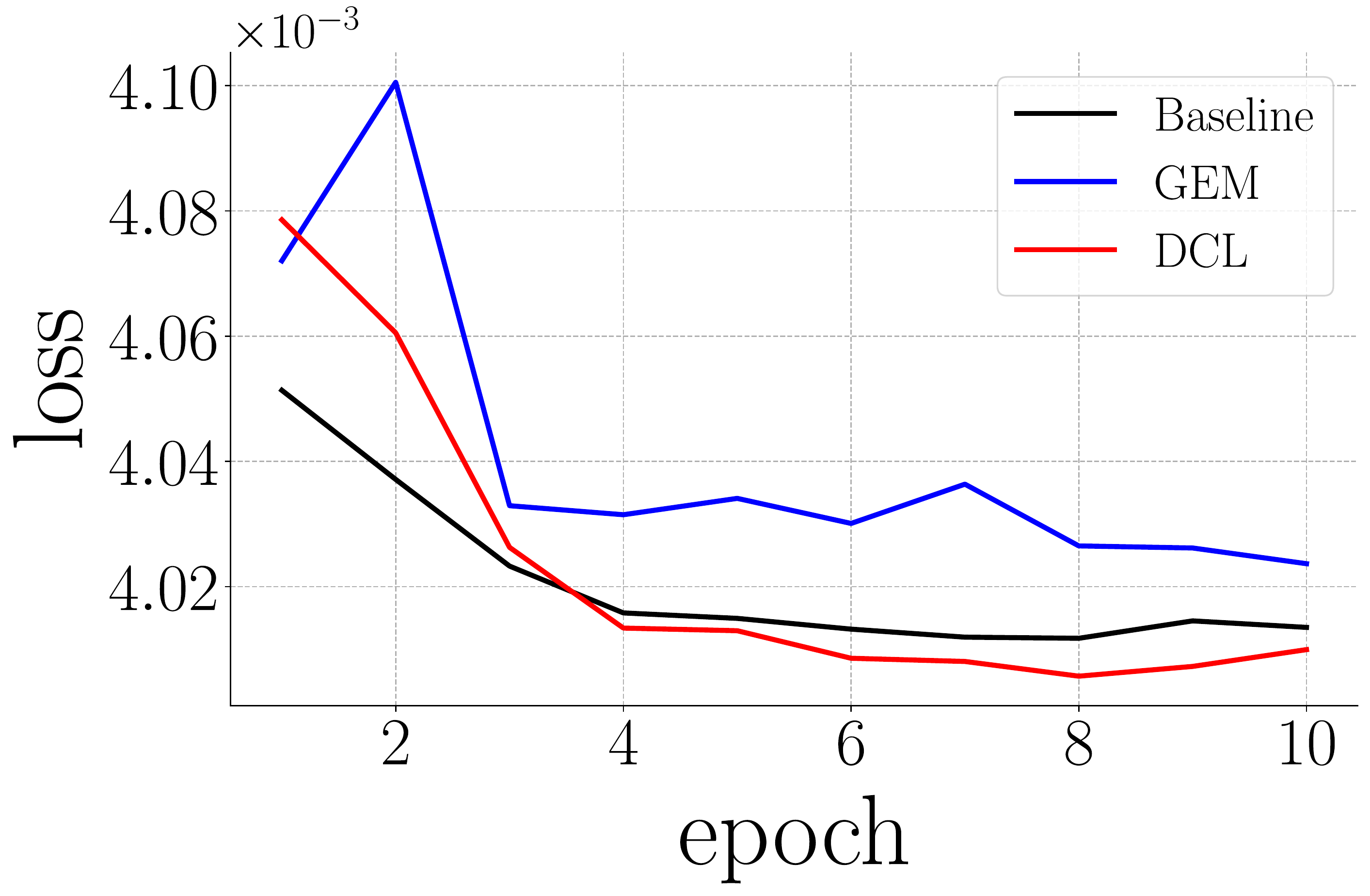}	} 
	\subfloat[Continual learning on iCIFAR-100.]{ \label{fig:loss_ctn}	\includegraphics[width=0.24\textwidth]{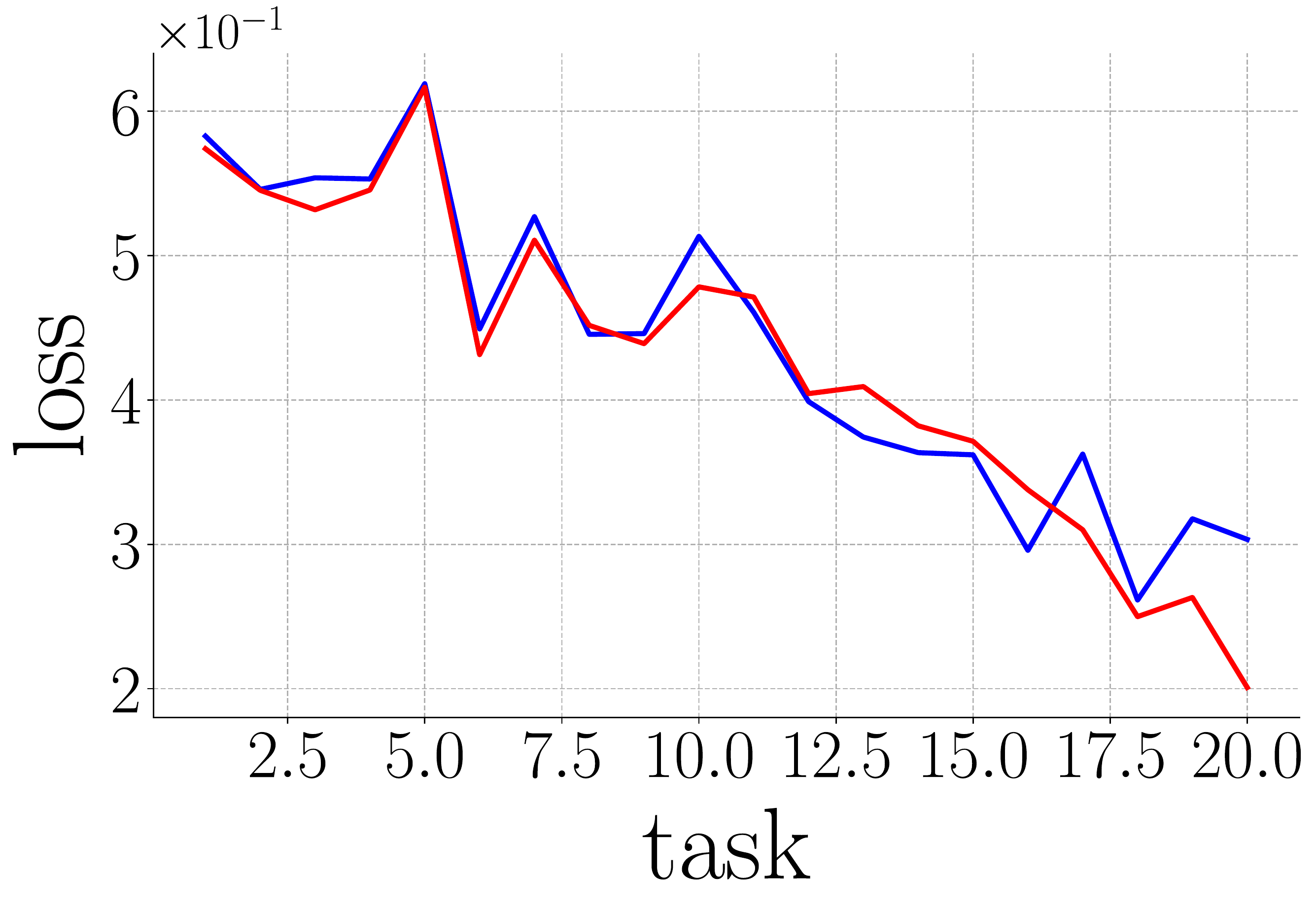}	} \\
	\subfloat[Classification on Tiny ImageNet.]{ \label{fig:loss_cls} 	\includegraphics[width=0.24\textwidth]{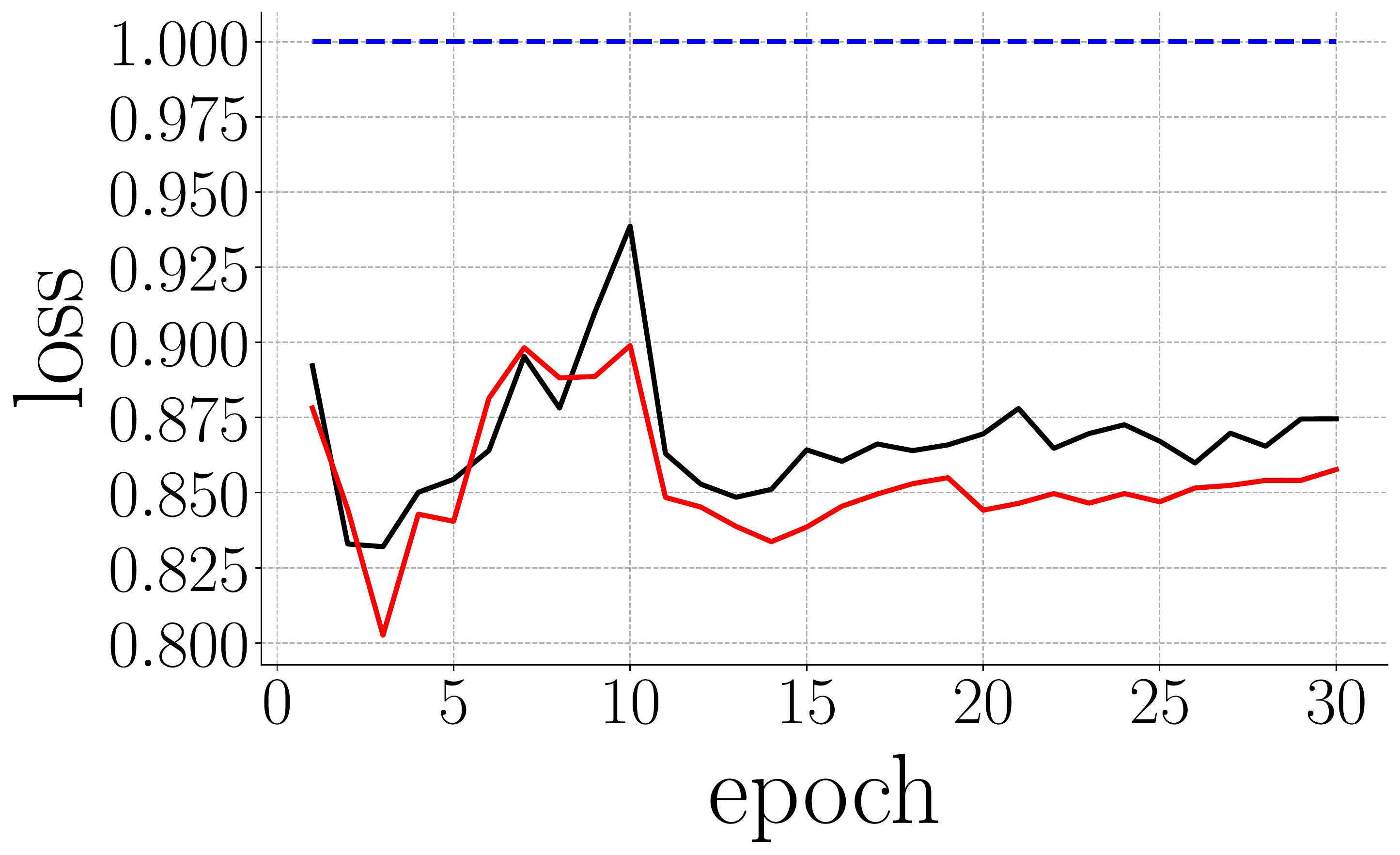}	} 
	\caption{Validation loss vs. epoch/task. In (c), the dashed blue curve indicates that the classification losses of GEM on Tiny ImageNet are all above 1.0 so they are not shown in the figure for clarity.}
	\label{fig:loss}
\end{figure}

\begin{table}[!t]
	\centering
	\caption{Top 1 and top 5 error rate (in \%) on the validation set of Tiny ImageNet. We compare of (1) training the models from scratch (TFS) on Tiny ImageNet, and (2) fine-tuning (FT) the pre-trained ImageNet models on Tiny ImageNet. ResNet-101 SGD DCL is with $\beta_{w}=60$ and $N_{r}=1$. The validation errors of FT are from \tab \ref{tbl:tiny_imgnet}.}
	\vspace{-1ex}
	\begin{tabular}{L{25ex} C{7ex} C{7ex} C{7ex} C{7ex}}
		\toprule
		& \multicolumn{2}{c}{Top 1 error} & \multicolumn{2}{c}{Top 5 error} \\
		\cmidrule(lr){2-3} \cmidrule(lr){4-5}
		& TFS & FT & TFS & FT \\
		\cmidrule(lr){2-2} \cmidrule(lr){3-3} \cmidrule(lr){4-4} \cmidrule(lr){5-5}
		ResNet-101 SGD & 68.21 & 17.34 & 42.72 & 4.82  \\
		ResNet-101 SGD GEM & 77.20 & 21.78 & 53.18 & 7.21 \\
		ResNet-101 SGD DCL  & \textbf{67.56} & \textbf{16.89} & \textbf{40.74} & \textbf{4.50} \\
		\bottomrule
		\label{tbl:train_scratch}
	\end{tabular}
\end{table}


\subsection{Computational Cost}
\label{subsec:cost}
We report computational cost on Tiny ImageNet, SALICON and ImageNet in \tab \ref{tbl:cost_timgnet} and \ref{tbl:cost_r50}, respectively. Specifically, the number of parameters of the models and the corresponding processing time per image are presented. The processing time per image is computed by $(\text{batch time} - \text{data time})/\text{batch size}$, where batch time is the time to complete the process of a batch of images, and data time is the time to load a batch of images. Note that the processes of gradient descent with or without the proposed DCL method are the same in the testing phase. 
We train the models on 3 NVIDIA 1080 Ti graphics cards for the experiments on Tiny ImageNet and SALICON, and on 8 NVIDIA V100 graphics cards for the experiment on ImageNet.

ResNet-101 DCL on Tiny ImageNet is with $\beta_{w}=60$ and $N_{r}=1$. 
ResNet-50 DCL is with $\beta_{w}=\infty$ and $N_{r}=1$ on SALICON, and $\beta_{w}=1$ and $N_{r}=1$ on ImageNet. 
ResNet-50 GEM is with $N_{r}=1$ on all the datasets.
The difference of the numbers of parameters between the baseline and the proposed DCL method (or GEM) lies in the final layer, \ie $1\times 1$ convolutional layer for saliency prediction and the fully connected layer for classification. The proposed DCL method has more parameters to store the weights of the final layer for the references.

In the experiment on Tiny ImageNet, the proposed DCL method with ResNet-101 takes 2 more milliseconds than the baseline to solve the constrained quadratic problem (\ref{eqn:obj}).
Similarly, with ResNet-50, it takes 1 and 2 more milliseconds than the baseline on SALICON and ImageNet, respectively. 
This shows that quadratic problems with high dimensional input can be efficiently solved by the tool quadprog. Hence, the proposed DCL method is practically accessible. On the other hand, GEM \cite{Lopez_NIPS_2017} is less efficient than the other two methods across the three datasets. This is because GEM has to compute the gradients according to the memory, \ie the input features of the final layer, at each iteration. Instead, the proposed DCL method uses a subtraction operation (\ie \eqn \ref{eqn:constraints}) to compute the accumulated gradient. Thus, it is faster than GEM.

\REVISION{
Moreover, we discuss the effects of $\beta_w$ and $N_r$ on the computational cost here. The computational cost w.r.t. $\beta_w$ and $N_r$ with ResNet on Tiny ImageNet is reported in \tab \ref{tbl:cost_N_beta}.
As $\beta_w$ indicates the effective window, it is implemented by a subtraction operation according to \eqn (\ref{eqn:constraints}) and updating the reference point is a copying operation in RAM which is fast. Therefore, $\beta_w$ would not affect computational cost. 
On the other hand, the time difference between various $N_r$ is small because we only apply the proposed DCL method to the downstream layer, \ie the final layer, where the parameters are much fewer than the ones used by the whole network. For example, there are only 2304 parameters in the final convolutional layer for saliency prediction. Any quadratic programming solver like quadprop can efficiently handle the corresponding dual problem (\ref{eqn:dual}) in a small scale.
}

\begin{table}[!t]
	\centering
	\caption{Computational cost of training models on Tiny ImageNet. The processing time (proc time) per image is calculated by $(\text{batch time} - \text{data time})/\text{batch size}$.} 
    \begin{tabular}{L{18ex} C{9ex} C{9ex}}
		\toprule
		               & \# params & proc time \\
		\cmidrule(lr){2-2} \cmidrule(lr){3-3}
		ResNet-101     & 42.50M    & 47 ms \\
		ResNet-101 GEM & 42.91M    & 78 ms \\
		ResNet-101 DCL & 42.91M    & 49 ms \\
		\bottomrule
		\label{tbl:cost_timgnet}
	\end{tabular}
\end{table}

\subsection{Discussion of Generalization}
\label{subsec:general}
\REVISION{
Incongruency is ubiquitous in the learning process. It results from the diversity of the input data, \eg real-world images, and rich task-specific semantics. The proposed DCL method can effectively alleviate the incongruency problem in saliency prediction, continual learning, and classification. Specifically, saliency prediction can be seen as a typical regression problem while continual learning and classification can be seen as a typical learning problem that aims to predict a discrete label. In this sense, the input-output mapping and the learning settings of the three tasks are fundamental to other vision tasks.
}

\REVISION{
From the point of view of task-dependent incongruency, here we consider general vision tasks to be cast into three groups according to the form of input and output.
The first group consists of visual tasks that take images as input for classification or regression, \eg object detection \cite{Ren_NIPS_2015} and visual sentiment analysis \cite{You_AAAI_2017}. In object detection, visual appearance of a region of interest could be diverse in terms of its label and location, while an arbitrary sentiment class can have a number of visual representations in visual sentiment analysis. 
Since tasks in this group has similar incongruency as that in image classification, \ie the diversity of raw image features w.r.t. a certain label, the proposed DCL method is expected to boost this type of vision tasks.
The second group consists of visual tasks that have complex outputs of regression or classification, \eg visual relationship detection~\cite{Li_ICCV_2017,Lu_ECCV_2016} and human object interaction~\cite{Xu_CVPR_2019,Li_CVPR_2019} whose output can involve multiple possible relationships among two or more objects that belong to various visual concepts. 
The incongruency of tasks in this group lies in the diversity of raw image features w.r.t. a higher dimensional variable, \eg a relationship which involves multiple objects and corresponding predicates.
Last but not least, the third group consists of visual tasks that take a series of images, \eg action recognition \cite{Wang_ICCV_2013}. Usually, it takes a clip of videos as input and incorporates temporal information. The incongruency of tasks in this group lies in the diversity of temporal raw image features w.r.t. a certain label, and the feature space with clips is often more complicated than that in static images. 
Therefore, the incongruency of tasks in the second and third groups could be more remarkable than that of tasks in the first group. 
Note that the proposed DCL method is gradient-based and not restricted to specific forms of input or output. Therefore, it could naturally generalize or be used as a starting point to alleviate incongruency for tasks with different forms of input and output in the three groups.
}

\section{Conclusion}
\label{sec:conclusion}

In this work, we define congruency as the agreement between new information and the learned knowledge in a learning process. We propose a Direction Concentration Learning (DCL) method to take into account the congruency in a learning process to search for a local minimum. We study the congruency in the three tasks, \ie saliency prediction, continual learning, and classification. The proposed DCL method generally improves the performances of the three tasks. More importantly, our analysis shows that the proposed DCL method improves catastrophic forgetting.

\begin{table}[!t]
	\centering
	\caption{Computational cost of training models on SALICON and ImageNet.}
	\vspace{-0.7ex}
    \begin{tabular}{l C{9ex} C{9ex} C{9ex} C{9ex}}
		\toprule
		&  \multicolumn{2}{c}{SALICON} &  \multicolumn{2}{c}{ImageNet} \\
		\cmidrule(lr){2-3} \cmidrule(lr){4-5}
		& \# params & proc time & \# params & proc time \\
		\cmidrule(lr){2-2} \cmidrule(lr){3-3} \cmidrule(lr){4-4} \cmidrule(lr){5-5}
		ResNet-50       & 23.51M & ~~64 ms & 23.50M & ~~6 ms \\
		ResNet-50 GEM   & 23.51M & 102 ms & 25.55M & 10 ms \\
		ResNet-50 DCL   & 23.51M & ~~65 ms & 25.55M & ~~8 ms \\
		\bottomrule
		\label{tbl:cost_r50}
	\end{tabular}
\end{table}

\begin{table}[!t]
	\centering
	\caption{The effect of $\beta_w$ and $N_r$ on computational cost (\ie proc time) with ResNet on Tiny ImageNet. Note that $\beta_w$ would not affect computational cost because $\beta_w$ indicates the effective window and resetting the references is implemented as a subtraction operation according to \eqn (\ref{eqn:constraints}). }
	\begin{tabular}{l c}
		\toprule
		$\beta_w$ ($N_r=1$) & proc time \\
		\cmidrule(lr){1-1} \cmidrule(lr){2-2}
		10      & 49 ms \\
		20 & 49 ms \\
		30 & 49 ms \\
		40 & 49 ms \\
		50 & 49 ms \\
		\bottomrule
		\label{tbl:cost_beta}
	\end{tabular}
	\hspace{4ex}
	\begin{tabular}{l c}
		\toprule
		$N_r$ ($\beta_w=50$) & proc time \\
		\cmidrule(lr){1-1} \cmidrule(lr){2-2}
		1      & 49 ms \\
		5 & 51 ms \\
		10 & 53 ms \\
		15 & 54 ms \\
		20 & 54 ms \\
		\bottomrule
		\label{tbl:cost_N}
	\end{tabular}
\label{tbl:cost_N_beta}
\end{table}

\ifCLASSOPTIONcompsoc
  \section*{Acknowledgments}
\else
  \section*{Acknowledgment}
\fi
This research was funded in part by the NSF under Grants 1908711, 1849107, in part by the University of Minnesota Department of Computer Science and Engineering Start-up Fund (QZ), and in part by the National Research Foundation, Prime Minister's Office, Singapore under its Strategic Capability Research Centres Funding Initiative.

\ifCLASSOPTIONcaptionsoff
  \newpage
\fi


\begin{IEEEbiography}[{\includegraphics[width=1in,height=1.25in,clip,keepaspectratio]{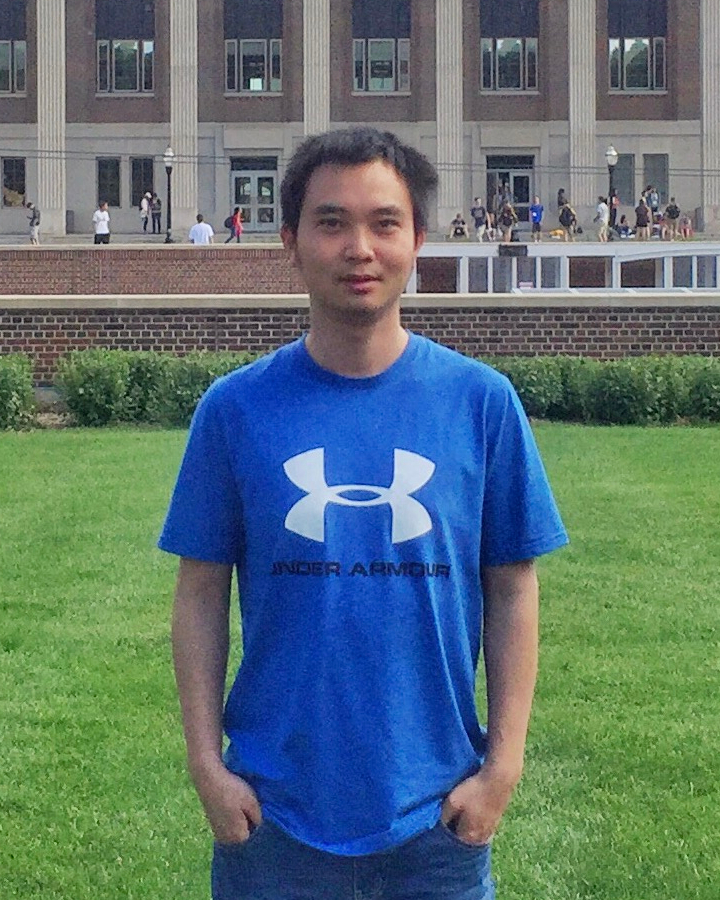}}]{Yan Luo}
is currently pursuing the Ph.D. degree with the Department of Computer Science and Engineering, University of Minnesota at Twin Cities, Minneapolis, MN, USA. He received the B.Sc. degree in computer science from Xi'an University of Science and Technology. In 2013, he joined the Sensor-enhanced Social Media (SeSaMe) Centre, Interactive and Digital Media Institute, National University of Singapore, as a Research Assistant. In 2015, he joined the Visual Information Processing Laboratory at the National University of Singapore as a Ph.D. Student. He worked in the industry for several years on distributed system. His research interests include computer vision, computational visual cognition, and deep learning.
\end{IEEEbiography}


\begin{IEEEbiography}[{\includegraphics[width=1in,height=1.25in,clip,keepaspectratio]{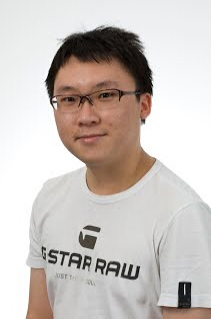}}]{Yongkang Wong}
is a senior research fellow at the School of Computing, National University of Singapore. He is also the Assistant Director of the NUS Centre for Research in Privacy Technologies (N-CRiPT). He obtained his BEng from the University of Adelaide and PhD from the University of Queensland. He has worked as a graduate researcher at NICTA's Queensland laboratory, Brisbane, OLD, Australia, from 2008 to 2012. His current research interests are in the areas of Image/Video Processing, Machine Learning, Action Recognition, and Human Centric Analysis. He is a member of the IEEE since 2009.
\end{IEEEbiography}

\begin{IEEEbiography}[{\includegraphics[width=1in,height=1.25in,clip,keepaspectratio]{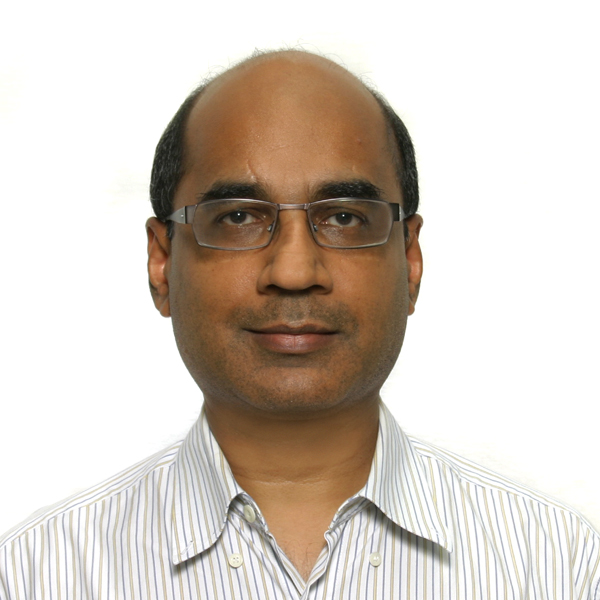}}]{Mohan~Kankanhalli}
is the Provost's Chair Professor at the Department of Computer Science of the National University of Singapore. He is the director with the N-CRiPT and also the Dean, School of Computing at NUS. Mohan obtained his BTech from IIT Kharagpur and MS \& PhD from the Rensselaer Polytechnic Institute. His current research interests are in Multimedia Computing, Multimedia Security and Privacy, Image/Video Processing and Social Media Analysis. He is on the editorial boards of several journals. Mohan is a Fellow of IEEE.
\end{IEEEbiography}

\begin{IEEEbiography}[{\includegraphics[width=1in,height=1.25in,clip,keepaspectratio]{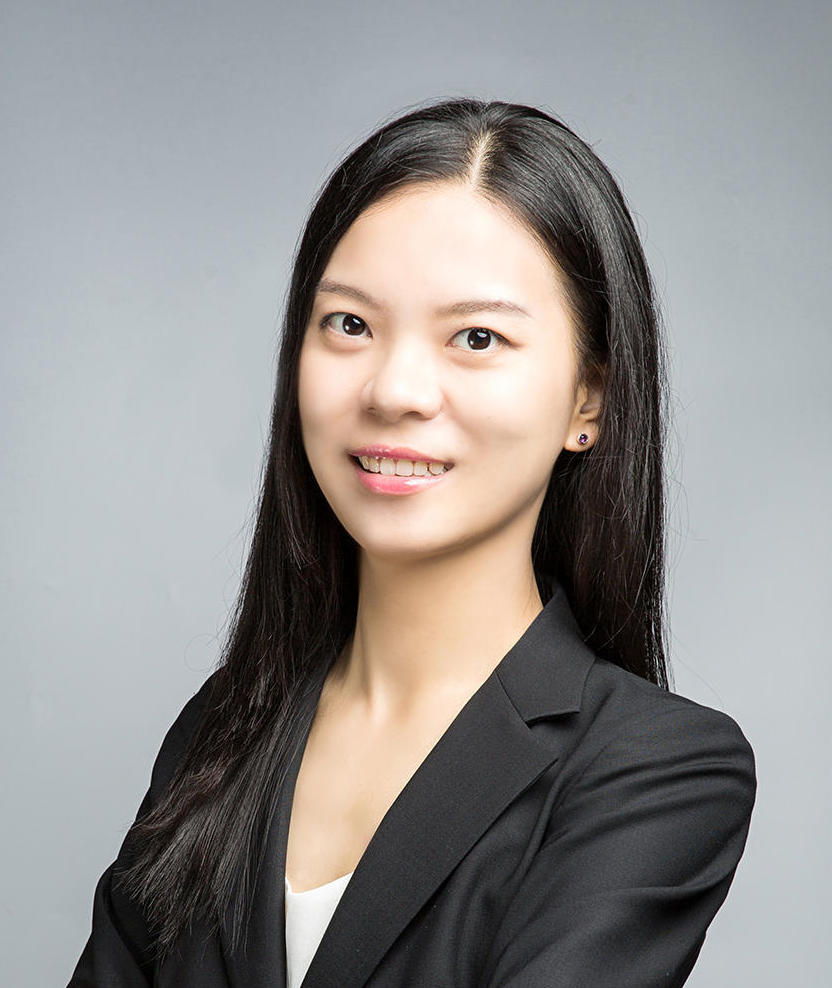}}]{Qi Zhao}
is an assistant professor in the Department of Computer Science and Engineering at the University of Minnesota, Twin Cities. Her main research interests include computer vision, machine learning, cognitive neuroscience, and mental disorders. She received her Ph.D. in computer engineering from the University of California, Santa Cruz in 2009. She was a postdoctoral researcher in the Computation \& Neural Systems, and Division of Biology at the California Institute of Technology from 2009 to 2011. Prior to joining the University of Minnesota, Qi was an assistant professor in the Department of Electrical and Computer Engineering and the Department of Ophthalmology at the National University of Singapore. She has published more than 50 journal and conference papers in top computer vision, machine learning, and cognitive neuroscience venues, and edited a book with Springer, titled Computational and Cognitive Neuroscience of Vision, that provides a systematic and comprehensive overview of vision from various perspectives, ranging from neuroscience to cognition, and from computational principles to engineering developments. She is a member of the IEEE since 2004.
\end{IEEEbiography}


\end{document}